\documentclass[letterpaper]{article} 
\usepackage{submission} 
\usepackage[hyphens]{url}  
\usepackage{graphicx} 
\usepackage{natbib}  
\usepackage{caption} 
\usepackage{algorithm}

\usepackage{newfloat}
\usepackage{listings}
\DeclareCaptionStyle{ruled}{labelfont=normalfont,labelsep=colon,strut=off} 
\floatstyle{ruled}
\newfloat{listing}{tb}{lst}{}
\floatname{listing}{Listing}

\usepackage{booktabs}

\usepackage{subcaption}
\usepackage{multirow}

\usepackage{array}
\usepackage{algpseudocode}
\usepackage{amsmath,amssymb,amsthm}
\usepackage{pifont}
\usepackage{xspace}
\usepackage{tikz}
\usepackage{makecell}
\usepackage{enumitem}
\usepackage{etoc}

\newtheorem{example}{Example}
\newtheorem{proposition}{Proposition}
\newtheorem{lemma}{Lemma}
\newtheorem{theorem}{Theorem}
\newtheorem{definition}{Definition}
\newtheorem{remark}{Remark}
\newtheorem{corollary}{Corollary}

\newcommand{\cmark}{\ding{51}}
\newcommand{\xmark}{\ding{55}}

\newcommand*{\eg}{e.g.,\xspace}
\newcommand*{\ie}{i.e.,\xspace}
\newcommand*{\RelShap}{\texttt{RelShap}\xspace}

\title{RelShap: Relationally Consistent Shapley Explanations}
\author {
    Seungeun Lee\textsuperscript{\rm 1},
    Joao Fonseca\textsuperscript{\rm 1, \rm 2},
    Julia Stoyanovich\textsuperscript{\rm 1}\corresponding
}
\affiliations {
    \textsuperscript{\rm 1} New York University, New York, USA \\
    \textsuperscript{\rm 2} INESC-ID, Lisbon, Portugal
}

\begin{document}

\maketitle

\begin{abstract}

Machine learning pipelines commonly flatten relational data into single-table representations, discarding structural constraints. Widely used Shapley value-based feature attributions then rely on feature independence, evaluating the model on combinations that could never arise in the underlying data, producing misleading explanations. We propose \RelShap, a framework that incorporates relational constraints and data provenance into Shapley value computation, restricting both background data and coalition evaluation to relationally valid configurations. The framework is estimator-agnostic and composes with Kernel SHAP, Monte Carlo, and Leverage SHAP without altering their sampling or weighting properties. Functional dependencies further induce equivalence classes over feature coalitions, which \RelShap exploits to reduce runtime without changing Shapley values; we provide a combinatorial characterization of the expected speedup. Experiments across multiple datasets, models, and estimators show that \RelShap produces explanations that are more faithful to the data-generating process, correctly identifying the dominant feature in controlled settings where existing methods, including Conditional SHAP and ManifoldShap, do not. Our code is available at: \url{https://github.com/duneag2/relshap}.

\end{abstract}

\section{Introduction}
\label{sec:intro}

Most machine learning (ML) pipelines flatten relational data into single-table representations for model training~\cite{pmlr-v235-fey24a}. Once a predictive model is trained on such a flattened table, practitioners routinely seek to understand its predictions at the level of individual instances. Shapley value-based attributions~\cite{shapley1953value,dattaSZ16,lundbergL2017unified} have become a dominant approach for this purpose, assigning each input feature
a score based on its contribution to the prediction. 
Notably, relational ML is a well-established setting~\cite{robinson2024relbench,dwivedi2025relational}, yet approaches that train directly on multi-table data must ultimately attribute predictions to individual features and thus face the same explanation-time flattening.

In practice, this computation requires two conceptual choices: \emph{background data selection}, which determines how feature absence is simulated using reference data, and \emph{coalition selection}, which specifies which feature subsets are evaluated. Existing methods instantiate different sides of the tension between being \emph{true to the model} and \emph{true to the data}~\cite{chen2020true,fryeshapley,janzing2020feature}. SHAP~\cite{lundbergL2017unified}, the most widely used implementation, stays \emph{true to the model} by assuming feature independence but potentially considers feature combinations that never arise in the data. \citet{aas2021explaining} instead remain \emph{true to the data} by estimating conditional distribution, but only as true as the estimate itself, and the estimator's errors propagate directly into the attributions.

Both premises are problematic on data drawn from a relational database. The flattening step can discard database-level information, such as functional dependencies (FDs) that imply that some features are fully determined by others, and domain constraints that restrict admissible values. Feature independence is violated by these very constraints, and conditional distribution estimation, while potentially avoiding that violation, introduces its own modeling assumptions. When attribution methods ignore relational constraints, they may evaluate the model on inputs that could never arise from the underlying relational data, thereby fundamentally altering the resulting feature attributions.

\paragraph{Core contribution:} We present \RelShap---\emph{Relationally Consistent Shapley Explanations}---the first framework to integrate relational database constraints into Shapley value-based feature attributions. \RelShap changes the admissible space of the Shapley explanation itself: instead of relying on feature independence assumptions or additional distributional modeling, it restricts both background data and coalitions to relationally valid configurations through constraints derived from the database schema, query, and data. 
Importantly, in controlled settings with known ground truth, \RelShap correctly identifies the dominant feature where Kernel SHAP, Conditional SHAP, and ManifoldShap do not (Section~\ref{sec:exp:intuition}). 
For instance, in a loan approval scenario (Example~\ref{ex:motivating}), the explanation changes substantially depending on whether relational constraints are respected (Figure~\ref{fig:kernelshap_relshap}).  The divergence is not a sampling artifact but a consequence of which feature combinations the explanation method permits.

\begin{example}\label{ex:motivating}
Consider two tables,
\texttt{Applicants}(\underline{\texttt{a\_id}}, \texttt{age}, \texttt{life\_stage}, \texttt{empl})
and
\texttt{Transactions}(\underline{\texttt{t\_id}}, \texttt{a\_id}, \texttt{amount}),
linked by applicant identifier \texttt{a\_id}. Six representative
applicants and their transactions are shown in
Tables~\ref{tab:running_applicants} and~\ref{tab:running_transactions}. A data analyst issues the query:

\begin{lstlisting}[language=SQL,basicstyle=\fontsize{8.7pt}{9.7pt}\selectfont\ttfamily,xleftmargin=0.5em,numbers=none]
SELECT a.a_id, a.age, a.life_stage, a.empl,
       SUM(t.amount) AS total_amt
FROM   Applicants a, Transactions t
WHERE  a.a_id = t.a_id
GROUP BY a.a_id, a.age, a.life_stage, a.empl;
\end{lstlisting}

\noindent
to obtain an applicant-level view, then drops the identifier \texttt{a\_id} and adds a target \texttt{loan\_approved} to yield the ML-ready dataset in Table~\ref{tab:running_flattened}.

\end{example}

ML practitioners computing explanations typically receive only this final flattened representation, with no visibility into the upstream structure.
Applicant \texttt{a27} applied for a loan, was rejected, and sought an explanation. Kernel SHAP~\cite{lundbergL2017unified}, a widely used method, reports $\texttt{life\_stage}$ as the top driver of the prediction with a positive score (Figure~\ref{fig:kernelshap_example}). This attribution, however, is an artifact of relationally invalid completions from Kernel SHAP. For instance, fixing $\texttt{age}=35$ and filling in the rest from applicant \texttt{a29} yields the combination $(\texttt{age}=35, \texttt{life\_stage}=\texttt{older})$, which is impossible under the FD $\texttt{age}\rightarrow\texttt{life\_stage}$ (first row of Table~\ref{tab:running_completions}).

\RelShap instead operates over a space of relationally valid completions, automatically extracting constraints from the data's relational structure. In this example, the data reveals a \emph{functional dependency} (FD) $\texttt{age}\rightarrow\texttt{life\_stage}$: each age value maps to exactly one life stage. Similarly, because the query aggregates all of an applicant's transactions into a single row, each applicant identifier determines exactly one \texttt{total\_amt}. \RelShap enforces these constraints during Shapley computation: fixing $\texttt{age}=35$ forces $\texttt{life\_stage}=\texttt{middle}$ (second row of Table~\ref{tab:running_completions}); the remaining attributes are
unconstrained and filled from a chosen background data point (\texttt{a29}). Additionally, \RelShap supports a \emph{provenance-aware} mode that traces each row of the flattened table back to its source tuples in the original tables. Fixing $\texttt{age}=35$ narrows the provenance to \texttt{a27}, the sole applicant with that age, at which point the schema determines $\texttt{empl}=\texttt{self\_emp}$ and $\texttt{total\_amt}=119$ (third row of Table~\ref{tab:running_completions}). Provenance-based recovery is not always available when it does not resolve to a single tuple: $\texttt{age}=63$, for instance, is shared by \texttt{a12} and \texttt{a29}. See Appendix~\ref{app:relational_primer} for a primer on relational concepts and a detailed description of how \RelShap extracts constraints from the database schema, query structure, and data.

Notably, \emph{\RelShap computes different Shapley values even without approximation.} Under both \RelShap variants, \texttt{life\_stage} receives \emph{exactly zero} attribution (which does not necessarily generalize), down from $+0.2299$ under Kernel SHAP (Figure~\ref{fig:kernelshap_relshap}), because the discovered FD $\texttt{age}\rightarrow\texttt{life\_stage}$ makes \texttt{life\_stage} redundant when \texttt{age} is present in a coalition, once coalitions are projected to relationally valid completions (Section~\ref{sec:relshap}). Kernel SHAP ignores this redundancy and distributes substantial attribution to \texttt{life\_stage}. The top-ranked feature consequently changes from \texttt{life\_stage} to \texttt{age}. In all three cases, feature attribution values are computed \emph{exactly}, over all background points and all coalitions; the divergence is not a sampling artifact but a consequence of which feature combinations the explanation method permits.

\noindent\textbf{Our contributions are as follows:}

\begin{table}[t]
\centering
\caption{Running example (excerpt)}
\label{tab:running_example}

\begin{subtable}[t]{0.48\columnwidth}
\centering
\caption{\texttt{Applicants} (excerpt)}
\small
\setlength{\tabcolsep}{1pt}
\begin{tabular}{cccc}
\toprule
a\_id & age & life\_stage & empl \\
\midrule
a12 & 63 & older & unemp \\
a14 & 26 & young  & unemp \\
a24 & 54 & middle & emp \\
a25 & 22 & young  & self\_emp \\
\textbf{a27} & \textbf{35} & \textbf{middle} & \textbf{self\_emp} \\
a29 & 63 & older & unemp \\
\bottomrule
\end{tabular}
\label{tab:running_applicants}
\end{subtable}
\hfill
\begin{subtable}[t]{0.48\columnwidth}
\centering
\caption{\texttt{Transactions} (excerpt)}
\small
\setlength{\tabcolsep}{4pt}
\begin{tabular}{ccc}
\toprule
t\_id & a\_id & amount \\
\midrule
t62 & a25 & 47 \\
\textbf{t67} & \textbf{a27} & \textbf{43} \\
\textbf{t68} & \textbf{a27} & \textbf{59} \\
\textbf{t69} & \textbf{a27} & \textbf{17} \\
t73 & a29 & 50 \\
\multicolumn{3}{c}{$\vdots$} \\
\bottomrule
\end{tabular}
\label{tab:running_transactions}
\end{subtable}

\vspace{0.1em}

\begin{subtable}[t]{\columnwidth}
\centering
\caption{Flattened, ML-ready dataset (excerpt): \texttt{a27} is highlighted.}
\small
\setlength{\tabcolsep}{4pt}
\begin{tabular}{ccccc}
\toprule
age & life\_stage & empl & total\_amt & loan\_approved \\
\midrule
63 & older & unemp        & 264 & 1 \\
26 & young  & unemp        &  94 & 0 \\
54 & middle & emp          & 235 & 1 \\
22 & young  & self\_emp    & 100 & 0 \\
\textbf{35} & \textbf{middle} & \textbf{self\_emp} & \textbf{119} & \textbf{0} \\
63 & older & unemp        & 114 & 0 \\
\bottomrule
\end{tabular}
\label{tab:running_flattened}
\end{subtable}

\vspace{0.1em}

\begin{subtable}[t]{\columnwidth}
\centering
\caption{Completions for $\texttt{age}=35$ under increasing relational awareness. The first row violates the FD $\texttt{age}\rightarrow\texttt{life\_stage}$.}
\small
\setlength{\tabcolsep}{1.3pt}
\begin{tabular}{lcccc}
\toprule
 & age & life\_stage & empl & total\_amt \\
\midrule
Kernel SHAP (unaware)        & 35 & older  & unemp      & 114 \\
\RelShap (FDs only)          & 35 & middle & unemp      & 114 \\
\RelShap (FDs + provenance)  & 35 & middle & self\_emp  & 119 \\
\bottomrule
\end{tabular}
\label{tab:running_completions}
\end{subtable}

\end{table}

\begin{figure*}[t]
  \centering
  \begin{subfigure}[t]{0.32\textwidth}
    \centering
    \includegraphics[width=\textwidth]{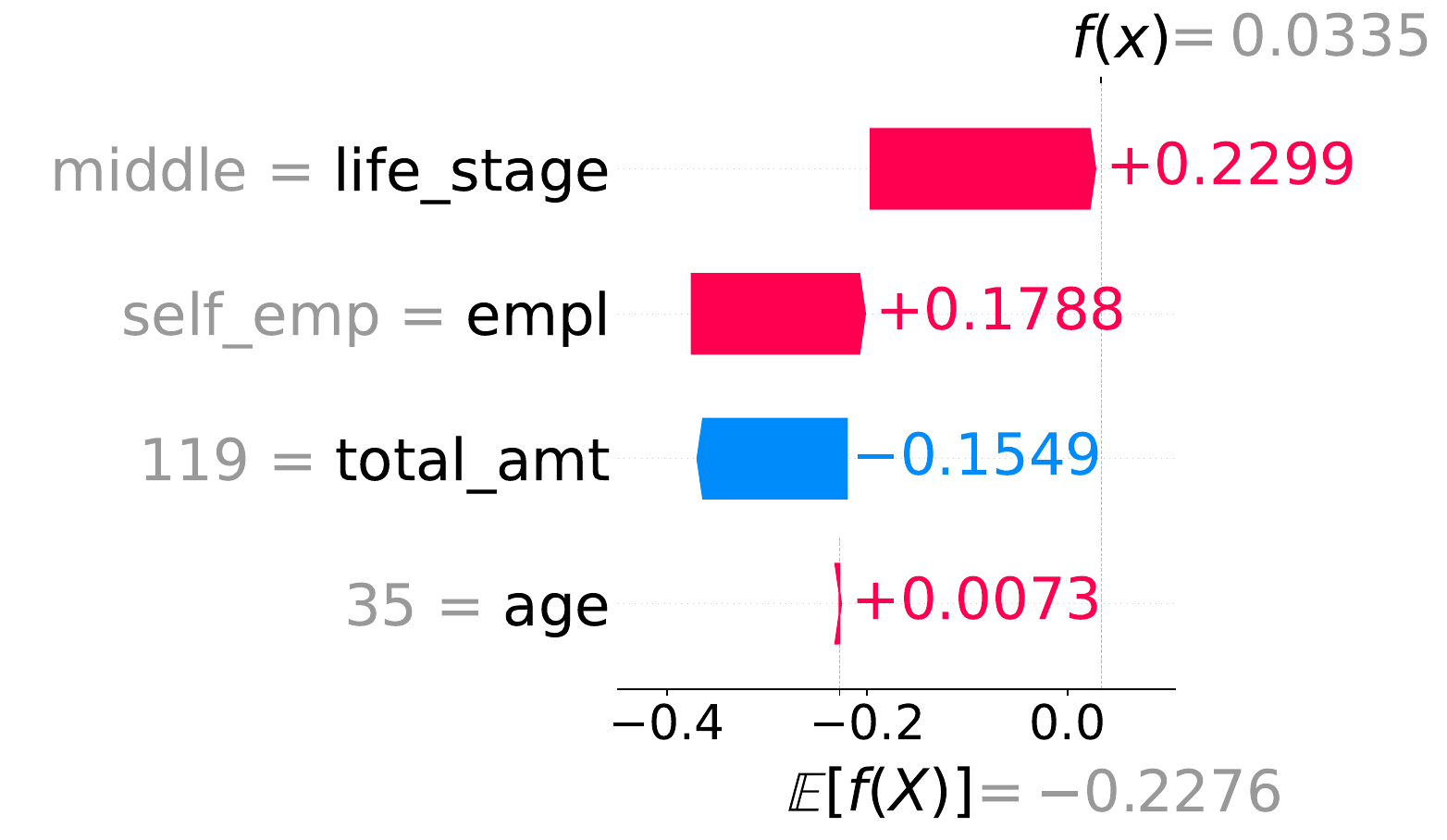}
    \caption{Kernel SHAP.}
    \label{fig:kernelshap_example}
  \end{subfigure}
  \hfill
  \begin{subfigure}[t]{0.32\textwidth}
    \centering
    \includegraphics[width=\textwidth]{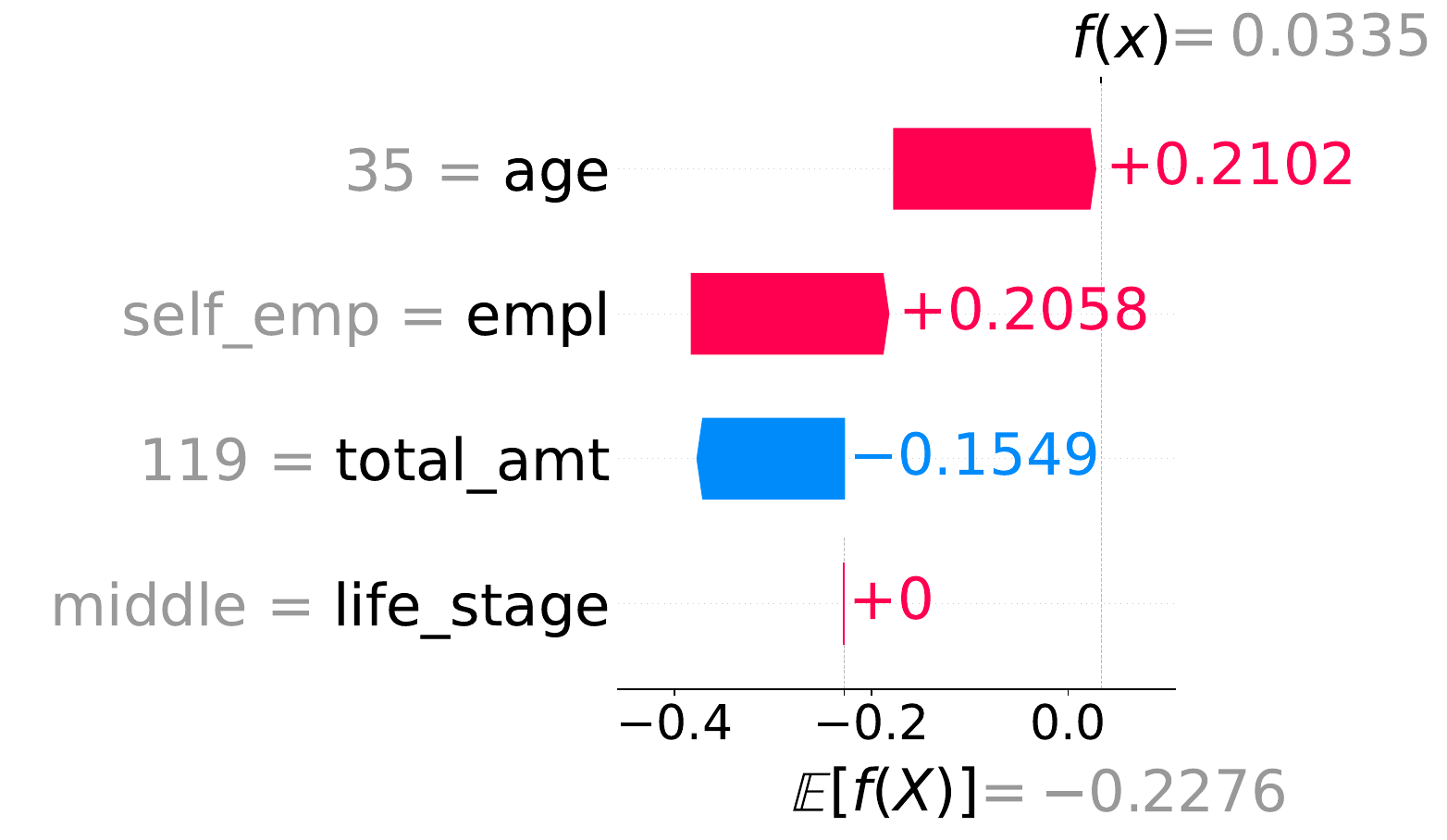}
    \caption{\RelShap (FDs only).}
    \label{fig:relshap_example}
  \end{subfigure}
  \hfill
  \begin{subfigure}[t]{0.32\textwidth}
    \centering
    \includegraphics[width=\textwidth]{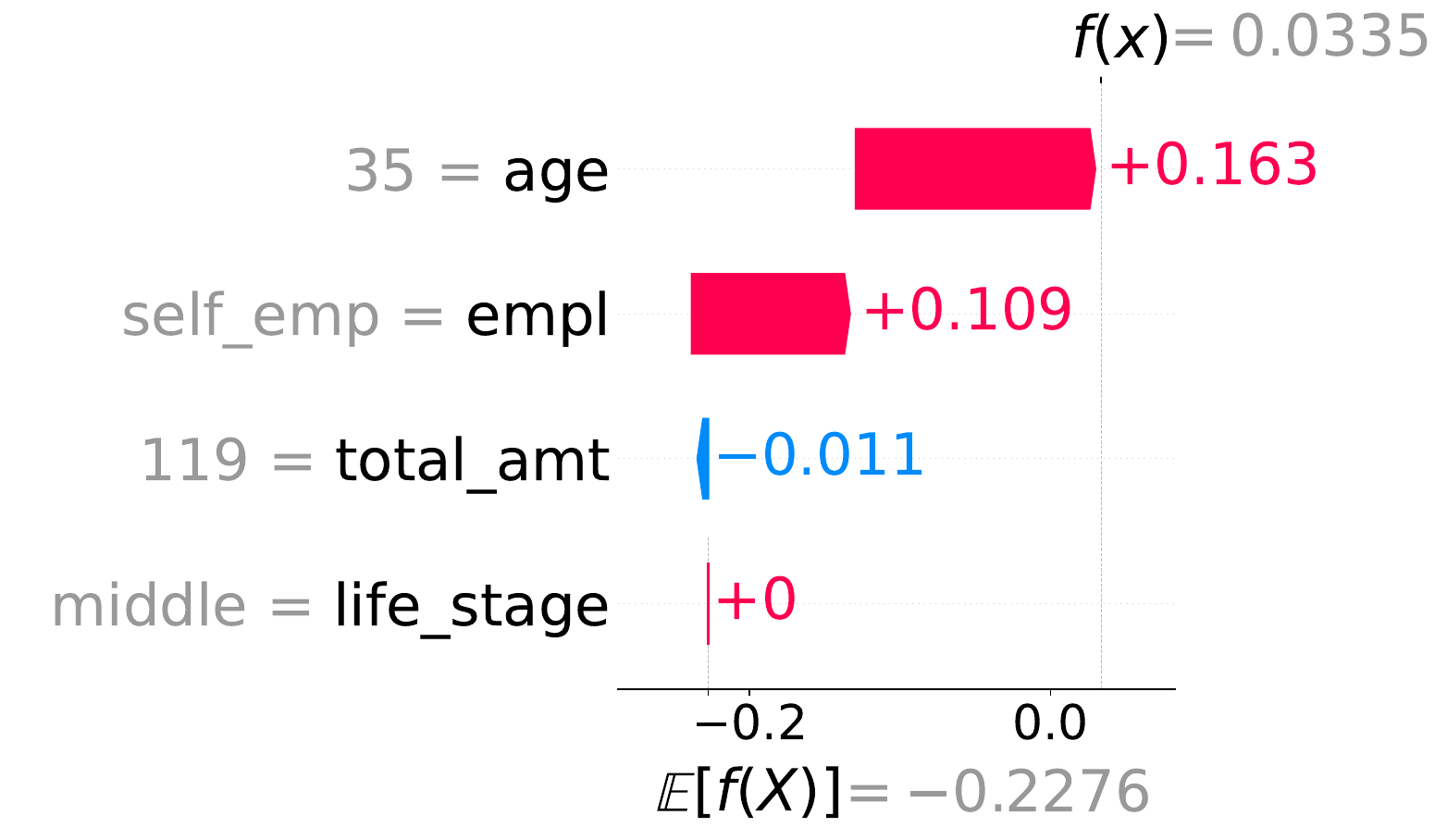}
    \caption{\RelShap (FDs and provenance).}
    \label{fig:relshap_prov_example}
  \end{subfigure}
  \caption{Kernel SHAP vs. \RelShap feature attributions for applicant \texttt{a27} in Example~\ref{ex:motivating} under an XGBoost classifier.}
  \label{fig:kernelshap_relshap}
\end{figure*}

\begin{enumerate}
    \item \RelShap alters both background data and coalition selection under relational constraints, producing attributions that correctly reflect the data-generating structure (Section~\ref{sec:exp:intuition}).
    
    \item \RelShap is agnostic to the choice of background data and coalition estimator: it enforces relational validity as a plug-in restriction layer while preserving the chosen background modeling approach and the base estimator's sampling and weighting scheme.
    
    \item \emph{Computational effects:} \RelShap achieves \emph{orthogonal acceleration} by avoiding redundant coalition evaluations while preserving Shapley values (Prop.~\ref{prop:quotient_invariance}). We combinatorially characterize the resulting speedup factor as a function of the coalition estimator and FD structure (Thm.~\ref{thm:reduction}).
    
    \item We evaluate \RelShap on 9 datasets across 4 models and 3 coalition estimators, empirically validating both the \emph{semantic effects} and the runtime predictions of Thm.~\ref{thm:reduction}.
\end{enumerate}
\section{Preliminaries \& Related Work}
\label{sec:background}

Let $F$ be the set of input features, $f: \mathbb{R}^{|F|} \rightarrow \mathbb{R}$ a predictive model trained on $\mathcal{D}_{\mathrm{train}}$, and $\mathbf{x} \in \mathcal{D}_{\mathrm{test}}$ an instance to be explained. To explain the prediction $f(\mathbf{x})$, for any coalition $S \subseteq F$, $\mathbf{x}_S$ denotes the projection of $\mathbf{x}$ onto $S$. The Shapley value of feature $i \in F$ is defined as
{\scriptsize
\begin{equation}
\phi_i(f)
= \sum_{S \subseteq F \setminus \{i\}}
\frac{|S|!(|F|-|S|-1)!}{|F|!}
\bigl(f(\mathbf{x}_{S \cup \{i\}})-f(\mathbf{x}_S)\bigr).
\label{eq:eq1}
\end{equation}
}

As discussed in Section~\ref{sec:intro}, computing Eq.~\eqref{eq:eq1} in practice requires two design choices:
\emph{background data selection}, which determines how features in $F \setminus S$ are filled in when evaluating $f$, and \emph{coalition selection}, which subsets $S \subseteq F \setminus \{i\}$ are evaluated. Since exhaustive enumeration over reference points and over the $2^{|F|-1}$ coalitions is infeasible in practice, both are typically operationalized via \emph{sampling}. The two dimensions admit a multiplicity of instantiations~\cite{sundararajan2020many} and are typically treated as orthogonal~\cite{zern2023interventional, chen2020true}. \RelShap operates at the level of \emph{selection} rather than \emph{sampling}: it restricts the admissible background and coalition spaces to relationally valid configurations, so it applies whether the values are then computed exactly or approximated by any estimators. Appendices~\ref{app:details_background} and~\ref{app:coalition_samplers} provide additional details.

\subsection{Background Data Selection}
\label{sec:background:data}

Let $\mathcal{P}_{\mathrm{bg}}(\mathbf{z}_{F\setminus S})$ denote a background distribution over the absent features indexed by $F \setminus S$, and $B$ be the number of background samples (possibly all reference data points). With a completed input $\tilde{\mathbf{x}}(S, \mathbf{z}_{F\setminus S}) \in \mathbb{R}^{|F|}$, we define
\[
f(\mathbf{x}_S) =
\mathbb{E}_{\mathbf{z}_{F\setminus S} \sim \mathcal{P}_{\mathrm{bg}}}
\Bigl[
f\bigl(\tilde{\mathbf{x}}(S, \mathbf{z}_{F\setminus S})\bigr)
\Bigr].
\]

Methods differ in their choice of $\mathcal{P}_{\mathrm{bg}}$. \emph{Interventional methods}~\cite{janzing2020feature,zern2023interventional}, including default Kernel SHAP~\cite{lundbergL2017unified}, treat feature absence as an intervention and fill absent features from the empirical marginal distribution, which is assumption-light but can break feature dependencies. \emph{Conditional SHAP}~\cite{aas2021explaining} instead estimates the distribution of absent features given the observed ones, aiming to preserve dependencies but potentially introducing estimator error. \emph{Causal methods}~\cite{heskes2020causal} intervene through a causal graph among the features, and \emph{ManifoldShap}~\cite{taufiq2023manifold} restricts completions to an estimated data manifold. These methods span the \emph{true to the model} vs.\ \emph{true to the data} axis discussed in Section~\ref{sec:intro} (see Appendices~\ref{app:perspectives_bg_selection},~\ref{app:true_data_model} for details). 

Importantly, \RelShap is \emph{orthogonal} to this axis: rather than choosing \emph{what distribution} to put on the absent features, it constrains \emph{which configurations} the chosen distribution is allowed to place mass on under relational constraints. The two design dimensions therefore compose rather than conflict: \RelShap can be layered on top of any background data selection method. By default we use marginal (feature independence) as the base when no relational constraints are available (Def.~\ref{def:relshap:bg}), to comply with the no-distributional-assumptions property~\cite{sundararajan2020many, janzing2020feature} while removing the structurally infeasible inputs that \citet{fryeshapley} and \citet{taufiq2023manifold} warn about.

\subsection{Coalition Selection}
\label{sec:background:coal}

We draw $M$ coalitions from a coalition distribution $\mathcal{P}_{\mathrm{coal}}$ over $S \subseteq F \setminus \{i\}$ and approximate
\[
\phi_i(f) \approx \frac{1}{M} \sum_{m=1}^{M} w\bigl(S^{(m)}\bigr) \Bigl( f(\mathbf{x}_{S^{(m)} \cup \{i\}}) - f(\mathbf{x}_{S^{(m)}}) \Bigr).
\]
Different Shapley value estimators choose different $\mathcal{P}_{\mathrm{coal}}$ and importance weight $w$. \emph{Kernel SHAP}~\cite{lundbergL2017unified} fits a weighted linear surrogate using the Shapley kernel, which concentrates weight on extreme (very small or large) coalition sizes. \emph{Monte Carlo (MC)} estimator samples coalitions from the Shapley-weighted distribution, with coalition sizes sampled approximately uniformly. \emph{Leverage SHAP}~\cite{muscoprovably} uses leverage score-based sampling, sampling the coalition size nearly uniformly and then sampling a coalition uniformly within that size. Recent variants further improve estimation through residual estimation~\cite{witterregression} or Fourier basis reduction~\cite{fumagalli2026odd}.

\RelShap is \emph{estimator-agnostic} and provides \emph{orthogonal acceleration}: it preserves any base estimator's sampling distribution, weights, and accuracy guarantees, canonicalizing sampled coalitions under relational validity (Def.~\ref{def:quotient_projection}). 
\citet{witter2026exact} also uses equivalence classes, but derives them from structural causal models (see Appendix~\ref{app:closure} for a detailed comparison). \citet{maafa2018algorithms} studies lattice-structured coalition spaces to reduce computations, but not for ML predictions.

\subsection{Relational Structure in ML Data}
\label{sec:background:relational}
 
Many ML datasets originate from relational databases, where data is organized into multiple tables linked by keys. Flattening this structure into a single table for model training discards integrity constraints that govern valid data combinations. A \emph{functional dependency} (FD) $A \rightarrow B$ states that the value of attributes in $A$ uniquely determines the value of $B$; for instance, in Example~\ref{ex:motivating}, $\texttt{age} \rightarrow \texttt{life\_stage}$ means each age value maps to exactly one life stage. \emph{Domain constraints} restrict attributes to valid ranges conditioned on other attributes (\eg if $\texttt{region} = \text{`North America'}$ then $\texttt{currency} \in \{\text{`USD'}, \text{`CAD'}, \text{`MXN'}\}$). \emph{Denial constraints} capture structural prohibitions such as mutual exclusivity in one-hot encoded features. Standard Shapley estimators are free to violate all of these, evaluating the model on feature combinations that could never arise in the data-generating process. \RelShap systematically prevents this. A fuller introduction to relational concepts is in Appendix~\ref{app:relational_primer}.
\section{The \RelShap Framework}
\label{sec:relshap}

\subsection{Relational Constraints} \label{sec:extracting_constraints}

\RelShap first prepares relational constraints for downstream modules, either directly from relational inputs or from flattened-only inputs through normalization (Figure~\ref{fig:workflow_revision}).
We denote a relational schema as $\mathcal{S}$, a query as $\mathcal{Q}$, and the resulting flattened dataset as $\mathcal{D}$.
The schema, query, and resulting data together induce a collection of relational constraints, which we denote by
$\Sigma^{\{\mathcal{S},\mathcal{Q},\mathcal{D}\}}
= \Sigma_{\mathrm{FD}} \cup \Sigma_{\mathrm{dom}} \cup \Sigma_{\mathrm{den}}$, where $\Sigma_{\mathrm{FD}}$,  $\Sigma_{\mathrm{dom}}$, and $\Sigma_{\mathrm{den}}$ are the sets of FDs, domain constraints, and denial constraints, respectively.
After extracting all relational constraints, users inspect them and specify which types of constraints to include and to what extent; we denote the resulting set by $\Sigma$. Details are in Appendix~\ref{app:relational_constraints}.

\textbf{Relational schema.}
$\Sigma^S$ contains schema-declared integrity constraints: FDs from primary and unique keys, dependencies induced by foreign keys (applied when the corresponding relations are joined in the query), and conditional domain constraints (\eg \texttt{CHECK} clauses), ignoring unary type constraints already satisfied by the data.

\textbf{Query.}
We parse $\mathcal{Q}$ using SQLGlot~\cite{sqlglot} to extract query-induced FDs and conditional domain rules.
$\Sigma^{\mathcal{Q}}_{\mathrm{FD}}$ identifies \texttt{GROUP BY} clauses, window aggregation, top-1 selection patterns, and self-joins.
$\Sigma^{\mathcal{Q}}_{\mathrm{dom}}$ captures \texttt{WHERE} and \texttt{JOIN} predicates.

\textbf{Flattened data.}
\RelShap discovers exact minimal FDs with bounded left-hand-side size (default $2$) and single-attribute right-hand sides; multi-attribute right-hand sides can be derived via Armstrong's axioms~\cite{armstrong1974dependency}. For any $L \subseteq F$ and attribute $a \notin L$, let $\mathcal{D}/L$ denote the partition of $\mathcal{D}$ induced by equality on $L$; an FD $L \rightarrow a$ holds if, for all equivalence classes $C \in \mathcal{D}/L$, $|\mathrm{DISTINCT}_a(C)| \le 1$. For each discovered FD $A \rightarrow B$ with $|\mathrm{dom}(B)| \le 20$ (default), we derive conditional domain rules $(B = s) \Rightarrow \varphi(A)$: for categorical $A$, $\varphi(A)$ is the set of observed values; for continuous $A$, an interval $[\ell_s, u_s]$ estimated from the data. We also extract denial constraints by identifying cycles in the FD set (\eg $A \rightarrow B$, $B \rightarrow C$, $C \rightarrow A$) and testing for structural patterns common in ML data, such as scaled one-hot encodings and exclusive-or among binary attributes. By default, we use $\mathcal{D}$ for constraint discovery and $\mathcal{D}_{\mathrm{train}}$ as the reference distribution (see Appendix~\ref{app:split}).

\begin{figure}[t]
 \centering
  \includegraphics[width=\linewidth
]{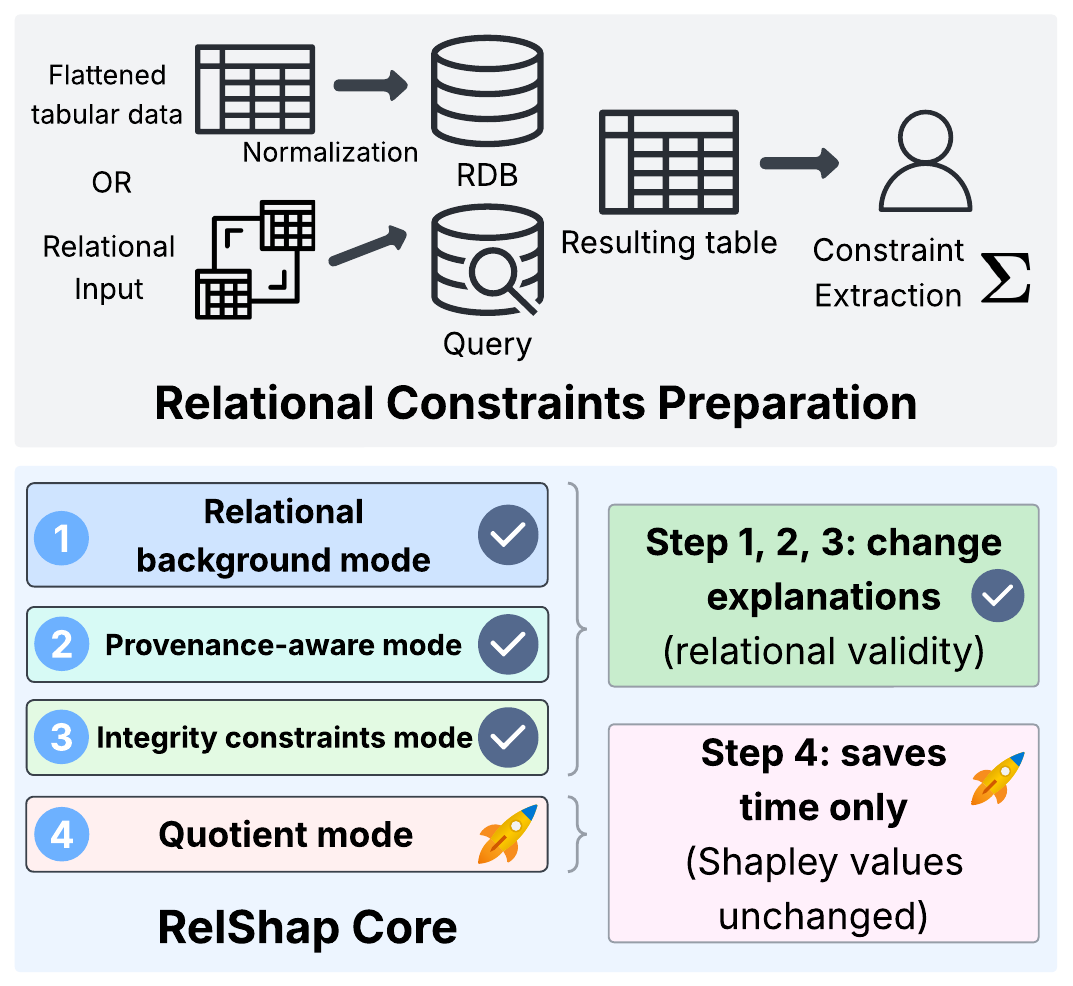}
\caption{\RelShap workflow. The upper panel extracts relational constraints; the core applies relational background ($\Sigma_\mathrm{FD}$), provenance-aware, and integrity constraints modes, which change explanations, and quotient mode, which reduces runtime while leaving Shapley values unchanged.}
\label{fig:workflow_revision}
\end{figure}

\subsection{Background and Coalition Selection}
\label{sec:relshap:bg_coal_main}

\RelShap incorporates two types of relational constraints: \emph{global}, applied to all test instances, and \emph{local}, instantiated per test instance. $\Sigma_{\mathrm{FD}}$ is global; provenance-aware constraints, $\Sigma_{\mathrm{dom}}$, and $\Sigma_{\mathrm{den}}$ are applied locally.

\begin{definition}[Relationally Consistent Background Distribution]
\label{def:relshap:bg}
Given a feature set $F$, a coalition $S \subseteq F$, an instance $\mathbf{x}$, and a set of relational constraints $\Sigma$, we define
\[
\mathcal{P}_{\mathrm{bg}}^{\mathrm{Rel}}
=
\begin{cases}
\mathcal{P}_{\mathrm{bg}}^{\mathrm{Marginal}},
& \text{if } \Sigma = \varnothing, \\
\widehat{P}\bigl(
\mathbf{Z}_{F\setminus S}
\,\big|\,
\tilde{\mathbf{x}}(S,\mathbf{Z}_{F\setminus S}) \models \Sigma
\bigr)
& \text{otherwise.}
\end{cases}
\]
\end{definition}

Under Def.~\ref{def:relshap:bg}, any coalitions $S$ and $S'$ are \emph{$\Sigma$-equivalent}, i.e., $S \sim_{\Sigma} S'$, if they induce the same set of relationally valid completions under $\Sigma$. Accordingly, $[S]_{\Sigma}$ denotes the \emph{equivalence class} of $S$ under $\sim_{\Sigma}$, and the \emph{$\Sigma$-coalition space} $\mathcal{C}_{\Sigma}$ is the \emph{quotient} of the original coalition space by $\sim_{\Sigma}$, collapsing coalitions indistinguishable under $\Sigma$.

\begin{definition}[Quotient space coalition projection] \label{def:quotient_projection}
Given $\mathcal{C}_{\Sigma}$, \emph{quotient space coalition projection} is a post-sampling procedure that maps a coalition $S \subseteq F \setminus \{i\}$ drawn by any sampler to its $\Sigma$-equivalence class $[S]_{\Sigma}$ (i.e., each coalition value is replaced by its canonical representative under $\Sigma$).
\end{definition}

For example, if $\Sigma$ contains an FD $a \rightarrow b$ over $\{a,b,c\}$, then $\{a\} \sim_{\Sigma} \{a,b\}$ and $\{a,c\} \sim_{\Sigma} \{a,b,c\}$, and hence each pair belongs to the same $\Sigma$-equivalence class. Under Def.~\ref{def:quotient_projection}, once a coalition is evaluated, subsequent samples mapping to the same $\Sigma$-equivalence class reuse cached results without additional model calls. Two properties hold, for distinct reasons. First, quotient space coalition projection (quotient mode) leaves Shapley values unchanged (Prop.~\ref{prop:quotient_invariance}) because it only reuses cached evaluations of equivalent coalitions. Second, the base estimator's coalition distribution, weighting scheme, and accuracy guarantees are preserved because all constraints (FDs, provenance, integrity constraints) are applied to each sampled coalition \emph{before} quotient deduplication (Algorithm~\ref{alg:rel_coal_selection}). Together these provide \emph{orthogonal acceleration} under $\Sigma \neq \varnothing$. See Appendix~\ref{app:closure} for details and proof.

\begin{proposition}[Shapley value invariance under quotient space coalition projection]
\label{prop:quotient_invariance}
\emph{Quotient space coalition projection does not change the resulting Shapley value of feature $i$, i.e., using only relational background distribution with or without quotient space coalition projection yields identical Shapley values, and this holds for all coalition estimators.}
\end{proposition}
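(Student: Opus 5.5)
The plan is to show that every coalition value the base estimator uses is pointwise identical with and without the projection. Identical Shapley estimates then follow for any estimator, since the estimator only touches coalition values through these evaluations. Let $v_\Sigma(S) := \mathbb{E}_{\mathbf{z}\sim \mathcal{P}^{\mathrm{Rel}}_{\mathrm{bg}}}[f(\tilde{\mathbf{x}}(S,\mathbf{z}_{F\setminus S}))]$ denote the coalition value under the relational background distribution of Def.~\ref{def:relshap:bg}.

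\textbf{Step 1: equivalent coalitions have equal value.} I will show that $S\sim_\Sigma S'$ implies $v_\Sigma(S)=v_\Sigma(S')$. By definition, $S\sim_\Sigma S'$ means the two coalitions induce the same set of relationally valid completions of $\mathbf{x}$. I will argue that $\mathcal{P}^{\mathrm{Rel}}_{\mathrm{bg}}$ is the base distribution conditioned on validity, so it depends on $S$ only through the set of valid completed inputs $\tilde{\mathbf{x}}$ and the mass the base distribution assigns to each of them.

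For FDs this can be made concrete. If $S'=S\cup\{b\}$ and $\Sigma\models S\to b$ (Armstrong closure), then every valid completion of $\mathbf{x}_S$ already has $b=x_b$. Conditioning on validity therefore makes the free coordinate $b$ in $S$ coincide with the fixed coordinate in $S'$, and the pushforward distributions of $\tilde{\mathbf{x}}$ agree.

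This is the main obstacle. For a generic base distribution (marginal, conditional, etc.), I must check that the induced distributions on $\tilde{\mathbf{x}}$ coincide, and not merely their supports. My first attempt will sharpen the reading of $\sim_\Sigma$ to mean "same distribution of valid completions." I will then verify that this holds for closure-equivalent coalitions, $\mathrm{cl}_\Sigma(S)=\mathrm{cl}_\Sigma(S')$, which is the case the quotient actually uses. The verification will use the fact that renormalization over valid configurations cancels the marginal mass of the determined coordinates.

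\textbf{Step 2: the projection only substitutes a cached value.} By Def.~\ref{def:quotient_projection}, the projection replaces the evaluation of $S$ by the cached evaluation of the canonical representative of $[S]_\Sigma$. By Step 1, this value equals $v_\Sigma(S)$.

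The sampler's draws $S^{(m)}$, the weights $w(S^{(m)})$, and the regression design in Kernel SHAP or Leverage SHAP are all computed from the original sampled $S$ before deduplication. Hence the estimator receives the multiset of pairs $(S^{(m)},v_\Sigma(S^{(m)}))$ unchanged. The same holds for the values of $S^{(m)}\cup\{i\}$: if $S\sim_\Sigma S'$ then $S\cup\{i\}\sim_\Sigma S'\cup\{i\}$, because closure is monotone, so these are likewise taken from the correct class.

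\textbf{Step 3: conclusion for exact and sampled estimators.} Any estimator is a deterministic function of the sampled coalitions, their weights, and their values, possibly together with external randomness that the projection does not consume. Both versions therefore output identical estimates. In the exact case, taking $M$ to range over all $2^{|F|-1}$ coalitions, Eq.~\eqref{eq:eq1} evaluated with $v_\Sigma$ is unchanged term by term. This gives the proposition for all coalition estimators.
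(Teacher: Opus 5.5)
Your overall structure matches the paper's proof: the coalition value is constant on $\Sigma$-classes, so grouping the Shapley sum by class, or caching evaluations, changes nothing. Your Steps~2--3 are more careful than the paper's proof about multiplicities, weights and the $S\cup\{i\}$ terms. The step that fails is the bridge in Step~1, namely that closure-equivalent coalitions induce the same distribution of valid completions because renormalization ``cancels the marginal mass of the determined coordinates.''

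That cancellation requires the determined coordinate $Z_b$ to be independent of the other absent coordinates $Z_{F\setminus\mathrm{cl}_\Sigma(S)}$ under the base distribution. The paper's default background $\pi_{F\setminus S}(\mathcal{D}_{\mathrm{train}})$ draws whole rows. Conditioning on validity then forces $z_b=x_b$, which discards rows and reweights every other absent coordinate.

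The running example shows this concretely. Take $\Sigma=\{\texttt{age}\to\texttt{life\_stage}\}$ and instance \texttt{a27}. For $S=\{\texttt{age}\}$, the literal conditioning of Def.~\ref{def:relshap:bg} keeps only the rows of \texttt{a24} and \texttt{a27} in Table~\ref{tab:running_flattened}. This gives the completions $(35,\texttt{middle},\texttt{emp},235)$ and $(35,\texttt{middle},\texttt{self\_emp},119)$. By contrast, $\mathrm{cl}_\Sigma(S)=\{\texttt{age},\texttt{life\_stage}\}$ admits all six rows.

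Not even the supports agree, so $v_\Sigma(S)\neq v_\Sigma(\mathrm{cl}_\Sigma(S))$ in general. Step~2's ``by Step~1'' therefore fails for exactly the base the paper uses. Redefining $\sim_\Sigma$ distributionally does not help. It makes Step~1 tautological, but the projection you analyze and your $S\cup\{i\}$ argument are both closure-based, so the bridge is still needed. (The paper's proof takes constancy on classes directly from Def.~\ref{def:relshap:equiv} and never confronts this.)

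The missing idea is to use the semantics Alg.~\ref{alg:rel_coal_selection} actually implements. You can see it in Table~\ref{tab:running_completions}: the FD-only completion for \texttt{age}$=35$ is filled from \texttt{a29}, although \texttt{a29} has \texttt{life\_stage}$=$\texttt{older}. In relational background mode, each sampled coalition is first replaced by $\mathrm{cl}_\Sigma(S)$ via \textsc{CanonicalizeCoal}, whether or not quotient mode is on. Only then is it completed, and possibly filtered, from the base distribution.

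Under this semantics, $v_\Sigma(S)$ depends on $S$ only through $\mathrm{cl}_\Sigma(S)$, for every base distribution. The $S\cup\{i\}$ terms follow from $\mathrm{cl}_\Sigma(S\cup\{i\})=\mathrm{cl}_\Sigma(\mathrm{cl}_\Sigma(S)\cup\{i\})$, which uses idempotence as well as monotonicity. The proposition then reduces to the fact that, with or without quotient mode, every draw evaluates the same mask; quotient mode merely avoids recomputing it. At that point your Steps~2--3 finish the proof.

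If you prefer to keep the conditioning semantics, you must restrict the claim to bases where the cancellation is valid. One such base is a product of per-feature marginals. Another is a coherent conditional base, for which conditioning $\widehat{P}(\cdot\mid\mathbf{X}_S=\mathbf{x}_S)$ on $Z_b=x_b$ yields $\widehat{P}(\cdot\mid\mathbf{X}_{S\cup\{b\}}=\mathbf{x}_{S\cup\{b\}})$.
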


Moreover, whenever $\sim_{\Sigma}$ is non-trivial, quotient projection strictly reduces the coalition space, $|\mathcal{C}_{\Sigma}| < 2^{|F|-1}$ (see Appendix~\ref{app:proof:coalition_reduction}); Section~\ref{sec:algorithms} quantifies the expected reduction.

\subsubsection{Provenance-aware mode}

\emph{Provenance-aware} mode incorporates \emph{local} constraints: identifier-induced FDs (primary and foreign keys; they are part of global $\Sigma_\mathrm{FD}$ but applied instance by instance) no longer explicit in the flattened table (\eg when identifiers are dropped as in Example~\ref{ex:motivating}). Given a coalition, we check whether an identifier can be inferred from the observed features. In strict mode, dependencies are applied only when the identifier is uniquely determined; in relaxed mode, dependencies are applied to any attribute whose value is constant across the narrowed candidate set. For instance, in Example~\ref{ex:motivating}, observing only the age of \texttt{a29} yields candidates $\{\texttt{a12}, \texttt{a29}\}$; although the exact identifier is ambiguous, \texttt{life\_stage} (which may already be determined by \texttt{age}) and \texttt{empl} take the same value for both and can still be used. Provenance-aware mode is applicable even when no other relational structure exists within the feature tables (\eg discovered FDs), and when identifiers have been dropped during flattening. Standard (non-provenance) \RelShap is typically sufficient in with-key settings where identifiers are retained (\eg recommender systems).

\subsection{Algorithm and Runtime Analysis}
\label{sec:algorithms}

\begin{algorithm}[t]
\small 
\caption{\textsc{RelShap}}
\label{alg:rel_coal_selection}
\begin{algorithmic}[1]
\State \textbf{Input:} $\mathcal{P}_{\mathrm{coal}}$, $\Sigma$, $\mathbf{x}$ (instance), $M$ (coalition budget)
\State \textbf{Output:} canonical coalitions $\mathcal{M}$
\State $\mathrm{Seen} \gets \emptyset$, \quad $\mathcal{M} \gets [\ ]$

\For{$t=1$ to $M$}
\State sample a raw coalition $m \sim \mathcal{P}_{\mathrm{coal}}$

\If{\emph{relational background mode} under $\Sigma_\mathrm{FD}$}
    \State $m \gets \textsc{CanonicalizeCoal}(m)$
    \EndIf

\If{\emph{provenance-aware} mode}
    \State $m \gets m \cup \textsc{ProvExpand}(m,\mathbf{x},\Sigma_{\mathrm{FD}})$
\EndIf

\If{\emph{integrity constraints} mode}
    \State $m \gets m \cup 
    \textsc{ICRepair}(m,\mathbf{x},\Sigma_{\mathrm{dom}}\cup\Sigma_{\mathrm{den}})$
\EndIf

\If{\emph{quotient} mode and $m \in \mathrm{Seen}$}
    \State \textbf{continue} \Comment{reuse cached evaluation}
\EndIf
\State $\mathrm{Seen} \gets \mathrm{Seen} \cup \{m\}$; $\mathcal{M} \gets \mathcal{M} \cup \{m\}$

\EndFor

\State \Return $\mathcal{M}$
\end{algorithmic}
\end{algorithm}

Algorithm~\ref{alg:rel_coal_selection} orchestrates \RelShap: it draws $M$ raw coalitions from a base coalition distribution, maps each to its canonical representative, expands it with local provenance information when available, applies integrity constraints, and deduplicates coalitions. \textsc{CanonicalizeCoal} lets $\Sigma$-equivalent coalitions share one evaluation based on \textsc{FDClosure}; \textsc{ProvExpand} incorporates identifier-induced dependencies using prebuilt inverted and row-set indexes, requiring only identifier-level lookups independent of dataset size, rather than full group-by operations; and \textsc{ICRepair} extends the coalition with features whose values are forced by domain and denial constraints (all subroutines are in Appendix~\ref{app:algo_details}). \RelShap first samples coalitions from the base coalition distribution and then applies all available relational constraints, reusing cached evaluations when possible and evaluating new coalitions otherwise. This procedure preserves the base sampler's draws while enforcing relational consistency and avoiding redundant evaluations ($|\mathcal{M}| \le M$).

\paragraph{Runtime analysis.}\quad The runtime reduction from \emph{quotient space coalition projection} is the decrease from the sampling budget $M$ to the number of distinct projected coalitions $|\mathcal{M}|$. Prior works~\cite{demetrovics1985minimum, demetrovics1992functional} characterize $|\mathcal{C}_{\Sigma}|$ for representative FD structures (Appendix~\ref{app:fd_lattice}), but only under deterministic coalition evaluation. \RelShap also supports \emph{stochastic} sampling from $\mathcal{P}_{\mathrm{coal}}$, where widely used estimators sample with coalition size-dependent probabilities $p_k=\Pr(|S|=k)$ (Section~\ref{sec:background:coal}). We therefore decompose the probability of reaching each class by coalition size and derive the expected reduction combinatorially: speedup arises when the sampler repeatedly hits the same  equivalence class, allowing evaluations to be reused.

\begin{theorem}[Expected reduction under quotient mode]
\label{thm:reduction}
Let $q(C)=\Pr_{S\sim \mathcal{P}_{\mathrm{coal}}}([S]_{\Sigma}=C)$ be the probability that a sampled coalition maps to $\Sigma$-equivalence class $C \in \mathcal{C}_{\Sigma}$. If the draws are i.i.d., the expected number of distinct classes evaluated after projection mode is
\[
\mathbb{E}[K_M]
=
\sum_{C\in\mathcal{C}_{\Sigma}}
\left(1-(1-q(C))^M\right),
\]
where $q(C)=\sum_{k=r(C)}^{|F|-1}
p_k\frac{N_C(k)}{\binom{|F|-1}{k}}$ is the probability of a sampled coalition landing in class $C$. Hence the normalized expected runtime speedup factor relative to the baseline sampler is
$R_M=1-\mathbb{E}[K_M]/{M}.$
\end{theorem}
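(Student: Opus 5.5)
The plan is to write $K_M$ as a sum of indicator variables, one per equivalence class, and then apply linearity of expectation. For each $C\in\mathcal{C}_{\Sigma}$ set $I_C=\mathbf{1}\{\exists\, t\le M:\ [S^{(t)}]_{\Sigma}=C\}$, where $S^{(1)},\dots,S^{(M)}$ are the raw draws from $\mathcal{P}_{\mathrm{coal}}$. Each raw coalition maps to exactly one class. Algorithm~\ref{alg:rel_coal_selection} in quotient mode evaluates a class only at its first hit and reuses the cached value afterwards. Hence $K_M=|\mathcal{M}|=\sum_{C}I_C$, and $\mathbb{E}[K_M]=\sum_C \Pr(I_C=1)$.

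Next, the draws are i.i.d. and each lands in $C$ with probability $q(C)$. So $\Pr(I_C=0)=(1-q(C))^M$, which gives the displayed formula for $\mathbb{E}[K_M]$. One caveat needs an explicit assumption. The local modes (\textsc{ProvExpand}, \textsc{ICRepair}) depend only on $m$ and the fixed instance $\mathbf{x}$, so they are deterministic maps of the raw draw. Independence is therefore preserved, and $q(C)$ should be read as the push-forward probability under the full composite projection.

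To derive $q(C)$, condition on the coalition size: $q(C)=\sum_k p_k \Pr([S]_{\Sigma}=C \mid |S|=k)$. I would state the following as an assumption. The base estimators of Section~\ref{sec:background:coal} (Kernel SHAP, MC, Leverage SHAP) all sample a size $k$ with probability $p_k$ and then a coalition uniformly among the $\binom{|F|-1}{k}$ subsets of $F\setminus\{i\}$ of that size. Under that assumption the conditional probability is $N_C(k)/\binom{|F|-1}{k}$, where $N_C(k)$ counts the size-$k$ members of $C$.

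The lower limit $r(C)$ comes from an order argument on the class. Every member of $C$ has the same FD closure, so every member contains a minimal generating set of that closure. Take $r(C)$ to be the minimum size of such a generating set, which is the smallest member size. Then $N_C(k)=0$ for $k<r(C)$, and the terms below $r(C)$ can be dropped. The main obstacle is making this rigorous beyond the pure FD closure case. With provenance and integrity constraints the class is defined by the local canonical map, so I would define $N_C(k)$ directly as the number of size-$k$ preimages of $C$ under the composite map. The formula then holds by definition, and $r(C)$ is simply the smallest $k$ with $N_C(k)>0$.

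Finally, $R_M=1-\mathbb{E}[K_M]/M$ is the expected fraction of the $M$ baseline evaluations that are saved. This rests on the per-evaluation cost being identical for every coalition; I would state that as the cost model. Proposition~\ref{prop:quotient_invariance} ensures that these savings leave the computed values unchanged.
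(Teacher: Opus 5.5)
Your proposal is correct and follows essentially the same route as the paper: you write $K_M$ as a sum of per-class indicators $I_C$, apply linearity of expectation with $\Pr(I_C=0)=(1-q(C))^M$ from independence, then condition on $|S|=k$ with uniform sampling within each size and drop the terms $k<r(C)$, where $N_C(k)=0$. The assumptions you state explicitly (uniformity over size-$k$ subsets of $F\setminus\{i\}$, reading $q(C)$ as a push-forward probability when provenance or integrity-constraint expansion is active, and equal per-evaluation cost behind $R_M$) are left implicit in the paper's coupon-collector argument, and stating them only makes it more precise.
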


$N_C(k)$ is the number of size-$k$ coalitions whose projection belongs to $C$, and $r(C)=\min\{|S|\mid [S]_{\Sigma}=C\}$ is the minimum size of any coalition generating $C$ (rank). For the FD $a \rightarrow b$ over $\{a,b,c\}$ from Section~\ref{sec:relshap:bg_coal_main}, quotient mode reduces $|\mathcal{C}_{\Sigma}|$ from $2^{3}=8$ to $6$. Intuitively, FDs with small left-hand sides (LHSs) induce classes with small $r(C)$, so many coalitions collapse to the same class, yielding large $N_C(k)$ at small sizes. The improvement for \emph{Kernel SHAP} is therefore amplified when $\Sigma$ contains many low-arity dependencies, since it concentrates mass on extreme coalition sizes and FDs with small LHSs are common in practice; \emph{Monte Carlo} and \emph{Leverage SHAP} spread mass more evenly across $k$, so their improvement tracks the overall magnitude of $N_C(k)$. We ignore implementation-level costs (\eg lookup or indexing) and assume i.i.d.\ draws; sequential samplers may introduce mild dependence in practice. Appendix~\ref{app:runtime_reduction_projection_proof} provides a proof and Section~\ref{sec:experiments} confirms that observed speedups closely match this analysis.

\section{Experimental Evaluation} 
\label{sec:experiments}

We evaluate \RelShap on 9 datasets: 5 standard ML datasets normalized into relational form~\cite{papenbrock2017data} --- Amazon Employee Access~\cite{openml_amazon_employee_access_43900}, Churn~\cite{openml_churn_40701}, Churn Modelling~\cite{kaggle_churn_modelling}, German Credit~\cite{openml_credit_g_31}, and SpeedDating~\cite{openml_speeddating_40536} --- and 4 relational datasets: TPC-H~\cite{tpch}, UW-CSE~\cite{motl2015ctu, uw_cse_dataset}, MovieLens 20M~\cite{kaggle_movielens_20m}, and a Synthetic dataset extending Example~\ref{ex:motivating} with domain and denial constraints. 

We train logistic regression, XGBoost~\cite{chen2016xgboost}, Random Forest~\cite{liaw2002classification}, and MLP-PLR~\cite{gorishniy2022embeddings}, covering linear, tree-based, and neural models, and compute explanations with Kernel SHAP, MC, and Leverage SHAP (means and standard deviations over three seeds) for $n_{\text{explain}} = \min(200, |\mathcal{D}_{\text{test}}|)$ instances per dataset, using a per-dataset convergence budget $M_{\mathrm{conv}}$ at which Shapley estimates stabilize~\cite{covert2020improving, yuan2022empirical}. 

Relational constraint extraction is a one-time preprocessing step reused across all explanations; it completes within 12 seconds on every dataset except SpeedDating ($482.7$s) and MovieLens 20M ($98.9$s), where data-driven FD discovery dominates, and its cost, typically offset by the resulting speedups, is excluded from runtime comparisons. Setup details are in Appendices~\ref{app:dataset} and~\ref{app:detailed_exp_setups}.

\paragraph{In summary,} our experiments show that standard estimators query impossible worlds constantly; when ground truth is known, \RelShap is the only method that recovers the correct explanation; on real data, the correction is large and systematic; and it comes at reduced, not increased, runtime.

\subsection{Validation with Ground Truth}
\label{sec:exp:intuition}

We demonstrate that \RelShap yields more intuitive explanations than existing methods in a controlled setting, following experiments in~\citet{taufiq2023manifold}. Building on Example~\ref{ex:motivating}, we consider a loan approval scenario restricted to two features (\texttt{age} and \texttt{life\_stage}), with an FD $\texttt{age} \rightarrow \texttt{life\_stage}$. We define a synthetic predictor
$
g(\mathbf{x})
=\mathbf{1}\{\texttt{age}>50\}
+\delta\, r(\texttt{life\_stage})\,
\mathbf{1}\{\mathbf{x}\not\models\Sigma\},
$
where $r(\texttt{life\_stage})\in\{0.3,0.6,0.9\}$ assigns an arbitrary constant to each of the three life stages, and $\delta$ ranges from $0$ to $10$ in increments of $0.5$. Since every data instance satisfies $\Sigma$, the second perturbation term is inactive during training and testing and is triggered only by relationally invalid combinations generated during Shapley computation. Thus, predictions on the valid domain depend \emph{only} on \texttt{age}, and a semantically intuitive explanation should assign greater importance to \texttt{age} than to \texttt{life\_stage} regardless of $\delta$.

Figure~\ref{fig:relshap_intuition} shows how often each feature receives the larger absolute attribution as $\delta$ increases. All three baselines rank \texttt{age} first more often than \texttt{life\_stage}; however, Kernel SHAP increasingly shifts attribution toward \texttt{life\_stage} as invalid perturbations grow. Conditional SHAP and ManifoldShap are less sensitive to increasing $\delta$, but still rank \texttt{life\_stage} first for a substantial fraction of instances. In contrast, \RelShap consistently assigns the larger attribution to \texttt{age} for all $\delta$. This indicates that the explanation change from \RelShap is not a mere difference; it is an \emph{improvement} with respect to relational validity, achieved by eliminating \emph{impossible worlds}.

The mechanism behind these differences is instructive. We measure \emph{violation prevalence}: the fraction of coalitions violating at least one relational constraint. Kernel SHAP fills \texttt{life\_stage} from the marginal distribution, independently of \texttt{age}, so $31.8\%$ of its coalitions violate the FD (\eg $\texttt{age}=35$ with $\texttt{life\_stage}=\texttt{older}$); each violation activates the perturbation term $\delta\,r(\texttt{life\_stage})$, and the resulting spurious attribution to \texttt{life\_stage} grows with $\delta$. Conditional SHAP, which estimates the conditional distribution of \texttt{life\_stage} given \texttt{age}, and ManifoldShap, which restricts completions to an estimated data manifold, suppress most, but not all, violations ($4.8\%$ and $5.1\%$, respectively), explaining their partial yet incomplete robustness. \RelShap enforces the FD exactly: its violation prevalence is $0\%$, the perturbation term never activates, and \texttt{life\_stage} receives no spurious attribution. Note that violation prevalence itself does not depend on $\delta$: increasing $\delta$ makes each violation more costly, not more frequent, which is why misattribution grows with $\delta$ while prevalence stays fixed.

\begin{figure}[t]
  \centering
  \includegraphics[width=\linewidth
]{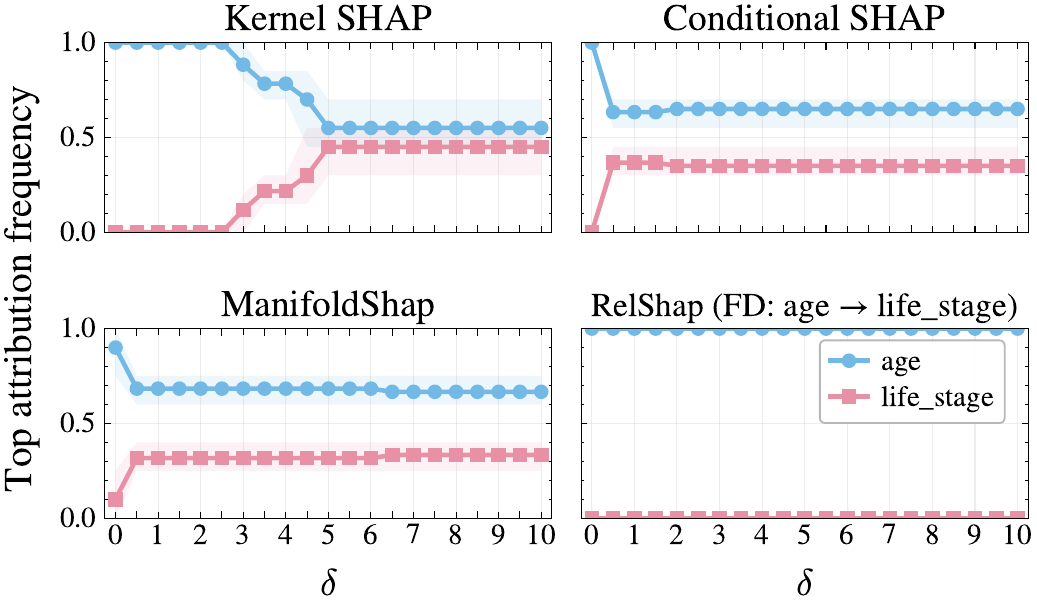}
  \caption{Comparison of top attribution frequency across methods as $\delta$ varies; exact Shapley values computed via full coalition enumeration.}
\label{fig:relshap_intuition}
\end{figure}

\begin{figure*}[t] 
\centering 
\begin{subfigure}[t]{0.33\linewidth} 
\centering 
\includegraphics[width=\linewidth]{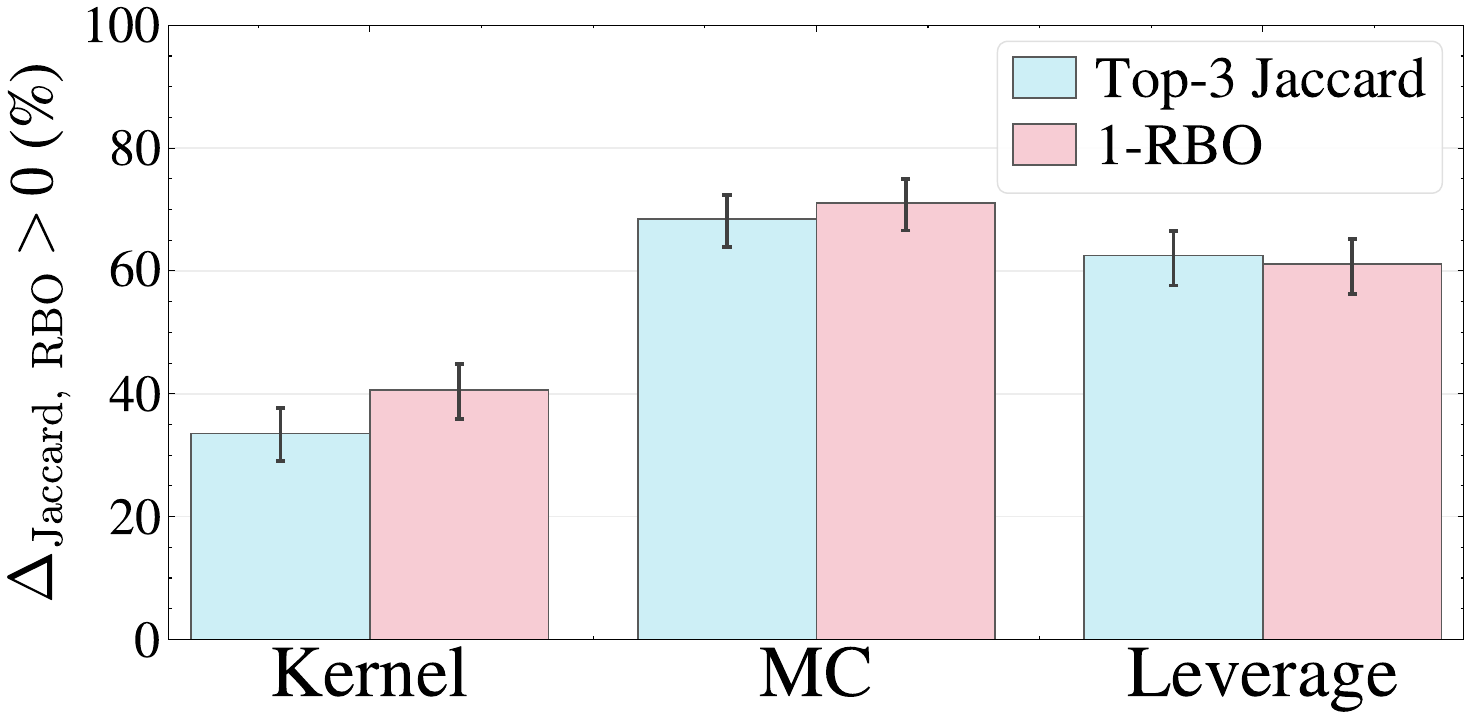}%
\caption{Aggregated by estimators.} 
\label{fig:delta_estimator} 
\end{subfigure}\hfill
\begin{subfigure}[t]{0.33\linewidth} 
\centering 
\includegraphics[width=\linewidth]{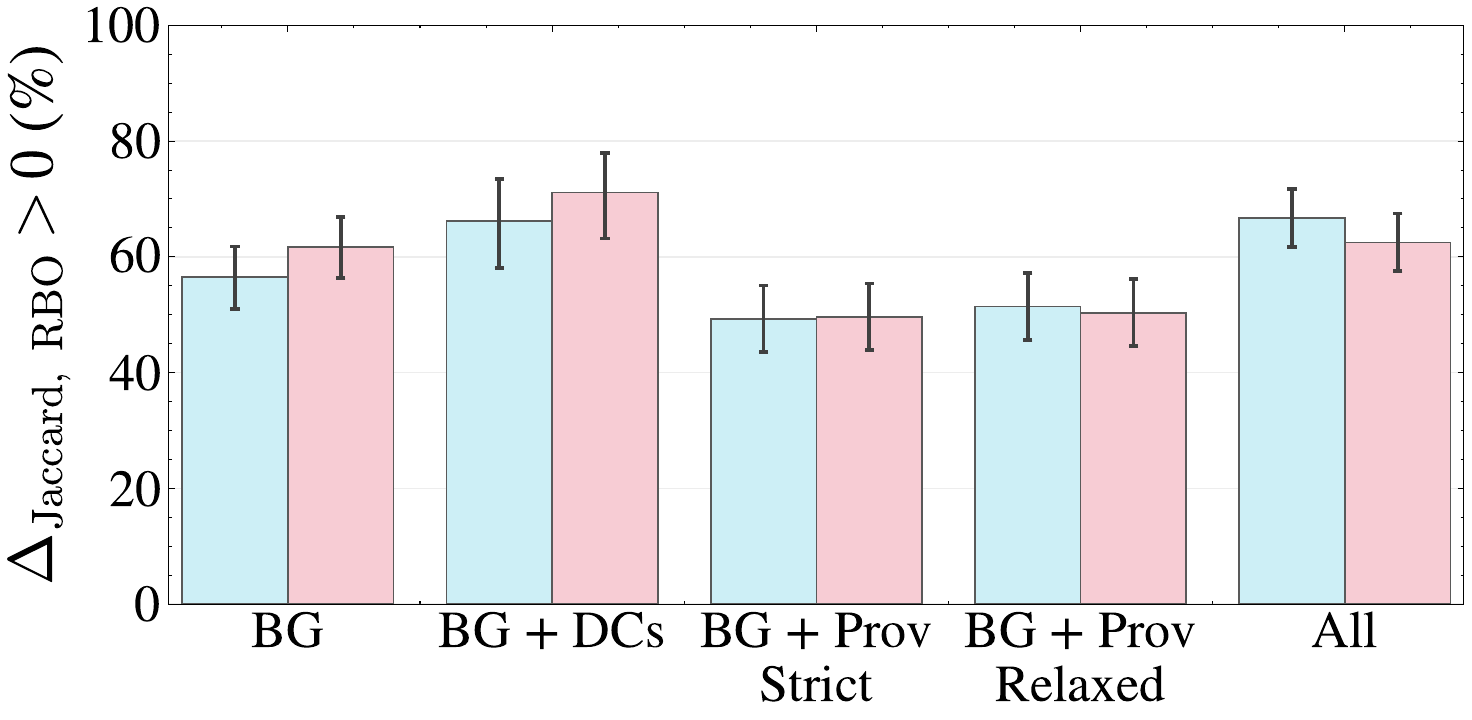} 
\caption{Aggregated by comparison modes.} 
\label{fig:delta_comparison} 
\end{subfigure}\hfill
\begin{subfigure}[t]{0.33\linewidth} 
\centering 
\includegraphics[width=\linewidth]{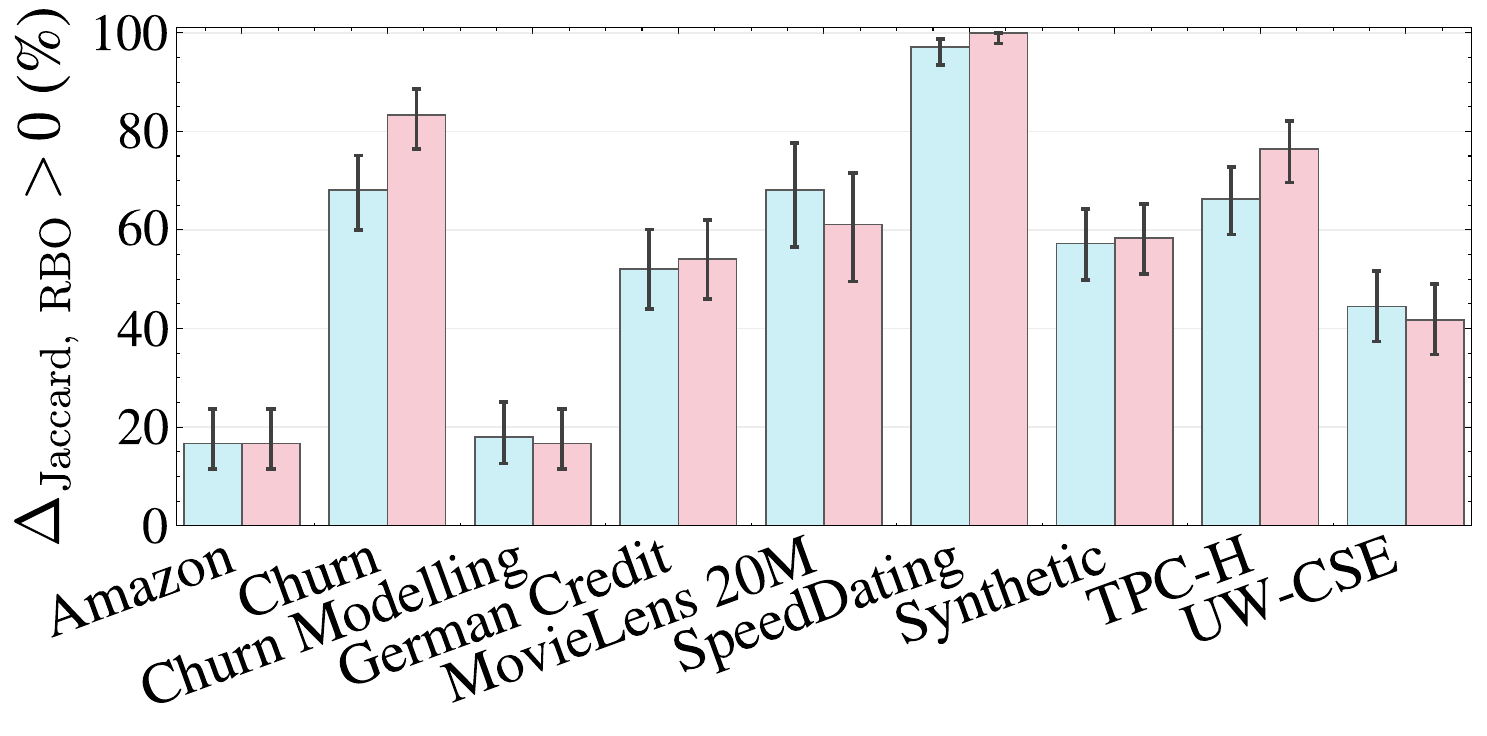} 
\caption{Aggregated by datasets.} 
\label{fig:delta_dataset} 
\end{subfigure}
\caption{Percentage of cases with $\Delta > 0$, sliced by estimators, comparison modes, and datasets. Error bars denote 95\% CIs.}
\label{fig:delta_positive}
\end{figure*}

These results generalize beyond two features. We repeat the experiment over four features (\texttt{age}, \texttt{life\_stage}, \texttt{empl}, \texttt{total\_amt}), where attribution ordering between a particular feature pair is no longer expected to hold in isolation, and examine how the overall attribution vector and feature ranking change as $\delta$ increases (Appendix~\ref{app:exp_intuition}). \RelShap remains unchanged across all metrics, while all baselines exhibit increasing attribution and ranking shifts; a setting with additional provenance constraints behaves identically (Appendix~\ref{app:exp_intuition}). This shows that eliminating impossible worlds makes \RelShap insensitive to relationally invalid perturbations that are never observed during training or testing.

Beyond the controlled setting, the UW-CSE dataset permits full background and coalition enumeration. \RelShap changes the feature ranking even under exact computation, showing that the change is not a sampling artifact. Quotient mode further reduces runtime while preserving the Shapley values exactly, providing an empirical verification of Prop.~\ref{prop:quotient_invariance}.
Additional validations on real datasets are in Appendix~\ref{app:exp_intuition}.

\subsection{Impossible Worlds in Practice}
\label{sec:exp:practice}

We compare default estimators against five \RelShap configurations with progressively richer constraints: BG (relational background mode); BG~+~DCs; BG~+~Prov, Strict/Relaxed (Relaxed subsumes Strict); and All (BG~+~Prov, Relaxed~+~DCs). We first quantify how frequently conventional Shapley computation is exposed to \emph{impossible worlds}. Across the three estimators, average violation prevalence rises from $64$--$74\%$ under BG to $82$--$93\%$ under BG~+~Prov, Relaxed on datasets without DCs, and from $68$--$70\%$ under BG to $72$--$76\%$ under All (Appendix~\ref{app:violation_details}): invalidity becomes more pervasive as richer constraints are incorporated.

\begin{table}[b]
\centering
\small
\setlength{\tabcolsep}{3pt}
\begin{tabular}{lcccc}
\toprule
Mode & Top-3 Jaccard & $1-\mathrm{RBO}$ & $\Delta_{\mathrm{Jaccard}}$ & $\Delta_{\mathrm{RBO}}$ \\
\midrule
BG & 0.452 & 0.190 & 0.057 & 0.050  \\
BG + DCs & 0.566 & 0.216 & 0.168 & 0.094 \\
BG + Prov, Strict & 0.377 & 0.142 & $-0.023$ & 0.011  \\
BG + Prov, Relaxed & 0.415 & 0.155 & 0.015 & 0.024  \\
All & 0.430 & 0.172 & 0.036 & 0.032  \\
\bottomrule
\end{tabular}
\caption{Explanation differences by comparison mode (Default vs.\ mode), averaged over datasets, models, and estimators; standard deviations in Appendix Table~\ref{tab:summary_main}.}
\label{tab:mode_effects}
\end{table}

Prevalence alone does not establish that explanations change; that also depends on the model's response in the invalid region. We therefore assess how \RelShap changes Shapley values relative to default estimators, finding shifts well beyond random ranking perturbation. For each instance, we measure ranking differences using \emph{Top-$3$ Jaccard distance} and \emph{$1-\mathrm{RBO}$} (rank-biased overlap)~\cite{webber2010similarity}, capturing the three most influential features and the full top-weighted ranking, respectively. Against the Mallows null baseline range of \citet{hwang2026explanation}, we define $\Delta$ as the signed deviation from the range midpoint; $\Delta>0$ indicates a change larger than expected at random, and we report $95\%$ confidence intervals (CIs) on the fraction of such cases.

Figure~\ref{fig:delta_positive} shows the percentage of cases with $\Delta>0$. This percentage exceeds $50\%$ across most \RelShap configurations, with changes occurring most frequently under MC, followed by Leverage SHAP and Kernel SHAP. Dataset-level effects are generally larger for constraint-rich datasets---reaching nearly $100\%$ for SpeedDating---but are not strictly monotonic in the number of constraints (Appendix~\ref{app:more_experiments}).
The effects are stable across predictive models, confirming that \RelShap is generally model-agnostic (Appendix~\ref{app:exp:semantic_effects}).
Together with Section~\ref{sec:exp:intuition}, these results indicate that the observed ranking differences reflect meaningful corrections toward relationally valid explanations. 

Table~\ref{tab:mode_effects} reports the magnitude of these changes by configuration: domain and denial constraints yield the largest corrections, provenance effects depend on the mode, and $\Delta > 0$ on average in all but one mode--metric pair.  On TPC-H, provenance mode replaces the top-3 features almost entirely (Top-3 Jaccard $0.91$--$0.96$ under Kernel SHAP and MC; Appendix Table~\ref{tab:prov_tpch_uwcse_xgboost}).

\subsection{Effectiveness of Running Time Optimization}
\label{sec:exp:runtime}

Quotient mode (Q) applies the quotient space coalition projection of Def.~\ref{def:quotient_projection} as an orthogonal acceleration on top of any configuration (Appendix~\ref{app:exp:scalability}). Across datasets and models, this optimization consistently reduces runtime for MC and Leverage SHAP, particularly on computationally costly datasets (\eg Amazon, TPC-H); Kernel SHAP also benefits from Q but is more variable, consistent with the estimator-dependent analysis in Section~\ref{sec:algorithms}. Added relational processing does not typically impose a runtime penalty, and the reduction from Q grows with both background size and coalition budget. Finally, observed reductions align with our combinatorial analysis: the empirical counterpart of $R_M$ in Thm.~\ref{thm:reduction} correlates closely with measured reduction (Pearson: mean $0.806$, median $0.875$; Kendall $\tau$: mean $0.741$, median $0.833$), reliably predicting \emph{when} reduction occurs, though its magnitude depends on the dataset and sampling strategy. 

\section{Conclusion}

Conventional Shapley value methods treat feature coalitions as unrestricted subsets, in effect querying relationally impossible worlds. \RelShap is the first estimator-agnostic framework to systematically enforce relational consistency. Extensive experiments show that the resulting explanations differ meaningfully from those of unconstrained estimators, and that in settings with known ground truth, the difference is an improvement. Quotient mode preserves Shapley values exactly while provably reducing runtime. 

\textbf{Limitations.}\quad Our validation establishes improvement with respect to relational validity and synthetic ground truth, not human judgment; a user study is important future work. \RelShap also treats the extracted, user-vetted constraints as correct: spurious FDs discovered from small data would propagate to explanations. See Appendix~\ref{app:limitations} for details.

\section*{Acknowledgments}
This work was supported by NSF Awards No. 
2312930 and 2326193. This work was supported in part through the NYU IT High Performance Computing resources, services, and staff expertise.

\bibliography{refs}

\newpage
\appendix
\section{Primer on Relational Concepts}
\label{app:relational_primer}

This appendix provides background on relational database concepts used throughout the paper, aimed at readers whose primary background is in machine learning or explainable AI rather than data management.

\subsection{Relational databases and tables}

A \emph{relational database} organizes data into a collection of \emph{tables} (also called \emph{relations}), each with a fixed set of named columns (\emph{attributes}) and a variable set of rows (\emph{tuples}). For example, in Example~\ref{ex:motivating}, the database contains two tables: \texttt{Applicants} with attributes \texttt{a\_id}, \texttt{age}, \texttt{life\_stage}, \texttt{empl}, and \texttt{Transactions} with attributes \texttt{t\_id}, \texttt{a\_id}, \texttt{amount}.

Tables are linked through shared attributes. In the example, \texttt{a\_id} appears in both tables: it uniquely identifies each applicant in \texttt{Applicants} and references the corresponding applicant in \texttt{Transactions}. One applicant may have many transactions (a \emph{one-to-many} relationship). This multi-table structure allows the database to store each applicant's attributes once, avoiding redundancy.

\subsection{Keys and functional dependencies}

A \emph{key} is a set of attributes that uniquely identifies each row in a table. In \texttt{Applicants}, \texttt{a\_id} is the \emph{primary key}: no two rows share the same \texttt{a\_id}. A key induces \emph{functional dependencies} (FDs): if \texttt{a\_id} uniquely identifies a row, then knowing \texttt{a\_id} determines all other attributes in that row. Formally, we write $\texttt{a\_id} \rightarrow \texttt{age}$, $\texttt{a\_id} \rightarrow \texttt{life\_stage}$, etc.

FDs are not limited to keys. Any deterministic relationship between attributes is an FD. For instance, if every 35-year-old in the data has life stage ``middle,'' every 63-year-old has life stage ``older,'' and so on, then $\texttt{age} \rightarrow \texttt{life\_stage}$ holds. This FD can be \emph{discovered} directly from the data, without any schema declaration. FD discovery is a well-studied problem in the database community~\cite{huhtala1999tane, papenbrock2015functional}.

\subsection{Queries and aggregation}

A \emph{query} retrieves and transforms data from one or more tables. The SQL query in Example~\ref{ex:motivating} joins \texttt{Applicants} with \texttt{Transactions} (matching rows by \texttt{a\_id}), then groups the result by applicant and computes the sum of transaction amounts (\texttt{SUM(t.amount) AS total\_amt}). The \texttt{GROUP BY} clause produces exactly one output row per applicant, which means the grouping attributes (here, \texttt{a\_id}) uniquely determine the aggregated value. This creates a \emph{query-induced FD}: $\texttt{a\_id} \rightarrow \texttt{total\_amt}$.

After the query executes, the identifier \texttt{a\_id} is typically dropped before model training, since identifiers carry no predictive signal. However, the FDs that \texttt{a\_id} induced remain latent in the flattened table.

\subsection{Integrity constraints beyond FDs}

Relational databases support several kinds of constraints beyond functional dependencies:

\paragraph{Domain constraints} restrict the values an attribute can take, possibly conditioned on other attributes. For example, if $\texttt{region} = \text{`North America'}$ then $\texttt{currency} \in \{\text{`USD'}, \text{`CAD'}, \text{`MXN'}\}$. These constraints capture real-world restrictions that the data must satisfy.

\paragraph{Denial constraints} express prohibitions: certain combinations of attribute values must never co-occur. A common example arises with one-hot encoded categorical variables. If a categorical attribute \texttt{gender} is encoded as three binary columns (\texttt{gender\_male}, \texttt{gender\_female}, \texttt{gender\_nonbinary}), then at most one column may equal~1 in any row. Standard Shapley estimators can produce completions like $(1, 1, 0)$, violating this mutual exclusivity.

\paragraph{Referential constraints} (foreign keys) require that a value appearing in one table must exist in another. In the example, every \texttt{a\_id} in \texttt{Transactions} must correspond to an existing \texttt{a\_id} in \texttt{Applicants}.

\subsection{Provenance}

\emph{Data provenance} tracks where each row in a derived dataset came from. When a query joins and aggregates multiple source tables to produce a flattened ML-ready table, provenance records, for each output row, which source tuples contributed to it. In Example~\ref{ex:motivating}, the row with $\texttt{age}=35$ in the flattened table was derived from applicant \texttt{a27} and transactions \texttt{t67}, \texttt{t68}, \texttt{t69}.

Provenance is useful because it recovers structural information that flattening discards. If we know that a particular output row came from a single source applicant, we can apply the source table's constraints (e.g., the schema-level FD $\texttt{a\_id} \rightarrow \texttt{empl}$) to further restrict which feature completions are valid during Shapley computation.

Provenance tracking is a well-studied area in the database community~\cite{green2007provenance, cheney2009provenance}. \RelShap uses a lightweight form of provenance that traces flattened rows back to their source tuples via identifier-induced FDs that are no longer explicit in the feature set (Section~\ref{sec:relshap}).

\subsection{What flattening loses}

ML pipelines typically receive only the final flattened table. This flattening discards several kinds of structural information:

\begin{itemize}[leftmargin=1.5em]
    \item \textbf{FDs} that were enforced by the schema or induced by the query (e.g., $\texttt{age} \rightarrow \texttt{life\_stage}$) become invisible; standard Shapley estimators freely generate combinations like $(\texttt{age}=35, \texttt{life\_stage}=\texttt{older})$ that violate them.
    \item \textbf{Domain constraints} such as valid value ranges or conditional restrictions are no longer enforced.
    \item \textbf{Denial constraints} such as one-hot mutual exclusivity are not represented in the flat schema.
    \item \textbf{Provenance} information linking each row to its source tuples is lost when identifiers are dropped.
\end{itemize}

\noindent
\RelShap recovers this information, either from the original relational schema and query (when available) or by discovering constraints directly from the data. Section~\ref{sec:relshap} describes the extraction procedure; the three sources of constraints (schema-level, query-level, and data-driven) are detailed in Section~\ref{sec:extracting_constraints}.

\section{Details about background data selection} \label{app:background_data}

\subsection{Background data selection methods}
\label{app:details_background}

Since $f$ is defined on the full feature space $\mathbb{R}^{|F|}$, the entries of $\mathbf{x}_S$ corresponding to $j \in F \setminus S$ must be filled in. Let $\mathcal{P}_{\mathrm{bg}}(\mathbf{z}_{F\setminus S})$ denote a background distribution over the absent features indexed by $F \setminus S$, let $B$ be the number of background samples (possibly all reference data points), and $\mathbf{z}_{F\setminus S}^{(1)}, \ldots, \mathbf{z}_{F\setminus S}^{(B)} \sim \mathcal{P}_{\mathrm{bg}}.$ For a coalition $S \subseteq F$, we
construct a completed input $\tilde{\mathbf{x}}(S, \mathbf{z}_{F\setminus S}) \in \mathbb{R}^{|F|}$ by
\[
\tilde{x}_j(S, \mathbf{z}_{F\setminus S}) =
\begin{cases}
x_j, & \text{if } j \in S, \\
z_j, & \text{if } j \in F \setminus S,
\end{cases}
\]
and define
\[
f(\mathbf{x}_S) =
\mathbb{E}_{\mathbf{z}_{F\setminus S} \sim \mathcal{P}_{\mathrm{bg}}}
\Bigl[
f\bigl(\tilde{\mathbf{x}}(S, \mathbf{z}_{F\setminus S})\bigr)
\Bigr].
\]
Methods differ in their choice of $\mathcal{P}_{\mathrm{bg}}$. We describe the four most representative instantiations below, and discuss the conceptual landscape they span at the end of this section.\\

\emph{Interventional approach~\cite{janzing2020feature, zern2023interventional}} treats feature absence as an intervention rather than as a conditioning event. Under the standard assumption of feature independence, the distribution of the remaining features unchanged, so $\mathcal{P}_{\mathrm{bg}}$ reduces to the empirical \emph{marginal} distribution of the training data, $\mathcal{P}_{\mathrm{bg}}^{\mathrm{Marginal}} = \pi_{F \setminus S}(\mathcal{D}_{\mathrm{train}})$; we refer to this construction as the \emph{marginal} approach throughout the paper. Default Kernel SHAP~\cite{lundbergL2017unified} is its most widely deployed implementation. Under this choice, dependencies between observed features in $\mathbf{x}$ and the imputed features in $\mathbf{z}$ are broken: the completed input $\tilde{\mathbf{x}}(S, \mathbf{z})$ may lie off the data manifold, although it may coincidentally fall on it.\\

\emph{Conditional approach~\cite{aas2021explaining}} sets $\mathcal{P}_{\mathrm{bg}}$ to the conditional distribution of the absent features given the observed ones, estimated from $\mathcal{D}_{\mathrm{train}}$ via, \eg a multivariate Gaussian assumption, a Gaussian copula, or an empirical conditional, $\mathcal{P}_{\mathrm{bg}}^{\mathrm{Conditional}} = \widehat{P}\bigl(\mathbf{Z}_{F\setminus S} \mid \mathbf{X}_S = \mathbf{x}_S \bigr)$. The estimated distribution may not faithfully recover the true underlying one and can introduce spurious dependencies, but the formulation aims to capture inter-feature dependencies and is referred to as \emph{observational}.\\

\emph{Causal approach~\cite{heskes2020causal}} extends the interventional formulation by exploiting a known causal graph among the features: $\mathcal{P}_{\mathrm{bg}}^{\mathrm{Causal}} = \widehat{P}\bigl(\mathbf{Z}_{F\setminus S} \mid do(\mathbf{X}_S=\mathbf{x}_S) \bigr)$, where Pearl's $do(\cdot)$ propagates the intervention through the assumed causal mechanisms rather than (as in the marginal case) leaving $\mathbf{Z}_{F\setminus S}$ unchanged. Two cases arise depending on the structure: \emph{cycle} (bidirectional, similar to the conditional approach) and \emph{chain} (unidirectional).\\

\emph{On-manifold approach~\cite{taufiq2023manifold}} does not specify a probabilistic form for $\mathcal{P}_{\mathrm{bg}}$. Instead, it restricts attention to a region of interest $Z$ that approximates the geometric data manifold and excludes off-manifold samples: $\mathcal{P}_{\mathrm{bg}}^{\mathrm{Manifold}} = \widehat{P}\bigl(\mathbf{Z}_{F\setminus S} \mid do(\mathbf{X}_S=\mathbf{x}_S),\; \mathbf{X}\in Z \bigr)$. Membership in $Z$ is determined by a surrogate manifold classifier $g$ (\eg via kernel density estimation), and off-manifold samples are generated through perturbation; the resulting explanations may depend on the quality of the classifier. ManifoldShap~\cite{taufiq2023manifold} is the canonical instance.\\

\subsection{Perspectives on background data selection} \label{app:perspectives_bg_selection}
The four formulations above span the \emph{true to the model} vs.\ \emph{true to the data} axis introduced in Section~\ref{sec:intro}: the marginal construction (default Kernel SHAP) stays \emph{true to the model} by ignoring inter-feature dependencies, while the conditional, causal, and on-manifold approaches each attempt, in different ways, to stay \emph{true to the data} by capturing or respecting such dependencies. \RelShap is orthogonal to this axis: rather than choosing \emph{what distribution} to put on the absent features, it constrains \emph{which configurations} the chosen distribution is allowed to place mass on. Given any base background distribution $\mathcal{P}_{\mathrm{bg}}^{\mathrm{base}}$ (marginal, conditional, causal, or manifold), \RelShap restricts its support to completions that satisfy the relational constraints $\Sigma$ extracted from the schema, query, and data (Def.~\ref{def:relshap:bg}). The two design dimensions therefore compose rather than conflict: \RelShap can be layered on top of any of the four formulations above.

By default we use the marginal (Kernel SHAP~\cite{lundbergL2017unified}) as the base, since it requires no distributional assumptions and is widely used; relational restriction then rules out completions that are infeasible under the schema and query (\eg $\mathtt{age}=35$ paired with $\mathtt{life\_stage}=\mathtt{older}$ when the FD $\mathtt{age}\to\mathtt{life\_stage}$ has been identified) without estimating any conditional density. This restriction preserves the no-distributional assumptions property advocated by \citet{sundararajan2020many} and \citet{janzing2020feature} while removing the structural off-manifold inputs \citet{fryeshapley} warns about. Appendix~\ref{app:true_data_model} formalizes this on a two-feature linear model.

\subsection{True to the data vs. True to the model} \label{app:true_data_model}

Let
\[
X_1 \sim \mathrm{Unif}\{1,2,\dots,8\},
\;\;
X_2 = g(X_1) = \left\lceil \frac{X_1}{2}\right\rceil .
\]
with
\[
\mathbb{E}[X_1]=4.5,
\;\;
\mathbb{E}[X_2]=2.5.
\]
Then fix a target point
\[
x=(x_1,x_2)=(k,b),
\;\;
b=\left\lceil \frac{k}{2}\right\rceil .
\]

\textbf{Kernel SHAP.}  
Absent features are filled independently from their marginal distributions by ignoring dependencies present in the data, which follows a \emph{true to the model} perspective.

\textbf{One-way (unidirectional) FD-aware ($X_1 \to X_2$).}  
If $X_1$ is observed, $X_2$ is deterministically recovered via $g$, following a \emph{true to the data} viewpoint.  
If only $X_2$ is observed, we assume no information about $X_1$ and fall back to the Kernel SHAP treatment (\emph{true to the model}). It aligns with the \RelShap approach when only the functional dependency is known, and also corresponds to the \textit{chain} case in ~\cite{heskes2020causal}. 

\textbf{Bidirectionally aware ($X_1 \leftrightarrow X_2$).}
This aligns with the \textit{cycle} case described in~\cite{heskes2020causal}, reflecting \emph{true to the data} perspective where observing either feature determines the other. In this example, knowing the function $g$ alone is not sufficient, since $g$ is not injective (many-to-one), and therefore an inverse function $g^{-1}$ does not exist. The preimage set
$g^{-1}(x_2)=\{x_1 \mid g(x_1)=x_2\}$
indicates that observing $X_2$ only reveals a range of possible values for $X_1$, rather than its exact value. Therefore, in this setting we assume that the dataset is fully known, i.e., the exact pairs $(X_1, X_2)$ are available.

Since the goal of this example is to illustrate how feature dependence affects attribution, we assume a simple linear model
\[
f(x_1,x_2)=\alpha x_1+\beta x_2,
\] where $\alpha, \beta \in \mathbb{R}$, and intentionally exclude nonlinear interactions or the absence of one of the features (i.e., single-feature settings discussed in~\cite{heskes2020causal}), so that the analysis focuses only on the presence of feature dependence.
In all three cases,

\[
f_{\emptyset}
=
\mathbb{E}[f(X_1,X_2)]
=
\alpha\mathbb{E}[X_1]+\beta\mathbb{E}[X_2]
=
4.5\alpha+2.5\beta.
\]
\[
f_{\{1,2\}}
=
f(k,b)
=
\alpha k+\beta b.
\]

\textbf{Kernel SHAP.}

\[
f_{\{1\}}
=
\mathbb{E}[f(k,X_2)]
=
\alpha k+\beta\mathbb{E}[X_2]
=
\alpha k+2.5\beta
\]
\[
f_{\{2\}}
=
\mathbb{E}[f(X_1,b)]
=
\alpha\mathbb{E}[X_1]+\beta b
=
4.5\alpha+\beta b
\]

The resulting Shapley values:

\[
\phi_1^{\mathrm{Marginal}}
=
\frac12(f_{\{1\}}-f_{\emptyset})
+
\frac12(f_{\{1,2\}}-f_{\{2\}})
=
\alpha(k-4.5)
\]
\[
\phi_2^{\mathrm{Marginal}}
=
\frac12(f_{\{2\}}-f_{\emptyset})
+
\frac12(f_{\{1,2\}}-f_{\{1\}})
=
\beta(b-2.5)
\]

\textbf{$X_1 \to X_2$.}

If $X_1$ is known, then $X_2$ is determined:
\[
f_{\{1\}}
=
f(k,g(k))
=
f(k,b)
=
\alpha k+\beta b
\]

If only $X_2$ is known, $X_1$ cannot be recovered since the mapping $g$ is not assumed to be known (otherwise a piecewise inverse could be specified). Thus the marginal distribution of $X_1$ is used,
\[
f_{\{2\}}
=
\mathbb{E}[f(X_1,b)]
=
\alpha\mathbb{E}[X_1]+\beta b
=
4.5\alpha+\beta b
\]

The resulting Shapley values:

\[
\phi_1^{\mathrm{FD}}
=
\frac12(f_{\{1\}}-f_{\emptyset})
+
\frac12(f_{\{1,2\}}-f_{\{2\}})
\]
\[
=
\frac12[(\alpha k+\beta b)-(4.5\alpha+2.5\beta)]
+
\frac12[(\alpha k+\beta b)-(4.5\alpha+\beta b)]
\]
\[
=
\alpha(k-4.5)+\frac{\beta}{2}(b-2.5)
\]
\[
\phi_2^{\mathrm{FD}}
=
\frac12(f_{\{2\}}-f_{\emptyset})
+
\frac12(f_{\{1,2\}}-f_{\{1\}})
\]
\[
=
\frac12[(4.5\alpha+\beta b)-(4.5\alpha+2.5\beta)]
+
\frac12[(\alpha k+\beta b)-(\alpha k+\beta b)]
\]
\[
=
\frac{\beta}{2}(b-2.5)
\]

\textbf{Difference from the marginal case.}

\[
\phi_1^{\mathrm{FD}}-\phi_1^{\mathrm{Marginal}}
=
\frac{\beta}{2}(b-2.5),
\]
\[
\phi_2^{\mathrm{FD}}-\phi_2^{\mathrm{Marginal}}
=
-\frac{\beta}{2}(b-2.5).
\]

Thus part of the attribution that would otherwise be assigned to the
derived feature $X_2$ is transferred to the root feature $X_1$ when the
dependency $X_1 \to X_2$ is respected. The exact magnitude of this effect depends on the model coefficient $\beta$ on $X_2$, and on the deviation of $b$ from its mean value $\mathbb{E}[X_2] = 2.5$.

\textbf{$X_1 \leftrightarrow X_2$.}
\[
f_{\{1\}}=f_{\{2\}}=f_{\{1,2\}}=\alpha k+\beta b
\]

The resulting Shapley values:
\[
\phi_1 = \phi_2
=
\frac12(f_{\{1,2\}}-f_{\emptyset})
=
\frac12(\alpha k+\beta b-4.5\alpha-2.5\beta)
\]

\textbf{Discussion.}

The three approaches produce qualitatively different attributions since they incorporate the dependency structure to different extents, even when all other conditions held fixed. The marginal approach ignores the relation between the variables and treats $X_1$ and $X_2$ as independently manipulable. As a result, the Shapley values are allocated according to the model coefficients, with only the mean subtracted from each feature. For $X_1 \leftrightarrow X_2$ case, the two variables become informationally indistinguishable, making the Shapley values split evenly between the two features. 
Overall, the resulting attributions still depend on the model coefficient, reflecting the \emph{true to the model} perspective. However, by incorporating the functional dependency structure ($X_1 \to X_2$), part of the attribution that would otherwise be assigned to the derived attribute $X_2$ is transferred to the root attribute $X_1$. This reflects a \emph{true to the data} perspective, where the explanation respects the dependency structure of the underlying data. Importantly, this simplified redistribution arises since we considered only two features with a single FD between them. As the number of features or FDs increases, the reallocation is governed by the joint dependency structure, making the resulting Shapley value more complex and less directly traceable.

\section{Details about coalition samplers}
\label{app:coalition_samplers}

In this section, we restate the formulations used in existing works using our notation and describe their coalition sampling procedures based on their code implementations. This clarification is necessary for our theoretical development in Section~\ref{sec:algorithms}.

\subsection{Kernel SHAP} \label{app:kernelshap}

\emph{Kernel SHAP~\cite{lundbergL2017unified}} estimates Shapley values by fitting a weighted linear surrogate whose solution coincides with $\phi(f)$. Each coalition is weighted by the \emph{Shapley kernel},
\[
w_{\mathrm{Kernel}}(S) = \frac{|F|-1}{\binom{|F|}{|S|}\, |S|\,(|F|-|S|)},
\]
which depends only on $|S|$ and concentrates weight on extreme (very small or very large) coalition sizes; coalitions are sampled accordingly. A coalition is a subset $S \subseteq F$, equivalently represented by a binary mask $z' \in \{0,1\}^{|F|}$ where $|F|$ denote the number of features considered in the explanation, and let $S(z')=\{j \in F \mid z'_j=1\}$.

Coalitions are sampled using a hybrid strategy. First, a subset size $s\in\{1,\ldots,|F|-1\}$ is drawn, excluding the trivial coalitions corresponding to the empty and full sets. These trivial cases are handled separately and computed once since there is only one coalition of each type, whereas intermediate sizes contain multiple candidates.

For non-trivial coalitions, Kernel SHAP defines a coalition sampling distribution
$\mathcal{P}^{\mathrm{Kernel}}_{\mathrm{coal}}$ by first allocating probability mass across subset sizes according to
\[
\mathcal{P}^{\mathrm{Kernel}}_{\mathrm{coal}}(|S|=s)\;\propto\;\frac{1}{s(|F|-s)}.
\]
Once a subset size $s$ is determined, a coalition $S\subseteq F$ with $|S|=s$ is sampled uniformly at random among all subsets of that size,
i.e.,
\[
\mathcal{P}^{\mathrm{Kernel}}_{\mathrm{coal}}(S\mid |S|=s)=\frac{1}{\binom{|F|}{s}}.
\]
Consequently, the overall coalition sampling distribution satisfies
\[
\mathcal{P}^{\mathrm{Kernel}}_{\mathrm{coal}}(S)
\;\propto\;
\frac{1}{\binom{|F|}{|S|}\,|S|(|F|-|S|)},
\]
which matches the size-dependent structure induced by the Shapley kernel stated below.

In practice, when the number of features is sufficiently small, Kernel SHAP deterministically enumerates all coalitions of a given size whenever the sampling budget permits. Concretely, if the requested number of samples exceeds the number of possible non-trivial coalitions (i.e., $2^{|F|}-2$), the algorithm simply enumerates the entire coalition space instead of performing random sampling. The randomized sampling
scheme described above is therefore used only when the feature dimension is large enough that full enumeration becomes infeasible.

During random sampling with replacement, duplicate coalitions may occasionally occur. Although the number of possible coalitions grows exponentially with the number of features-making such collisions relatively unlikely in practice-Kernel SHAP handles them explicitly. When a coalition that has already been observed is drawn again, the algorithm does not add a new row to the regression design matrix. Instead, it increases the weight associated with the existing coalition so that the resulting weighted regression correctly reflects the number of times that coalition was sampled.

Given a coalition budget $M$, we sample coalitions $S^{(1)},\ldots,S^{(M)} \sim \mathcal{P}_{\mathrm{coal}}^{\mathrm{Kernel}}$,
Kernel SHAP assigns each coalition a regression weight determined by the Shapley kernel:
\[
w_{\mathrm{Kernel}}\!\left(S^{(m)}\right)
\;\propto\;
\pi\bigl(|F|,|S^{(m)}|\bigr),
\]
where the Shapley kernel is a size-dependent function defined for nontrivial coalitions ($0<|S(z')|<|F|$) as:
\[
\pi(z')
=
\pi\bigl(|F|,|S(z')|\bigr)
=
\frac{|F|-1}{\binom{|F|}{|S(z')|}\,|S(z')|\,(|F|-|S(z')|)}.
\]

Then Kernel SHAP fits a linear surrogate $g(z') = \phi_0 + \sum_{j \in F} \phi_j z'_j$ by minimizing the weighted squared loss
\[
\min_{\phi_0,\{\phi_j\}_{j\in F}}
\sum_{m=1}^M
w\bigl(S^{(m)}\bigr)\,
\Bigl(
f\bigl(\tilde{\mathbf{x}}(S^{(m)})\bigr)
- \phi_0 - \sum_{j\in F} \phi_j z'^{(m)}_j
\Bigr)^2,
\]
subject to the two endpoint constraints induced by $\pi(z')=\infty$:
\[
g(\mathbf{0}) = f(\tilde{\mathbf{x}}(\emptyset))
\quad\text{and}\quad
g(\mathbf{1}) = f(\tilde{\mathbf{x}}(F)),
\]
which are equivalent to
$\phi_0 = f(\tilde{\mathbf{x}}(\emptyset))$ and
$\sum_{j\in F}\phi_j = f(\tilde{\mathbf{x}}(F)) - f(\tilde{\mathbf{x}}(\emptyset))$.
The resulting coefficients $\{\phi_j\}_{j\in F}$ are the Kernel SHAP attributions.

\subsection{Monte Carlo coalition estimator}
\label{app:mc}

\emph{Monte Carlo (MC) estimator} fixes a feature of interest $i \in F$ and samples coalitions $S \subseteq F \setminus \{i\}$ from the Shapley-weighted distribution
\[
\mathcal{P}_{\mathrm{coal}}^{\mathrm{MC}}(S \mid i) = \frac{|S|!(|F|-|S|-1)!}{|F|!}, \qquad w_{\mathrm{MC}}(S) = 1.
\]
Since $\mathcal{P}_{\mathrm{coal}}^{\mathrm{MC}}$ matches the coefficients in Eq.~\eqref{eq:eq1}, the resulting estimator is unbiased for $\phi_i(f)$. It is conceptually simple but may require many samples to converge. In practice, coalition sizes are sampled approximately uniformly. Each Monte Carlo draw first samples a pivot (a feature to explain) $i \sim \mathrm{Unif}(F).$ Conditioned on $i$, the estimator samples coalitions 
$S \subseteq F \setminus \{i\}$ by first sampling a coalition size
\[
K \sim \mathrm{Unif}(\{0,1,\ldots,|F|-1\}),
\]
and then sampling a subset $S$ uniformly at random among all size-$K$ subsets of
$F \setminus \{i\}$. This two-stage procedure induces a distribution over coalitions
$S \subseteq F \setminus \{i\}$.
For any fixed $i$ and coalition $S \subseteq F \setminus \{i\}$, the probability that $S$
is sampled is
\[
\begin{aligned}
\mathbb{P}(S \mid i)
&=
\mathbb{P}(|S|=k \mid i)\cdot\mathbb{P}(S \mid |S|=k,i) \\
&=
\frac{1}{|F|}\cdot\frac{1}{\binom{|F|-1}{|S|}}
=
\frac{|S|!(|F|-|S|-1)!}{|F|!},
\end{aligned}
\]
which exactly matches the Shapley coefficient in Eq.~\eqref{eq:eq1}.
Each Monte Carlo draw samples a pivot uniformly, a coalition size uniformly, and a subset uniformly within that size, inducing a Shapley-weighted distribution over coalitions.

\subsection{Leverage SHAP}
\label{app:leverage}

\emph{Leverage SHAP~\cite{muscoprovably}} solves the same regression formulation as Kernel SHAP but adopts a \emph{leverage score}-based sampling strategy with provable accuracy guarantees. It first selects a coalition size with approximately equal probability and then samples a coalition uniformly at random among all coalitions of that size, so all sizes appear with comparable frequency rather than the extremes being overrepresented. The regression weight (the leverage score) depends only on $|S|$. 
Let $Z \in \mathbb{R}^{\rho \times |F|}$ and $y \in \mathbb{R}^{\rho}$ denote the full regression
system (as in Kernel SHAP) over all coalitions with $0<|S|<|F|$ (which brings $\rho = 2^{|F|}-2$),
where each row corresponds to a coalition indicator vector. To enforce the efficiency constraint,
define the projection matrix $P = I - \frac{1}{|F|}\mathbf{1}\mathbf{1}^\top$ and consider the
unconstrained system $A = ZP$ and
$b = y - Z\mathbf{1}\,\frac{v(\mathbf{1})-v(\mathbf{0})}{|F|}$.

Leverage SHAP samples coalitions according to a distribution
$\mathcal{P}_{\mathrm{coal}}^{\mathrm{Leverage}}$ over $0<|S|<|F|$.
In this problem, the leverage scores admit a closed form: all coalitions of the same size share the same score, and the total sampling mass is allocated by coalition size.
Consequently, the sampling design is stratified by coalition size. Let $m_s$ denote the number of sampled coalitions of size $s$ among the $M$ draws with $\sum_{s=1}^{|F|-1} m_s = M$ for a given budget $M$. Within each stratum corresponding to coalition size $s$, $m_s$ coalitions are sampled uniformly without replacement from the $\binom{|F|}{s}$ possible coalitions of that size. Then every coalition $S$ with $|S|=s$ has the same inclusion probability
\[
\mathcal{P}_{\mathrm{coal}}^{\mathrm{Leverage}}(S)
= p(S)
=
\frac{m_{|S|}}{\binom{|F|}{|S|}}.
\]

Given the sampled coalitions $S^{(1)},\ldots,S^{(M)}$
generated by this stratified sampling design, the estimator solves a reweighted least-squares problem with per-sample weight
\[
w_{\mathrm{Leverage}}\!\left(S^{(m)}\right)
\propto
\frac{\pi\!\left(|S^{(m)}|\right)}
{\mathcal{P}_{\mathrm{coal}}^{\mathrm{Leverage}}\!\left(S^{(m)}\right)},
\]
where $\pi(\cdot)$ denotes the Shapley kernel defined in Appendix~\ref{app:kernelshap}.
The estimator then solves
{\scriptsize\[
\hat{\phi}_{\perp} =
\arg\min_{\beta}
\sum_{m=1}^{M}
w_{\mathrm{Leverage}}\!\left(S^{(m)}\right)
\Bigl(
\langle a(S^{(m)}), \beta \rangle - b(S^{(m)})
\Bigr)^2,
\]}
where $a(S)$ and $b(S)$ denote the corresponding row of $A$ and entry of $b$.
Finally, it returns
\[
\hat{\phi}
=
\hat{\phi}_{\perp}
+
\frac{v(\mathbf{1})-v(\mathbf{0})}{|F|}\mathbf{1},
\]
which satisfies the efficiency constraint by construction.
In practice, ridge regularization may be added if the subsampled normal matrix is ill-conditioned.

\subsection{Perspectives on coalition selection} \label{app:perspectives_coalition_selection}

The three estimators above span a design space defined by bias, variance, and accuracy guarantees: Kernel SHAP is faster but biased toward extreme coalition sizes; MC is unbiased but slow to converge; Leverage SHAP equalizes coverage across sizes and offers provable accuracy guarantees. Recent estimators improve this space along orthogonal directions: \citet{witterregression} reduce variance via residual estimation against a learned surrogate, while PolySHAP~\cite{fumagalli2026polyshap} and OddSHAP~\cite{fumagalli2026odd} use higher-order polynomial regression and Fourier-basis reduction, respectively, to improve accuracy and efficiency. 

\RelShap is orthogonal to all of these: rather than introducing a new sampling distribution or weighting scheme, it acts as a plug-in restriction layer that prunes the coalition space to relationally valid configurations before any estimator is applied (Def.~\ref{def:relshap:bg}). It therefore composes with any of these estimators without modifying their sampling distributions, weights, and accuracy guarantees, and contributes \emph{orthogonal acceleration} by collapsing coalitions that are equivalent under the relational constraints (Section~\ref{sec:relshap}). The most directly comparable approach is that of \citet{witter2026exact}, which likewise reduce the coalition space via equivalence classes, but using structural causal models that require explicit causal assumptions and intervention semantics; \RelShap derives its equivalence structure from declared schema constraints and observed FDs in the data, with no causal assumption required.

\section{Details about relational constraints} \label{app:details_relational_constraints}

\subsection{Data-level relational constraints} \label{app:relational_constraints}

\subsubsection{Discovered functional dependencies}
\label{app:fd_discovery}

For any $L \subseteq F$ and attribute $a \notin L$, let $\mathcal{D}/L$ denote the partition of $\mathcal{D}$ induced by equality on $L$. We say that $L \rightarrow a$ holds \emph{exactly} in $\mathcal{D}$ if, for every equivalence class $C \in \mathcal{D}/L$, attribute $a$ takes at most one distinct value within~$C$:
\[
\forall\, C \in \mathcal{D}/L,\;
\bigl|\mathrm{DISTINCT}_a(C)\bigr| \le 1,
\]
where $\mathrm{DISTINCT}_a(C)$ denotes the set of distinct values of attribute $a$ within group $C$.

We restrict the left-hand side to at most a user-specified bound---default~$2$, except for UW-CSE ($3$)---and consider only single-attribute right-hand sides; multi-attribute right-hand sides can be derived via Armstrong's axioms~\cite{armstrong1974dependency} during background data selection (Section~\ref{sec:relshap:bg_coal_main}).  We retain only minimal FDs: for a fixed $a$, if $L \rightarrow a$ holds and there exists $L' \subset L$ such that $L' \rightarrow a$ also holds, then $L \rightarrow a$ is discarded.  FDs discovered at this stage may overlap with those extracted at the schema and query levels; we retain all and let the user decide which levels to include.

\paragraph{Approximate FDs.} To tolerate limited violations, we additionally support approximate FDs.  For a candidate pair $(L,a)$, the approximation error is
\[
\mathrm{err}(L \rightarrow a)
=
\frac{\bigl|\bigl\{
C \in \mathcal{D}/L
\;\big|\;
\bigl|\mathrm{DISTINCT}_a(C)\bigr| > 1
\bigr\}\bigr|}
{|\mathcal{D}/L|}.
\]
We record $L \rightarrow a$ as an approximate FD if $\mathrm{err}(L \rightarrow a) \le \varepsilon$, where $\varepsilon$ is a user-defined tolerance threshold. While our implementation supports approximate FDs, we restrict all experiments to exact FDs.

\subsubsection{Conditional domain rules}
\label{app:domain_rules}

For each discovered FD $A \rightarrow B$ where $\mathrm{dom}(B)$ has bounded cardinality (at most a user-defined threshold, default~$20$), we iterate over values $s \in \mathrm{dom}(B)$ and construct implications of the form
\[
(B = s) \Rightarrow \varphi(A),
\]
where $\varphi(A)$ is a domain restriction on $A$ estimated from the tuples in $\mathcal{D}$ satisfying $B = s$.  Two cases arise:

\begin{itemize}
\item \textbf{Categorical~$A$.} $\varphi(A)$ restricts $A$ to the set of distinct values co-occurring with $B = s$ in the data.
\item \textbf{Continuous~$A$.} $\varphi(A)$ restricts $A$ to an empirical interval $[\ell_s, u_s]$, where $\ell_s$ and $u_s$ are computed either from the minimum and maximum values or from user-specified lower and
upper quantiles to reduce sensitivity to outliers.
\end{itemize}

\noindent
For example, from an FD $\texttt{age} \rightarrow \texttt{life\_stage}$, suppose that the observed domain of \texttt{life\_stage} in $\mathcal{D}$ is $\mathrm{dom}(\texttt{life\_stage}) = \{\texttt{young},\texttt{middle},\texttt{older}\}$. For each $s \in \mathrm{dom}(\texttt{life\_stage})$, we construct
\[
(\texttt{life\_stage}=s) \Rightarrow
\bigl(\ell_s \le \texttt{age} \le u_s \bigr),
\]
where $\ell_s$ and $u_s$ are estimated empirically from tuples in $\mathcal{D}$ with $\texttt{life\_stage}=s$.

\subsubsection{Denial constraints}
\label{app:denial_constraints}

We identify denial constraints via cycles in the exact FD set. For a set of attributes $\{A,B,C\}$, if $A \rightarrow B$, $B \rightarrow C$, and $C \rightarrow A$ all hold in $\mathcal{D}$, the attributes are mutually determined and form a cyclic dependency. For each cycle, we test for two characteristic patterns:

\begin{itemize}
\item \textbf{Scaled one-hot.} Each entry takes values in $\{0,\kappa\}$ for some $\kappa \in \mathbb{Z}_{>0}$, and in every tuple at most one column in the set is nonzero.
\item \textbf{$n$-ary exclusive or (XOR).} All entries are in $\{0,1\}$, and the sum over the $c$ columns has
a fixed parity modulo~$2$.
\end{itemize}

\noindent
\textbf{Caveat.} A categorical variable could be manually one-hot encoded by the user to avoid imposing an artificial ordinal structure (\eg encoding categories as 1, 2, 3) on inherently unordered categories.  Suppose that the model is trained directly on these columns without integrating the preprocessing step into a pipeline
using \texttt{ColumnTransformer} from the \texttt{scikit-learn} package.  In this case, Shapley value computations without background data selection in RelShap may produce infeasible inputs, since $f$ refers only to the prediction model and does not include the preprocessing transformation.  For example, under the standard
Kernel SHAP procedure~\cite{lundbergL2017unified}, if we explain an instance with $\texttt{gender\_male}=1$ and consider a female background data point, the completed input becomes
$($\texttt{gender\_male},\texttt{gender\_female},
\texttt{gender\_nonbinary}$)=(1, 1, 0)$,
which does not correspond to any valid \texttt{gender} category. Such combinations violate the implicit one-hot constraint but are not prevented by conventional SHAP implementations.  This motivates reflecting relational constraints, especially denial constraints, during background data selection.

\subsection{On the choice of reference data} \label{app:split}

We first clarify that including test data in FD discovery or lookup table (LUT) construction does not lead to a data leakage. These steps are performed solely during the Shapley value computation and are independent of model training. Referencing test data in the post-hoc explanation phase does not alter the trained model parameters or influence predictive performance.

Second, the choice of reference data itself has negligible impact when the training and full data distributions remain aligned. Prior work such as Fool SHAP~\cite{laberge2022fool} shows that SHAP explanations can be substantially distorted when the background data is adversarially manipulated or heavily skewed. Thus, the key issue is not which reference dataset is selected - $D_{\text{train}}$ or $D$ - but whether the reference distribution exhibits substantial shift from the model’s training distribution. In this work, we do not consider significant distributional shifts, and thus such choice does not materially affect the explanations and is treated as a design choice.

Existing approaches including Kernel SHAP~\cite{lundbergL2017unified} and the conditional approach~\cite{aas2021explaining}, typically use only $D_{\mathrm{train}}$ as the background data. While the intention is to respect the training data distribution on which the model was fitted, explanations are ultimately evaluated on test instances. Consequently, absent features are filled using reference data points solely from $D_{\mathrm{train}}$, which may introduce unrealistic or distributionally mismatched combinations, making such replacements semantically ambiguous.

Table~\ref{tab:4cases} summarizes the four possible combinations of these choices and their potential implications, which may or may not arise in practice depending on the specific configuration. 
In our experiments, we use $\mathcal{D}$ for discovery and $\mathcal{D}_{\mathrm{train}}$ for LUT construction by default.
We evaluate each combination along three properties: \emph{Mismatch} indicates whether the discovered relational constraints may not be realizable using values available in the LUT, potentially yielding invalid or undefined instantiations during background or coalition construction for test data points. It occurs only when $\mathrm{Discovery} \nsubseteq \mathrm{LUT}$.
\emph{Hit} indicates whether all values required to instantiate missing features of a test instance are present in the LUT, so that background data points can be constructed without failure. In particular, this is guaranteed when the LUT is constructed from the full dataset $\mathcal{D}$.
\emph{Test} indicates whether test data are used to construct either the discovered constraints or the LUT.

\begin{table}[t]
\caption{Four possible combinations of discovery data for data-driven constraint extraction and lookup table (LUT) for background data selection, evaluated in terms of possible mismatch, hit (coverage), and the use of test data. \cmark \ indicates that the corresponding phenomenon may occur, while \xmark \ denotes that it cannot occur. For mismatch and test, non-occurrence is preferable, whereas for hit, occurrence is desirable.}
\label{tab:4cases}
\centering
\begin{tabular}{c|ccc}
\hline
Discovery--LUT
& Mismatch
& Hit
& Test \\
\hline
$\mathcal{D}_{\mathrm{train}}$--$\mathcal{D}_{\mathrm{train}}$
& \xmark
& \xmark
& \xmark \\

$\mathcal{D}_{\mathrm{train}}$--$\mathcal{D}$
& \xmark
& \cmark
& \cmark \\

$\mathcal{D}$--$\mathcal{D}_{\mathrm{train}}$
& \cmark
& \cmark
& \cmark \\

$\mathcal{D}$--$\mathcal{D}$
& \xmark
& \cmark
& \cmark \\
\hline
\end{tabular}
\end{table}

\section{Additional mathematical details} \label{app:add_math_details}

\begin{definition}[$\Sigma$-coalition equivalence] \label{def:relshap:equiv}
Under the relationally consistent background semantics in Def.~\ref{def:relshap:bg}, we define an equivalence relation over coalitions. Given $\Sigma$ and a feature of interest $i \in F$, for any coalitions $S, S' \subseteq F \setminus \{i\}$, $S$ and $S'$ are \emph{$\Sigma$-equivalent}, i.e.,
$S \sim_{\Sigma} S',$
if they induce the same set of relationally valid completions under $\Sigma$.
Formally,
{\scriptsize
\[
\bigl\{
\tilde{\mathbf{x}}(S,\mathbf{z}_{F\setminus S})
\,\big|\,
\tilde{\mathbf{x}}(S,\mathbf{z}_{F\setminus S}) \models \Sigma
\bigr\}
=
\bigl\{
\tilde{\mathbf{x}}(S',\mathbf{z}_{F\setminus S'})
\,\big|\,
\tilde{\mathbf{x}}(S',\mathbf{z}_{F\setminus S'}) \models \Sigma
\bigr\}.
\]}
\end{definition}

\begin{definition}[$\Sigma$-coalition equivalence class]\label{def:relshap:equiv_class}
Following Def.~\ref{def:relshap:equiv}, for any coalition $S \subseteq F \setminus \{i\}$, 
its \emph{$\Sigma$-equivalence class} is defined as:
\[
[S]_{\Sigma}
= \{ T \subseteq F \setminus \{i\} \mid T \sim_{\Sigma} S \}.
\]
\end{definition}

\begin{definition}[$\Sigma$-coalition space]\label{def:relshap:space}
\emph{$\Sigma$-coalition space} is the quotient space of the original coalition space $\{ S \subseteq F \setminus \{i\} \}$ under the equivalence relation $\sim_{\Sigma}$, which collapses coalitions that are indistinguishable under $\Sigma$. Formally,
\[
\mathcal{C}_{\Sigma}
=
\{ S \subseteq F \setminus \{i\} \} \big/ \sim_{\Sigma}
=
\{ [S]_{\Sigma} \mid S \subseteq F \setminus \{i\} \}.
\]
\end{definition}

As noted in Section~\ref{sec:relshap:bg_coal_main}, non-trivial $\sim_{\Sigma}$ implies $|\mathcal{C}_{\Sigma}| < 2^{|F|-1}$ in the deterministic setting, without considering randomness in coalition sampling; see Appendix~\ref{app:proof:coalition_reduction} for the proof.

\section{Proofs} \label{app:proof}

\subsection{Proof of Prop.~\ref{prop:quotient_invariance}}

\begin{proof}

By Def.~\ref{def:relshap:equiv}, for any $S,S' \subseteq F\setminus\{i\}$ with
$S \sim_{\Sigma} S'$,
$f\!\left(\tilde{\mathbf{x}}(S,\mathbf{z})\right)
=f\!\left(\tilde{\mathbf{x}}(S',\mathbf{z}')\right)$
for all relationally valid completions
$\mathbf{z},\mathbf{z}'$ under $\Sigma$.
Let $\{C_1,\dots,C_K\}$ be the partition of
$\{ S \subseteq F\setminus\{i\} \}$ induced by $\sim_{\Sigma}$. Starting from the Shapley value definition and grouping coalitions by their $\Sigma$-equivalence classes, we obtain:
\[
\begin{aligned}
\phi_i
&=
\sum_{k=1}^{K}
\sum_{S \in C_k}
w(S)\,
\Big(
f(\tilde{\mathbf{x}}(S\cup\{i\},\mathbf{z}))
-
f(\tilde{\mathbf{x}}(S,\mathbf{z}))
\Big) \\
&=
\sum_{k=1}^{K}
\left(
\sum_{S \in C_k} w(S)
\right)
\Big(
f(\tilde{\mathbf{x}}(\hat{S}_k\cup\{i\},\mathbf{z}))
-
f(\tilde{\mathbf{x}}(\hat{S}_k,\mathbf{z}))
\Big),
\end{aligned}
\]
where $\hat{S}_k$ is any representative of $C_k$.
Since the original Shapley sum can be partitioned according to $\Sigma$-equivalence classes and the inner expression is identical for all $S \in C_k$, the above expression coincides with the Shapley computation over $\mathcal{C}_{\Sigma}$.
\end{proof}

\subsection{Proof of Coalition Space Reduction} \label{app:proof:coalition_reduction}
Here we prove the claim in Section~\ref{sec:relshap:bg_coal_main} that non-trivial $\sim_{\Sigma}$ implies $|\mathcal{C}_{\Sigma}| < 2^{|F|-1}$.

\begin{proof}
The original coalition domain
$\{ S \subseteq F \setminus \{i\} \}$ has cardinality
$\lvert \{ S \subseteq F \setminus \{i\} \} \rvert = 2^{|F|-1}$. The $\Sigma$-coalition space $\mathcal{C}_{\Sigma}$ consists of the
$\sim_{\Sigma}$-equivalence classes over this domain, which form a partition;
thus each element of the domain belongs to exactly one equivalence class. If $\sim_{\Sigma}$ is non-trivial, then there exist distinct
$S, S' \subseteq F \setminus \{i\}$ such that $S \sim_{\Sigma} S'$. Hence at least one equivalence class contains at least two elements.
Since the equivalence classes partition the domain, having a block of size at least two implies that the number of blocks is strictly smaller than the
number of elements being partitioned. Therefore,
$\lvert \mathcal{C}_{\Sigma} \rvert \;<\; 2^{|F|-1}.$
\end{proof}

\section{Algorithms} \label{app:algo_details}

\begin{algorithm}[t]
\caption{\textsc{CanonicalizeCoal}}
\label{alg:canon_mask}
\begin{algorithmic}[1]
\State $S \gets \{ f_j \in F \mid m_j = 1 \}$
\If{$S$ in $\textsc{CanonMap}$}
    \State $C \gets \textsc{CanonMap}[S]$
\Else
    \State $C \gets \textsc{FDClosure}(S)$ \Comment{Appendix Alg.~\ref{alg:fdclosure}}
    \State $\textsc{CanonMap}[S] \gets C$
\EndIf
\State Set $m_j \gets 1$ for all $f_j \in C$
\State \Return $m$
\end{algorithmic}
\end{algorithm}

Each coalition mask sampled by the base estimator is \emph{canonicalized}: it is expanded to its FD-closed representative under $\Sigma_{\mathrm{FD}}$, so that $\Sigma$-equivalent coalitions map to the same canonical mask (Alg.~\ref{alg:canon_mask}). 
A coalition mask $m$ is a binary vector over the features $F$: $m_j = 1$ marks feature $f_j$ as observed (provided by the base sampler) and $m_j = 0$ marks it as unobserved (to be completed from the background data points).  Alg.~\ref{alg:canon_mask} maps each mask to its FD-closed form by computing $\textsc{FDClosure}(S)$ (Appendix~Alg.~\ref{alg:fdclosure}), where $S = \{f_j \mid m_j = 1\}$, and setting to $1$ every entry in the closure.  A memoization cache $\textsc{CanonMap}$ stores computed closures, so each distinct input coalition incurs the closure computation at most once.

\begin{algorithm}[ht]
\caption{\textsc{FDClosure}: Canonicalizing a coalition under the Armstrong's axioms}
\label{alg:fdclosure}
\begin{algorithmic}[1]
\State \textbf{Input:} coalition feature set $S$
\State \textbf{Output:} FD-closed coalition $C$

\State $C \gets S$; initialize an empty queue $Q$
\For{each FD rule $L \rightarrow r$ in $\Sigma_{\mathrm{FD}}$}
    \State $\tau_{L \rightarrow r} \gets |L \setminus C|$
    \Comment{\# of unsatisfied LHS attributes}
    \If{$\tau_{L \rightarrow r} = 0$ and $r \notin C$}
        \State $C \gets C \cup \{r\}$; enqueue $r$ into $Q$
    \EndIf
\EndFor

\While{$Q$ not empty}
    \State $\mathrm{{attr}_{new}} \gets$ dequeue from $Q$
    \For{each FD rule $L \rightarrow r$ with $\mathrm{{attr}_{new}} \in L$}
        \State $\tau_{L \rightarrow r} \gets \tau_{L \rightarrow r} - 1$
        \If{$\tau_{L \rightarrow r} = 0$ and $r \notin C$}
            \State $C \gets C \cup \{r\}$; enqueue $r$ into $Q$
        \EndIf
    \EndFor
\EndWhile

\State \Return $C$
\end{algorithmic}
\end{algorithm}

In this section, we provide the algorithmic and implementation details. In Alg.~\ref{alg:fdclosure}, for each FD rule $L \rightarrow r$, the algorithm maintains a counter $\tau_{L \rightarrow r} = |L \setminus C|$,
which records how many attributes in the left-hand side (LHS) are not yet present in the current closure $C$.
When this counter reaches zero, the rule is triggered and $r$ is added to $C$.
When a new attribute $\mathrm{{attr}_{new}}$ is added to $C$, it is enqueued and used to update only those FD rules whose LHS contains $\mathrm{{attr}_{new}}$, by decrementing
their counters. This incremental propagation avoids repeatedly scanning all FD rules and yields an efficient computation of the FD closure.
As noted earlier, we restrict the right-hand side (RHS) of each FD to be a single attribute, while allowing the LHS to include up to a user-specified maximum number of attributes (default 2).

\begin{algorithm}[ht]
\caption{\textsc{BuildHits}: Build per-identifier candidate sets via inverted index}
\label{alg:buildhits}
\begin{algorithmic}[1]
\Function{BuildHits}{$D,\; S,\; \hat{\mathbf{x}},\; \tau,\; \mathrm{mode}$}
\State $\textit{hits} \gets []$
\For{each $\texttt{id} \in D$}
    \State $K_{\texttt{id}} \gets \{ f \in S \mid f \text{ determines } \texttt{id} \}$
    \If{$K_{\texttt{id}} = \emptyset$} \textbf{continue} \EndIf
    \State $\mathcal{C}_{\texttt{id}} \gets \bigcap_{f \in K_{\texttt{id}}} \mathcal{I}[\texttt{id}][f][\hat{x}_f]$
    \Comment{O$(|K_{\texttt{id}}|)$}
    \If{mode $= \textsc{Strict}$ \textbf{and} $0 < |\mathcal{C}_{\texttt{id}}| \leq \tau$}
        \State append $(\texttt{id},\; K_{\texttt{id}},\; \mathcal{C}_{\texttt{id}})$ to $\textit{hits}$
    \ElsIf{mode $= \textsc{Relaxed}$ \textbf{and} $|\mathcal{C}_{\texttt{id}}| > 0$}
        \State $R_{\texttt{id}}^{\textsf{const}} \gets$ features constant across $\mathcal{RS}[\texttt{id}][\mathcal{C}_{\texttt{id}}]$
        \State append $(\texttt{id},\; K_{\texttt{id}},\; \mathcal{C}_{\texttt{id}},\; R_{\texttt{id}}^{\textsf{const}})$ to $\textit{hits}$
    \EndIf
\EndFor
\State \Return $\textit{hits}$
\EndFunction
\end{algorithmic}
\end{algorithm}

\begin{algorithm}[ht]
\caption{\textsc{ApplyByIntersection}: Multi-identifier joint rule application}
\label{alg:applybyintersection}
\begin{algorithmic}[1]
\Function{ApplyByIntersection}{$\textit{hits}, \mathcal{RS}, S, m, \Sigma_{\mathrm{FD}},\; \tau,\; \mathrm{mode}$}
\State $\mathcal{P} \gets \bigcap_{(\texttt{id},\ldots)\,\in\,\textit{hits}} \mathcal{RS}[\texttt{id}][\mathcal{C}_{\texttt{id}}]$
\Comment{Alg.~\ref{alg:buildhits}}
\If{$\mathcal{P} = \emptyset$} \Return $m$ \EndIf

\For{each $(\texttt{id}, K_{\texttt{id}}, \mathcal{C}_{\texttt{id}}, R_{\texttt{id}}^{\textsf{const}}) \in \textit{hits}$}
\For{each $(L \rightarrow r) \in \Sigma_{\mathrm{FD}}$ involving $\texttt{id}$ with $L \subseteq S$}
    \If{(mode = \textsc{Strict} \textbf{and} $|\{\texttt{id}(p) \mid p \in \mathcal{P}\}| \leq \tau$)
        \State \textbf{or} (mode = \textsc{Relaxed} \textbf{and} $r$ is constant on $\mathcal{P}$)}
        \State $m_r \gets 1$
    \EndIf
\EndFor
\EndFor
\State \Return $m$
\EndFunction
\end{algorithmic}
\end{algorithm}

\begin{algorithm}[ht]
\caption{\textsc{ProvExpand}: Dropped-identifier based local mask expansion }
\label{alg:prov}
\begin{algorithmic}[1]
\State \textbf{Input:} mask $m$; instance $\mathbf{x}$; dropped identifiers $D$; FD set $\Sigma_{\mathrm{FD}}$ ($|\mathrm{LHS}|\le 2$); inverted index $\mathcal{I}$; row-sets $\mathcal{RS}$; threshold $\tau$; mode $\in\{\textsc{Strict},\textsc{Relaxed}\}$
\State \textbf{Output:} expanded mask $m$

\If{$(m,\,\mathbf{x}) \in \textsf{cache}$} \Return $\textsf{cache}[(m,\,\mathbf{x})]$ \EndIf
\State $S \gets \{ f_j \mid m_j = 1 \}$
\State $\hat{\mathbf{x}} \gets \textsc{Canonicalize}(\mathbf{x})$ \Comment{$\hat{\mathbf{x}}$ cached per instance}
\Statex
\Statex \textit{// Stage 1: build per-identifier candidate sets via inverted index}
\State $\textit{hits} \gets \Call{BuildHits}{D,\; S,\; \hat{\mathbf{x}},\; \tau,\; \mathrm{mode}}$

\Statex
\Statex \textit{// Stage 2: multi-identifier row intersection}
\If{$|\textit{hits}| \geq 2$}
    \State $m \gets \Call{ApplyByIntersection}{\textit{hits}, \mathcal{RS}, S, m, \allowbreak \Sigma_{\mathrm{FD}}, \tau, \mathrm{mode}}$
    \State $S \gets \{ f_j \mid m_j = 1 \}$
\EndIf

\Statex
\Statex \textit{// Stage 3: per-identifier rule application (using updated $S$)}
\For{each $\texttt{id} \in D$ with $0 < |\mathcal{C}_{\texttt{id}}| \leq \tau$}
    \For{each $(L \rightarrow r) \in \Sigma_{\mathrm{FD}}$ involving $\texttt{id}$ with $L \subseteq S$}
        \State $m_r \gets 1$
    \EndFor
\EndFor
\State $S \gets \{ f_j \mid m_j = 1 \}$

\Statex
\Statex \textit{// Stage 4: Over-threshold identifiers (mode = \textsc{Relaxed} only)}
\If{mode $= \textsc{Relaxed}$}
    \For{each $\texttt{id} \in D$}
        \State $K_{\texttt{id}} \gets \{ f \in S \mid f \text{ determines } \texttt{id} \}$
        \State $\mathcal{C}_{\texttt{id}} \gets \bigcap_{f \in K_{\texttt{id}}} \mathcal{I}[\texttt{id}][f][\hat{x}_f]$
        \If{$|\mathcal{C}_{\texttt{id}}| > \tau$}
            \State $R_{\texttt{id}}^{\textsf{const}} \gets$ features constant across $\mathcal{RS}[\texttt{id}][\mathcal{C}_{\texttt{id}}]$
            \For{each $(L \rightarrow r) \in \Sigma_{\mathrm{FD}}$ involving $\texttt{id}$ with $L \subseteq S$, $r \in R_{\texttt{id}}^{\textsf{const}}$}
                \State $m_r \gets 1$
            \EndFor
        \EndIf
    \EndFor
\EndIf

\State $\textsf{cache}[(m,\,\mathbf{x})] \gets m$
\State \Return $m$
\end{algorithmic}
\end{algorithm}

\begin{algorithm}[ht]
\caption{\textsc{ICRepair}: Repairing input data points under integrity constraints (domain and denial constraints)}
\label{alg:icrepair}
\begin{algorithmic}[1]
\For{each input data point $\mathbf{x}$}
    \State $\mathbf{x}_{\mathrm{fixed}} \gets \{x_j \mid m_j = 1 \}$ \Comment{fixed (do not modify)}
    \State $\mathbf{x}_{\mathrm{repair}} \gets \{x_j \mid m_j = 0 \}$ \Comment{repairable}

    \For{each rule $(\ell \Rightarrow r) \in \Sigma_{\mathrm{dom}}$}
        \If{$\mathbf{x}_{\mathrm{repair}}$ satisfies $\ell$ but violates $r$}
            \State modify $\mathbf{x}_{\mathrm{repair}}$ to satisfy $r$ \Comment{policy-dependent}
        \EndIf
    \EndFor

    \For{each $\psi \in \Sigma_{\mathrm{den}}$}
        \If{$\mathbf{x}_{\mathrm{repair}}$ violates $\psi$}
            \State modify only $\mathbf{x}_{\mathrm{repair}}$ to satisfy $\psi$
        \EndIf
    \EndFor
\EndFor
\State \Return $\mathbf{x}$
\end{algorithmic}
\end{algorithm}

Alg.~\ref{alg:buildhits} resolves candidate entity identifiers for each dropped identifier column using a pre-built inverted index $\mathcal{I}[\texttt{id}][f][v] \to \textit{frozenset of id-values}$, requiring $O\!\left(\sum_{\texttt{id} \in D} |K_{\texttt{id}}|\right)$
frozenset intersection operations per coalition (covering both single and composite keys on the LHS), and avoiding any per-coalition scan or groupby operation over the reference data.
Alg.~\ref{alg:applybyintersection} handles the case where two or more identifiers are simultaneously active. It intersects pre-built row-index sets $\mathcal{RS}$ (from Alg.~\ref{alg:buildhits}) across all active identifiers to obtain the jointly consistent row set $\mathcal{P}$, and applies FD rules whose conclusions are consistent within $\mathcal{P}$.
After calling \textsc{BuildHits} and \textsc{ApplyByIntersection}, Alg.~\ref{alg:prov} applies FD rules per identifier using the updated $S$, and if under \emph{Relaxed} mode, it additionally activates features that are constant across over-threshold entity groups. The result for each (mask, instance) pair is cached to avoid recomputation across the many coalitions that share the same pattern.
Alg.~\ref{alg:icrepair} only repairs values in the absent feature coordinates ($m_j=0$) so that the constructed inputs satisfy $\Sigma_{\mathrm{dom}} \cup \Sigma_{\mathrm{den}}$ under the chosen repair policy. For numeric constraints, we consider the following three repair policies:

\textbf{Minimal edit (projection).}
The value is projected to the nearest feasible point within the allowed interval.
For example, under
\[
\{\texttt{life\_stage}=\texttt{middle}\} \Rightarrow 18 \le \texttt{age} \le 64,
\]
if an instance has \texttt{life\_stage}=\texttt{middle} but has $\texttt{age}=16$, and the \texttt{age} attribute is repairable (i.e., $m_j=0$), the value is minimally adjusted to the nearest valid boundary, namely $\texttt{age}=18$, thereby enforcing the smallest possible modification.

\textbf{Random feasible resampling.}
A new value is sampled uniformly at random from feasible values observed in the lookup table (reference data), subject to satisfying the constraint. In the above example, any value appearing in the reference table within the range $[18,\,64]$ may be selected.

\textbf{Distance-weighted random resampling.}
A new feasible value is sampled from the reference table with probability proportional to a distance-based weight, so that values closer to the original value are more likely to be selected. In the above example, values such as 18 or 19 are more likely than distant values such as 60. Let $o$ be the original value and let $\mathcal{V}=\{v_1,\ldots,v_K\}$ be the set of feasible values in the lookup table.
Define $d_k = |v_k - o|$ and $w_k = \frac{1}{1+d_k}$, and sample $v_k$ with probability
$\Pr[v=v_k] = \frac{w_k}{\sum_{j=1}^K w_j}.$

For categorical constraints including denial constraints, there is typically no natural notion of distance. For example, under a constraint of the form
\[
\{\texttt{region} = \texttt{`EU'} \} \Rightarrow \texttt{currency} \in \{\texttt{EUR}, \texttt{CHF}, \texttt{GBP}\}
\] or
{\small
\[
\begin{aligned}
\texttt{gender\_male}, \texttt{\_female}, \texttt{\_nonbinary} &\in \{0,1\}, \\
\texttt{gender\_male} + \texttt{\_female} + \texttt{\_nonbinary} &= 1 .
\end{aligned}
\]}

a feasible category is selected uniformly at random from the admissible values observed in the reference data.

Furthermore, in the implementation, Kernel SHAP internally represents coalitions using masks defined only over varying features, i.e., features whose values can differ from the instance under the background distribution.
RelShap follows the same convention by first identifying such varying features and operating on the corresponding varying feature masks before applying relational canonicalization and refinement.
In addition, throughout RelShap, masks are encoded as compact bit masks to support fast canonicalization, memoization, and lookup.

\section{Closure system and lattice induced by functional dependencies} \label{app:closure_lattice}

\subsection{Closure system} \label{app:closure}

\begin{remark}
\label{rem:lattice}
Importantly, under Armstrong's axioms~\cite{armstrong1974dependency}, 
The family of FD-closed sets induced by $\operatorname{cl}_{\Sigma}$ forms a closure system (\ie it is closed under arbitrary intersections), 
and the set of closed sets ordered by inclusion ($\subseteq$) forms a lattice.
In this lattice, the meet operation corresponds to set intersection, 
and the join operation corresponds to the closure of the union of two sets \ie for any $C_1, C_2 \in \mathcal{C}_{\Sigma}$, consider an FD-based closure operator $\operatorname{cl}_{\Sigma}$ (equivalent to \textsc{FDClosure} in Appendix~Alg.~\ref{alg:fdclosure}), then
\[
C_1 \wedge C_2 = C_1 \cap C_2, 
\qquad
C_1 \vee C_2 = \operatorname{cl}_{\Sigma}(C_1 \cup C_2).
\]

In particular, $\operatorname{cl}_{\Sigma}$ is monotone: for any $S,T \subseteq F \setminus \{i\}$,
\[
S \subseteq T \;\rightarrow\; \operatorname{cl}_{\Sigma}(S) \subseteq \operatorname{cl}_{\Sigma}(T).
\]

These setups differentiate \RelShap from existing frameworks. Witter et al.~\cite{witter2026exact} also exploit equivalence classes 
and lattice structure, but in a different setting. Their equivalence classes arise from causal relationships in a structural causal model (SCM), where interventions on some variables can render others redundant, not from the Armstrong-style functional dependency closure system (\ie a Moore family) used in \RelShap.
In addition, \citet{maafa2018algorithms} study lattice-structured coalition spaces to reduce computations, but not for ML predictions. Importantly, both works redefine the coalition space of the cooperative game itself, while \RelShap operates on the original coalition space induced by estimators.

\end{remark}

\subsection{Lattice induced by functional dependencies}
\label{app:fd_lattice}

\begin{lemma}[Structure of $\mathcal{C}_{\Sigma}$ {\cite[Thm.~3.1]{demetrovics1992functional}}]

\label{lemma:c_sigma_structure} $\mathcal{C}_{\Sigma}$ can be identified with the family of $\Sigma$-closed sets as:
\[
\mathcal{C}_{\Sigma}
=
2^{F \setminus \{i\}}
\setminus
\bigcup_{L \to R \in \Sigma_{\mathrm{FD}},\, r \in R \setminus L}
[L,\; (F \setminus \{i\}) \setminus \{r\}],
\]
$ \text{where} \;
[L,\; (F \setminus \{i\}) \setminus \{r\}]
=
\{\, S \subseteq F \setminus \{i\} \mid
L \subseteq S \subseteq (F \setminus \{i\}) \setminus \{r\} \,\}.
$
\end{lemma}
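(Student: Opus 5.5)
The plan is to prove the lemma by identifying each $\Sigma$-equivalence class with a unique FD-closed set, and then characterizing closed sets by a local condition, one FD at a time. Write $U = F\setminus\{i\}$ and let $\operatorname{cl}_{\Sigma}$ be the closure operator of Remark~\ref{rem:lattice} (computed by \textsc{FDClosure}, Alg.~\ref{alg:fdclosure}).

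\textbf{Step 1: classes correspond to closed sets.} I will show that $S \sim_{\Sigma} S'$ iff $\operatorname{cl}_{\Sigma}(S)=\operatorname{cl}_{\Sigma}(S')$.

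For the "if" direction, fixing $\mathbf{x}_S$ in a completion satisfying $\Sigma_{\mathrm{FD}}$ forces every attribute of $\operatorname{cl}_{\Sigma}(S)$ to agree with $\mathbf{x}$. This follows by induction along the order in which \textsc{FDClosure} adds attributes, since each added $r$ has its LHS already fixed to $\mathbf{x}$'s values. Hence the valid completions of $S$ coincide with those of $\operatorname{cl}_{\Sigma}(S)$.

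For the "only if" direction, if the closures differ, pick $r$ in one closure but not the other. I then need a valid completion for the smaller closure in which $r$ differs from $x_r$. This is the standard Armstrong two-row construction: take a row agreeing with $\mathbf{x}$ exactly on the closed set and differing elsewhere, and check that it satisfies every FD precisely because the set is closed. This step is the main obstacle. The paper's completions draw from a reference table, so such a witness may not exist in the data. I would therefore adopt the schema-level semantics, in which any assignment satisfying $\Sigma$ is admissible, which is the semantics under which \cite{demetrovics1992functional} is stated. I would note explicitly that this identification is what "identified with" means in the lemma.

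Once Step 1 holds, $[S]_{\Sigma}\mapsto \operatorname{cl}_{\Sigma}(S)$ is a bijection from $\mathcal{C}_{\Sigma}$ onto the family of closed subsets of $U$. Its inverse sends a closed $C$ to $[C]_{\Sigma}$, because $\operatorname{cl}_{\Sigma}$ is idempotent.

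\textbf{Step 2: closed sets are exactly the sets avoiding all forbidden intervals.} A set $S\subseteq U$ is closed iff for every $L\to R\in\Sigma_{\mathrm{FD}}$, $L\subseteq S$ implies $R\subseteq S$. For closures under a set of FDs, closedness with respect to the listed FDs is equivalent to closedness under all Armstrong consequences, by soundness and completeness of Armstrong's axioms. The condition fails iff there exist $L\to R$ and $r\in R\setminus L$ with $L\subseteq S$ and $r\notin S$, that is, iff $S\in[L,\,U\setminus\{r\}]$. Pairs with $r\in L$ never violate the condition, so excluding them is harmless. Taking the complement over all such pairs yields exactly the displayed set difference.

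\textbf{Step 3: handle the feature of interest.} Attributes equal to $i$ only matter through FDs restricted to $U$. If $i\in L$, the interval is empty inside $2^{U}$. If $r=i$, then $U\setminus\{r\}=U$, and I would restrict the union to $r\neq i$, or else observe that the statement implicitly intersects with $2^{U}$. I would flag this convention briefly rather than change the formula.
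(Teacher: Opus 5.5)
Your Steps~1--2 are correct under the semantics you adopt, and they take a different, more complete route than the paper. The paper gives no proof. It cites Demetrovics et al.\ (Thm.~3.1) and adds only the remark that a set containing $L$ but missing some $r\in R$ cannot be closed, which is one direction of your Step~2. The paper never derives the identification of $\sim_\Sigma$-classes with closed sets from the completion-based Def.~\ref{def:relshap:equiv}. In practice the algorithm imposes it, by replacing every sampled coalition with $\textsc{FDClosure}(S)$ before filling absent features. Your two-row Armstrong argument supplies exactly this missing bridge. You are also right that it needs a schema-level notion of admissible completion, plus at least two values per attribute. If $\tilde{\mathbf{x}}\models\Sigma$ is read relative to all of $\mathcal{D}$ rather than to $\{\mathbf{x},\tilde{\mathbf{x}}\}$, take values not occurring in $\mathcal{D}$ off the closure. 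This is the real content that the citation hides.

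Two corrections. \textbf{First}, the finite-reference-table problem also breaks the ``if'' direction, so the schema-level reading must hold for all of Step~1. Under Def.~\ref{def:relshap:bg}, only reference rows that agree with $\mathbf{x}$ on $\operatorname{cl}_\Sigma(S)\setminus S$ yield valid completions of $S$, a filter that $\operatorname{cl}_\Sigma(S)$ does not impose. Take $a\to b$ on $\{a,b,c\}$, $\mathbf{x}=(1,1,1)$, and reference rows $(2,2,5),(1,1,7)$. The valid completions are $\{(1,1,7)\}$ for $\{a\}$ but $\{(1,1,5),(1,1,7)\}$ for $\{a,b\}$. So even the paper's own example $\{a\}\sim_\Sigma\{a,b\}$ fails under the literal definition.

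\textbf{Second}, neither of your Step~3 conventions works. Reading the statement as ``intersect with $2^{U}$'' changes nothing, since $[L,U]$ already lies in $2^{U}$. For instance, with $\Sigma=\{a\to i\}$ and $U=\{a,b\}$, the formula deletes $\{a\}$ and $\{a,b\}$, although all four coalitions are pairwise inequivalent. Dropping pairs with $r=i$ fixes that case but loses chains through $i$. For $\Sigma=\{a\to i,\ i\to b\}$, both $\{a\}$ and $\{a,b\}$ close to $\{a,b,i\}$, so $\{a\}\sim_\Sigma\{a,b\}$ and there are three classes. Your restricted formula yields four sets, and the literal one yields two. The right repair is to run Step~2 with a cover of the projection of $\Sigma^{+}$ onto $U$, that is, with the closure operator $S\mapsto\operatorname{cl}_\Sigma(S)\cap U$ on $2^{U}$. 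By monotonicity and idempotence of $\operatorname{cl}_\Sigma$, this operator induces the same partition as $\operatorname{cl}_\Sigma$. Alternatively, state explicitly the assumption, implicit in the paper, that $\Sigma_{\mathrm{FD}}$ mentions only attributes of $U$.
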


Intuitively, for each FD $L \to R$, any set that contains $L$ but omits some
attribute $r \in R$ violates the dependency and therefore cannot be a closed set. Several representative FD patterns for which closed-form expressions or simple counting formulas can be derived are given in Demetrovics \textit{et al.}~\cite{demetrovics1992functional}. These include: single FD; unary chains (derived from Prop.~6.1 in ~\cite{demetrovics1992functional}; $\Sigma=\{L_1 \to L_2,\; L_2 \to L_3,\; \ldots,\; L_{p-1} \to L_p\}$); disjoint unary chains (multiple unary chains on pairwise disjoint attribute sets); and nested left-hand sides (derived from Prop.~6.10; 
$\Sigma=\{
L_1 \to r_1,
L_2 \to r_2,
\ldots,
L_p \to r_p,\}, \;L_1 \subseteq L_2 \subseteq \cdots \subseteq L_p
$).
Here we restrict our analysis to the single FD case, and let $U$ denote the coalition universe $F\setminus\{i\}$. Moreover, following~\cite{demetrovics1985minimum}, when $\Sigma$ decomposes over attribute-disjoint blocks, both $|\mathcal C_\Sigma|$ and $N_C(k)$ can be computed by combining the corresponding quantities of each block, reducing the counting problem to smaller independent components. Specifically, if
\[
\Sigma = \bigcup_{j=1}^{p} \Sigma_j,
\;\;
\mathcal{C}_{\Sigma} \cong \prod_{j=1}^{p} \mathcal{C}_{\Sigma_j},
\]
then
\[
|\mathcal C_\Sigma|
=
\prod_{j=1}^{p} |\mathcal C_{\Sigma_j}|,
\;\;
N_C(k)
=
\sum_{\substack{k_1+\cdots+k_p = k}}
\prod_{j=1}^{p} N^{(j)}_{C_j}(k_j),
\]
where $C = (C_1,\dots,C_p)$ with $C_j \in \mathcal{C}_{\Sigma_j}$.

\begin{proposition}[Single FD {\cite[Derived from Thm.~3.1]{demetrovics1992functional}}]
\label{prop:single_fd_count}
Suppose $\Sigma = \{L \to R\}$, where $L,R \subseteq U$ and $L \cap R = \varnothing$. Then the number of $\Sigma$-closed sets is
\[
|\mathcal{C}_{\Sigma}|
=
2^{|U|}
-
2^{\,|U|-|L|}
+
2^{\,|U|-|L|-|R|}.
\]

Moreover, for each $C \in \mathcal{C}_\Sigma$,
\[
N_C(k)
=
\begin{cases}
\mathbf{1}\{k=|C|\}, & L\nsubseteq C,\\[6pt]
\displaystyle \binom{|R|}{\,k-(|C|-|R|)\,}, & L\subseteq C,
\end{cases}
\]
where $\binom{a}{b}=0$ if $b<0$ or $b>a$.

\end{proposition}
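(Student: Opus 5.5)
The count follows from Lemma~\ref{lemma:c_sigma_structure}, specialized to one FD, plus a direct description of the closure operator $\operatorname{cl}_{\Sigma}$ (\textsc{FDClosure}). I identify each class $C\in\mathcal{C}_\Sigma$ with its canonical representative, the closed set $\operatorname{cl}_\Sigma(S)$. For $\Sigma=\{L\to R\}$ with $L\cap R=\varnothing$ the closure is explicit: $\operatorname{cl}_\Sigma(S)=S$ if $L\nsubseteq S$, and $\operatorname{cl}_\Sigma(S)=S\cup R$ if $L\subseteq S$. The rule fires at most once, and adding $R$ cannot enable anything new. Hence $S$ is closed iff either $L\nsubseteq S$, or both $L\subseteq S$ and $R\subseteq S$.

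\emph{Counting closed sets.} I count the complement. A set is non-closed iff $L\subseteq S$ and $R\nsubseteq S$. There are $2^{|U|-|L|}$ sets containing $L$. Among them, $2^{|U|-|L|-|R|}$ also contain $R$, using $L\cap R=\varnothing$. Subtracting gives $2^{|U|-|L|}-2^{|U|-|L|-|R|}$ non-closed sets. Subtracting that from $2^{|U|}$ gives the stated formula. Equivalently, this is the size of the union of the intervals $[L,U\setminus\{r\}]$, $r\in R$, in Lemma~\ref{lemma:c_sigma_structure}. I would remark that the direct complement count avoids inclusion--exclusion over $r\in R$.

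\emph{Counting $N_C(k)$.} Fix a closed $C$; I need $\#\{S:|S|=k,\ \operatorname{cl}_\Sigma(S)=C\}$.
\begin{itemize}
\item If $L\nsubseteq C$: any $S$ with $\operatorname{cl}_\Sigma(S)=C$ satisfies $S\subseteq C$, so $L\nsubseteq S$. Then $\operatorname{cl}_\Sigma(S)=S$, forcing $S=C$, so $N_C(k)=\mathbf 1\{k=|C|\}$.
\item If $L\subseteq C$: then $R\subseteq C$ by closedness. The sets with closure $C$ are exactly $S=(C\setminus R)\cup T$ with $T\subseteq R$ arbitrary. Each such $S$ contains $L$, since $L\subseteq C\setminus R$, so its closure is $S\cup R=C$. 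Conversely, if $S\subseteq C$ and $L\nsubseteq S$, then $S$ is its own closure and must equal $C\supseteq L$, a contradiction; so every preimage contains $L$ and has the form above. Taking $|S|=k$ means $|T|=k-(|C|-|R|)$, which gives $\binom{|R|}{k-(|C|-|R|)}$, with the stated convention making out-of-range values zero.
\end{itemize}

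The main subtlety I expect is the second case. One must check both that every preimage contains $L$ and that distinct $T$ give distinct $S$; the latter holds because $T=S\cap R$. Everything else is routine.
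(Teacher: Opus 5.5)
Your proof is correct and follows essentially the same route as the paper: count the complement (sets containing $L$ but missing part of $R$), then do the two-case analysis of generators using $S=(C\setminus R)\cup T$ with $T\subseteq R$. You are more careful than the paper in the second case, since you check that every preimage of $C$ contains $L$ and that $T=S\cap R$ makes the parametrization injective; one small wording fix is that the union of intervals in Lemma~\ref{lemma:c_sigma_structure} has size $2^{|U|-|L|}-2^{|U|-|L|-|R|}$ (the non-closed sets you subtract), not the final count.
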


\begin{proof}
A set $S \subseteq U$ is $\Sigma$-closed iff $L \subseteq S \rightarrow R \subseteq S.$ A subset violates the dependency if it contains $L$ but omits at least one attribute of $R$. The number of sets containing $L$ is $2^{|U|-|L|}$, and among them those containing $R$ are $2^{|U|-|L|-|R|}$. Hence the number of violating sets is $2^{|U|-|L|}-2^{|U|-|L|-|R|}$, and subtracting these from $2^{|U|}$ yields the result.

If $L\nsubseteq C$, then no subset of $C$ can trigger the FD, so
$\operatorname{cl}_\Sigma(S)=S$ for all $S\subseteq C$, and thus the only generator of $C$ is $S=C$. If $L\subseteq C$, then necessarily $R\subseteq C$
since $C$ is closed, and $\operatorname{cl}_\Sigma(S)=C$ holds exactly for
$S=(C\setminus R)\cup T$ with $T\subseteq R$, which gives the binomial count.
\end{proof}

\subsection{Proof of Thm.~\ref{thm:reduction}} \label{app:runtime_reduction_projection_proof}

\begin{proof}

For each $C \in \mathcal{C}_\Sigma$, define the indicator random variable
\[
I_C = \mathbf{1}\{\exists\, m \le M \mid [S^{(m)}]_\Sigma = C\}.
\]
Then the number of distinct observed classes is
$
K_M = \sum_{C \in \mathcal{C}_\Sigma} I_C.
$ By linearity of expectation,
\[
\mathbb{E}[K_M]
= \sum_{C \in \mathcal{C}_\Sigma} \mathbb{E}[I_C]
= \sum_{C \in \mathcal{C}_\Sigma} \Pr(I_C=1)
\]
\[
=\sum_{C \in \mathcal{C}_\Sigma}
\left(1-(1-q(C))^M\right).
\]
Conditioning on the coalition size,
\[
q(C) = \sum_{k=0}^{|F|-1}
\Pr([S]_\Sigma=C \mid |S|=k)\,p_k.
\]
For every $C \in \mathcal{C}_\Sigma$, since $r(C)$ is the size of a minimal generator of $C$, we have $N_C(k)=0$ for all $k<r(C)$. Then the support of the sum can be restricted to $k\ge r(C)$,
\[
q(C)
=
\sum_{k=r(C)}^{|F|-1}
p_k\,\frac{N_C(k)}{\binom{|F|-1}{k}}.
\]
Since $S$ is drawn uniformly conditional on $|S|=k$,
\[
\Pr([S]_\Sigma=C \mid |S|=k)
=
\frac{N_C(k)}{\binom{|F|-1}{k}}.
\]

\end{proof}

This is a coupon collector-type proof~\cite{flajolet1992birthday, motwani1995randomized}.

\begin{remark}
Monte Carlo coalition sampling draws coalitions independently, and therefore the samples are i.i.d.
Kernel SHAP involves additional implementation steps including handling duplicate coalitions, but the underlying sampling procedure draws coalitions independently with replacement, and we therefore model the samples as i.i.d.
Leverage SHAP employs stratified sampling without replacement within each coalition-size stratum. Since the coalition space grows exponentially with the number of features, the probability of sampling the same coalition more than once is negligible for practical budgets. We therefore approximate the resulting sampling process by an i.i.d. coalition sampling model.
\end{remark}

\begin{remark}
Considering the Shapley value definition in Section~\ref{sec:background}, the overall runtime improvement is proportional to $R_M$ up to constant multiplicative factors. 
First, Shapley value computation requires averaging over background data, yielding a cost proportional to the number of background points used. 
Second, for Monte Carlo estimator, each marginal contribution involves both $S$ and $S \cup \{i\}$, introducing at most a factor of two (which could be smaller in practice due to caching or invariant features in the actual computation), whereas this does not apply to Kernel SHAP and Leverage SHAP since marginal contributions are implicitly accounted for within the solver and are not computed via explicit pairwise evaluations.
\end{remark}

\subsection{Estimator-specific expected runtime reduction} \label{app:estimator_specific}

\begin{corollary}[Monte Carlo coalition estimator]
\label{cor:mc}
From Section~\ref{app:mc}, 
\[
p_k=\frac{1}{|F|}, \;\;
q(C)
= \frac{1}{|F|}
\sum_{k=r(C)}^{|F|-1}
\frac{N_C(k)}{\binom{|F|-1}{k}}.
\]
\end{corollary}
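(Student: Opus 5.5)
The statement is Corollary~\ref{cor:mc}. The plan is to specialize Thm.~\ref{thm:reduction} to the Monte Carlo sampler described in Appendix~\ref{app:mc}. Only two ingredients are needed: the size marginal $p_k$ and the within-size distribution.

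\textbf{Step 1: the size marginal.} Condition on the pivot $i$. The coalition universe is $U=F\setminus\{i\}$, of size $|F|-1$. The size is drawn as $K\sim\mathrm{Unif}(\{0,1,\ldots,|F|-1\})$, which has $|F|$ support points, so $p_k=\Pr(|S|=k)=1/|F|$ for every $k\in\{0,\ldots,|F|-1\}$. Equivalently, summing the Shapley coefficient $\frac{k!(|F|-k-1)!}{|F|!}$ over the $\binom{|F|-1}{k}$ subsets of size $k$ gives $1/|F|$, which matches the stated $\mathcal{P}^{\mathrm{MC}}_{\mathrm{coal}}$.

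\textbf{Step 2: the within-size distribution.} Given $|S|=k$, the subset is uniform over the $\binom{|F|-1}{k}$ size-$k$ subsets of $U$. This is exactly the hypothesis used in the proof of Thm.~\ref{thm:reduction}, so we get $\Pr([S]_\Sigma=C\mid |S|=k)=N_C(k)/\binom{|F|-1}{k}$.

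\textbf{Step 3: substitute.} Put $p_k=1/|F|$ into the formula for $q(C)$ in Thm.~\ref{thm:reduction} and pull the constant out of the sum. The lower limit stays at $r(C)$, because $N_C(k)=0$ for $k<r(C)$. This yields the displayed formula. The expressions for $\mathbb{E}[K_M]$ and $R_M$ then follow directly from the theorem.

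\textbf{Main obstacle: independence.} The theorem assumes i.i.d.\ draws, while MC also samples the pivot $i$ uniformly at each draw. I would handle this by fixing $i$, so that $\mathcal{C}_\Sigma$ and $q(C)$ are per-pivot objects, and noting that the MC draws are independent, as stated in the remark following the proof of Thm.~\ref{thm:reduction}. No further computation should be required.
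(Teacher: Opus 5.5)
Your proposal is correct and follows the paper's route: the paper states the corollary as following directly from Appendix~\ref{app:mc}, i.e., read off $p_k = 1/|F|$ from the uniform size draw of the Monte Carlo sampler, use within-size uniformity over subsets of $F\setminus\{i\}$, and substitute into the expression for $q(C)$ in Thm.~\ref{thm:reduction}. Fixing the pivot $i$ matches the paper's per-$i$ definition of $\mathcal{C}_\Sigma$; note, though, that your closing remark about $\mathbb{E}[K_M]$ goes beyond the corollary, since with a freshly drawn pivot the number of draws landing on a fixed $i$ is random rather than exactly $M$.
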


\begin{corollary}[Kernel SHAP]
\label{cor:kernel_runtime}
From Section~\ref{app:kernelshap},
\[
p_k
=
\frac{1}{Z_{|F|-1}}\frac{1}{k(|F|-1-k)},
\] for $k=1,\dots,|F|-2$ where the normalization factor
\[
Z_{|F|-1}
=\frac{1}{|F|-1}
\sum_{t=1}^{|F|-2}
\left(
\frac{1}{t}+\frac{1}{|F|-1-t}
\right)
\]
\[
=\frac{2}{|F|-1}\sum_{t=1}^{|F|-2}\frac{1}{t}
=\frac{2H_{|F|-2}}{|F|-1}.
\]
and $H_{|F|-2}$ is the harmonic number.
\end{corollary}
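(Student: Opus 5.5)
The plan is to read off the size distribution $p_k=\Pr(|S|=k)$ from the Kernel SHAP sampler in Appendix~\ref{app:kernelshap} and then evaluate its normalizing constant in closed form. Once $p_k$ is known, it is substituted into the expression for $q(C)$ in Thm.~\ref{thm:reduction}; everything else in that theorem holds for an arbitrary size distribution, so nothing further needs proving.

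\textbf{Step 1 (setting up the universe).} First I will fix the coalition universe as in Thm.~\ref{thm:reduction}, namely $U=F\setminus\{i\}$ with $n:=|U|=|F|-1$, so that coalition sizes range over $0,\dots,n$. The Kernel SHAP sampler in Appendix~\ref{app:kernelshap} excludes the two trivial sizes, since the empty and full coalitions are evaluated once deterministically. It assigns the remaining sizes mass $\propto 1/(s(N-s))$, where $N$ is the size of the ground set being masked. Applied to $U$, this gives $p_k\propto 1/(k(n-k))$ for $k=1,\dots,n-1=|F|-2$, and $p_0=p_n=0$ for the sampled part.

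\textbf{Step 2 (normalization).} Next I will define $Z_n=\sum_{k=1}^{n-1}\frac{1}{k(n-k)}$ and use the partial-fraction identity $\frac{1}{k(n-k)}=\frac1n\left(\frac1k+\frac1{n-k}\right)$. This yields exactly the displayed form $Z_{|F|-1}=\frac{1}{|F|-1}\sum_{t=1}^{|F|-2}\left(\frac1t+\frac1{|F|-1-t}\right)$. The reindexing $t\mapsto n-t$ maps $\{1,\dots,n-1\}$ bijectively onto itself, so the two harmonic pieces are equal. This gives $Z_n=\frac{2}{n}H_{n-1}=\frac{2H_{|F|-2}}{|F|-1}$.

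\textbf{Step 3 (conclusion).} Finally, within each size the sampler chooses uniformly among the $\binom{n}{k}$ subsets of that size. This is exactly the conditional-uniformity hypothesis used in the proof of Thm.~\ref{thm:reduction}. Substituting $p_k$ into $q(C)$ then gives the estimator-specific expected reduction.

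The main obstacle is bookkeeping rather than analysis. The sampler in Appendix~\ref{app:kernelshap} is written over all of $F$, whereas the theorem works over $F\setminus\{i\}$. I need to state explicitly that the size law is applied to the universe of size $|F|-1$, and that the trivial sizes are handled outside the sampling budget. Otherwise the indices $|F|-1$ versus $|F|-2$ in the corollary appear inconsistent.
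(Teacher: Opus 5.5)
Your proposal is correct and takes the same route as the paper, whose argument is just the displayed chain: read the size law off the Kernel SHAP sampler, split $\frac{1}{k(|F|-1-k)}$ by partial fractions, and use the reindexing $t\mapsto |F|-1-t$ to get $Z_{|F|-1}=2H_{|F|-2}/(|F|-1)$. The shift from $|F|$ to $|F|-1$ that you flag is a modeling convention built into the paper's statement, since Appendix~\ref{app:kernelshap} itself samples over all of $F$, so stating it explicitly, as you plan to, is the right way to handle it.
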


\begin{corollary}[Leverage SHAP]
\label{cor:lev_runtime}
Let $m_k$ denote the number of sampled coalitions of size $k$, with
$\sum_{k=0}^{|F|-1} m_k = M$. Hence
\[
p_k=\frac{m_k}{M}, \;\; 
q(C)
= \sum_{k=r(C)}^{|F|-1}
\frac{m_k}{M}
\frac{N_C(k)}{\binom{|F|-1}{k}},
\]
\end{corollary}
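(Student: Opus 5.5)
The plan is to obtain Corollary~\ref{cor:lev_runtime} by specializing Thm.~\ref{thm:reduction}, exactly as Corollaries~\ref{cor:mc} and~\ref{cor:kernel_runtime} do. The only estimator-specific input is the size marginal $p_k=\Pr(|S|=k)$; the within-size conditional law is handled by the theorem's proof once we check that Leverage SHAP samples uniformly within each size. First, from Appendix~\ref{app:leverage}, the Leverage SHAP design is stratified by coalition size. Exactly $m_k$ of the $M$ draws have size $k$, with $\sum_k m_k=M$.

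Next, I would model the sampler as i.i.d., in line with the remark after the proof of Thm.~\ref{thm:reduction}. A single draw is the stratum of a uniformly chosen position among the $M$ slots, followed by a uniform coalition in that stratum. So a draw has size $k$ with probability $m_k/M$, which gives $p_k=m_k/M$. Within a stratum, coalitions are sampled uniformly among all coalitions of that size, so every coalition of size $k$ has the same inclusion probability $m_k/\binom{\cdot}{k}$. Hence conditionally on $|S|=k$ the draw is uniform over the size-$k$ subsets of the universe $F\setminus\{i\}$, which is exactly the hypothesis used in the proof of Thm.~\ref{thm:reduction} to write
\[
\Pr([S]_\Sigma=C\mid |S|=k)=N_C(k)\Big/\binom{|F|-1}{k}.
\]

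Then I would substitute $p_k=m_k/M$ into
\[
q(C)=\sum_{k=r(C)}^{|F|-1}p_k\,\frac{N_C(k)}{\binom{|F|-1}{k}}.
\]
Truncating at $r(C)$ remains valid because $N_C(k)=0$ for $k<r(C)$, independently of the estimator. This yields the stated formula, and $\mathbb{E}[K_M]$ and $R_M$ follow unchanged.

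The main obstacle is justifying the i.i.d.\ assumption. Within a stratum the draws are without replacement, and the counts $m_k$ are fixed rather than random, so the collision events across draws are not independent. I would handle this as the paper's remark does. I would state that the corollary holds under the i.i.d.\ surrogate with the same size marginals. I would then note that the without-replacement correction is negligible when $m_k\ll\binom{|F|-1}{k}$. If an exact statement were wanted, one could instead compute $\Pr(I_C=0)$ stratum by stratum as a product of hypergeometric terms. That would only change $1-(1-q(C))^M$, not $q(C)$, so the formula for $q(C)$ stated in the corollary is unaffected. A minor indexing point also needs care: the paper's $\binom{|F|}{s}$ versus $\binom{|F|-1}{k}$ depends on whether the universe includes $i$. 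I would fix the universe as $F\setminus\{i\}$ to match Thm.~\ref{thm:reduction}.
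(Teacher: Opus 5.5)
Your proposal is correct and follows the paper's route: the corollary comes from substituting the stratified size marginal $p_k=m_k/M$ into the $q(C)$ formula of Thm.~\ref{thm:reduction}, using uniform within-size sampling and the paper's remark that Leverage SHAP's without-replacement stratified design is approximated by an i.i.d.\ model. Two of your points are small sharpenings of what the paper leaves implicit: $q(C)$ is exact as a per-draw marginal, so only $\mathbb{E}[K_M]$ depends on the i.i.d.\ surrogate, and fixing the universe as $F\setminus\{i\}$ reconciles the $\binom{|F|}{s}$ versus $\binom{|F|-1}{k}$ indexing.
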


\subsubsection{Interpretation}
Since $M$ is fixed, the degree of runtime reduction is determined by $q(C)$. If many classes $C$ have small $q(C)$, new classes tend to appear across samples and few repetitions occur, yielding $R_M \approx 0$ and little runtime reduction. In contrast, if some classes have large $q(C)$, the same classes are repeatedly sampled and the projection merges these repetitions, increasing $R_M$.

The probability $q(C)$ is structurally determined by the interaction between the coalition-size distribution $p_k$ and the collapse counts $N_C(k)$. Here $p_k$ describes how frequently coalitions of size $k$ are sampled by the estimator, while $N_C(k)$ counts how many coalitions of size $k$ collapse to the same closure class $C$. Therefore, if $p_k$ concentrates probability mass on coalition sizes where $N_C(k)$ is large, the resulting $q(C)$ becomes large and the same classes are repeatedly sampled, leading to greater runtime reduction. Conversely, if $p_k$ spreads probability mass broadly across sizes or $N_C(k)$ remains small across levels, then $q(C)$ stays small and new classes continue to appear, resulting in little runtime reduction.

Consider Example~\ref{ex:motivating} with the discovered FD $\mathtt{age} \to \mathtt{life\_stage}$ over $\{\mathtt{age}, \mathtt{life\_stage}, \allowbreak \mathtt{empl}, \allowbreak \mathtt{total\_amt}\}$.  Without RelShap, the base sampler treats all $2^4 = 16$ coalitions as distinct.  Under quotient-space projection, any coalition containing $\mathtt{age}$ is expanded to also include
$\mathtt{life\_stage}$, so $\{\mathtt{age}\}$ and $\{\mathtt{age}, \mathtt{life\_stage}\}$ collapse to the same
equivalence class, as do $\{\mathtt{age}, \mathtt{empl}\}$ and $\{\mathtt{age}, \mathtt{life\_stage}, \mathtt{empl}\}$, and so on.  The effective number of distinct classes $|\mathcal{C}_{\Sigma}|$ drops from $16$ to $12$ (see Appendix~Prop.~\ref{prop:single_fd_count}), and any duplicate draw reuses the cached evaluation. Thm.~\ref{thm:reduction} generalizes this: $q(C)$ measures how likely a sampled coalition is to land in class $C$, and $R_M$ quantifies the expected fraction of redundant coalition evaluations eliminated by quotient projection, as a function of the sampler's size distribution $p_k$ and the FD-induced collapse
counts $N_C(k)$.

The decomposition admits a combinatorial interpretation: for each $C \in \mathcal{C}_\Sigma$, $N_C(k)$ counts the number of size-$k$ coalitions whose closure generates $C$ (\ie size-$k$ generators of $C$), and $q(C)$ aggregates these counts weighted by the sampler's size distribution $p_k$.  The collapse counts $N_C(k)$ depend on the FD structure of $\Sigma_{\mathrm{FD}}$; representative cases are given in Appendix~\ref{app:fd_lattice}.  Intuitively, FDs with small left-hand sides induce equivalence classes with small rank $r(C)$, so that many coalitions collapse to the same class, yielding large $N_C(k)$ at small sizes.  Runtime reduction therefore emerges from the interaction between the estimator's size distribution and the FD-induced collapse structure:

\begin{itemize}[leftmargin=*]
\item \emph{Kernel SHAP} places larger probability mass on extreme (small and large) coalition sizes.  Since FDs in practice tend to have small left-hand sides, the resulting classes have small $r(C)$ and large $N_C(k)$ for small $k$, precisely the sizes Kernel SHAP oversamples.  The runtime improvement is therefore amplified when $\Sigma_{\mathrm{FD}}$ contains many low-arity dependencies.
\item \emph{Monte Carlo and Leverage SHAP} distribute probability more uniformly across coalition sizes and do not strongly favor any particular $k$.  The runtime improvement depends mainly on the overall magnitude of the collapse counts $N_C(k)$ across all sizes; that is, on how many coalitions share the same closure.
\end{itemize}

\section{Details of datasets} \label{app:dataset}

For each dataset, we summarize the identifiers that are dropped (used in provenance-aware mode), together with a high-level dataset overview in Table~\ref{tab:dataset_stats}. For the runtime required to extract schema-, query-, and data-level relational constraints, including the refinement step, over the entire dataset ($T_{\mathrm{extract}}$), all datasets complete within 12 seconds except for SpeedDating and MovieLens 20M. In both cases, the cost is dominated by data-driven FD extraction—due to the large number of features ($|F|$) in SpeedDating and the large number of data points in MovieLens 20M—taking 482.687s and 98.922s, respectively.
Table~\ref{tab:dataset_sources} lists the corresponding machine learning benchmark sources (OpenML~\cite{vanschoren2014openml}, OpenML-CC18~\cite{bischl2021openml}, TabZilla~\cite{mcelfresh2023neural}, and TabRed~\cite{rubachev2024tabred}).
Entity-relationship diagrams (ERDs) with detailed queries and the full set of extracted relational constraints are provided in our supplementary material.
As expected, query execution over the normalized schema preserves the full content of the original dataset.

\textbf{Amazon employee access.}
This dataset addresses a classification task that predicts whether access to a resource is approved or denied (\texttt{ACTION}) based on role of employees and resource-related information. Since the original data is provided as a single flattened table, role (job) types identified by \texttt{ROLE\_CODE} are separated into a \texttt{Role} table, while employee's resource access request instances are stored in an \texttt{Employee} table. Each instance is assigned an artificial identifier \texttt{row\_id} to enable row-level traceability to the original data.

\textbf{Churn.}
This dataset targets a binary classification task that predicts customer churn from customer account attributes and call usage statistics. We apply a normalization to create \texttt{Customer}, \texttt{Usage}, and \texttt{Period} tables: \texttt{Customer} stores customer-level profile information, \texttt{Period} enumerates call categories (DAY, EVE, NIGHT, INTL), and \texttt{Usage} stores per-customer and per-period call usage statistics. In the final flattened table, period-specific usage records are pivoted into distinct feature columns for each customer (e.g., total\_day\_minutes, total\_eve\_calls, total\_night\_charge), reflecting the underlying normalized structure and the functional dependencies it induces.

\textbf{Churn Modelling.} This dataset addresses a binary classification task that predicts whether a bank customer leaves the bank (Exited) from customer attributes. The provided flattened table includes two identifier-like columns, \texttt{RowNumber} and \texttt{CustomerId}, which are typically removed in the downstream machine learning task; however, we use them to normalize the data into \texttt{Customer} and \texttt{Profile} tables. The \texttt{Customer} table represents customer entities and stores basic personal information, while the \texttt{Profile} table represents one record per row identified by \texttt{RowNumber}, linked to \texttt{Customer} through \texttt{CustomerId}, and stores customer financial information.

\textbf{German Credit (credit-g).} This dataset is designed for a binary classification task that predicts whether an applicant has a good or bad credit risk based on demographic and financial attributes. We apply a semantic normalization that separates applicant-level attributes into the \texttt{Applicant} table and credit application–specific attributes into the \texttt{Application} table. Each record is assigned a synthetic identifier \texttt{application\_id}, and \texttt{Application} references \texttt{Applicant} through a one-to-one relationship.

\textbf{SpeedDating.} This dataset is designed for a binary classification task that predicts whether two participants ultimately select each other (whether a match occurs), based on participant profile (survey) information and mutual evaluation data collected from experimental speed dating events. Although the original data is provided as a single flattened table, the semantics naturally separate participant-level attributes from encounter-level attributes. Accordingly, we normalize the data into three tables: \texttt{Person}, \texttt{Interaction}, and \texttt{BucketRule}.
Table \texttt{Person} stores attributes that are fixed for a participant (e.g., preference criteria)
Table \texttt{Interaction} stores encounter-level attributes observed when two participants meet (e.g., ratings of the partner)
Since the column \texttt{wave} denotes an independent experimental session, participants can interact only with others within the same \texttt{wave}. Therefore, all identifiers and joins are performed within \texttt{wave}. 
The relational structure of this dataset inherently induces a self-join on the \texttt{Person} table, since each interaction links two persons, identified by \texttt{self\_id} and \texttt{partner\_id}, within the same entity.
Since the dataset contains both continuous variables and their discretized counterparts (d\_*), we construct a \texttt{BucketRule} table that records the mapping from each base feature to its bucket label. 
At the query-level, we derive composite features such as age difference and same-race indicators, which also induce FDs. 
Participant–partner identities are recovered by enforcing reciprocal consistency within each wave. Rows without a unique match are removed, yielding 8,364 rows from the original 8,378, with each interaction represented by two directed rows.

\textbf{TPC-H.} We define a binary classification task at the supplier level that predicts whether a supplier is high risk based on operational performance signals derived from transactional data. A final table used for ML task contains per-supplier aggregates summarizing transaction volume, financial performance, delivery behavior, and return behavior, together with supplier account balance and geographic attributes (nation and region). We derive a binary label \texttt{supplier\_risk} by thresholding the late delivery rate and return rate at their 75th percentiles; a supplier is labeled positive if either rate exceeds its threshold.

\textbf{UW-CSE.}
We study a multi-class classification problem that predicts a students program phase (\texttt{inPhase}), a commonly studied task on the UW-CSE dataset, using information about advising and course enrollment. 
We derive a student-centric table in which each row summarizes a students academic context via aggregated statistics, such as the number of advisors and counts of courses at different levels, along with the students years in program. The identifier \texttt{p\_id} is kept only to index instances and is excluded from the feature set. The relational design further gives rise to a self-join on \texttt{person}, since the \texttt{advisedBy} relation associates one person with another person in different roles (student or professor).

\textbf{MovieLens 20M.}
This dataset targets a binary classification task that predicts whether a user gives a movie a high rating. 
Since the task is defined over user-movie pairs, the final table retains both \texttt{userId} and \texttt{movieId}, and includes movie attributes, genre identifiers, and a representative tag feature. The specific machine learning task and query are defined by the authors.

\textbf{Synthetic.}
This is a synthetic relational dataset constructed by the authors for a binary classification task that predicts loan approval as an extension of Example~\ref{ex:motivating}. It consists of three tables: \texttt{Applicants}, \texttt{Transactions}, and \texttt{Items}, and includes FD: \texttt{age} $\rightarrow$ \texttt{life\_stage} as well as conditional domain constraints between \texttt{region} and \texttt{currency}, designed to evaluate how RelShap incorporates conditional domain knowledge and denial constraints (\texttt{gender}). The final learning table is generated by a query with transaction-level aggregation (e.g., transaction count and total quantity) and top item selection; these operations induce FDs, and thus identifiers \texttt{a\_id} (applicant) and \texttt{t\_id} (transaction) are dropped.

\begin{table*}[t!]
\centering
\caption{ML: flattened datasets with predefined machine learning tasks (relational structure is normalized by the authors). DB: relational schemas are given and machine learning tasks are defined by the authors via queries. $|F|$: \# of input features excluding the target. Train/test splits: scikit-learn's \texttt{train\_test\_split} with an 8:2 ratio. 
Due to memory constraints, for model training, we used 2.5\% of the MovieLens 20M training sets. For background data selection, we used 400,000 samples for MovieLens 20M, and 2,000 for SpeedDating. $M_{\mathrm{conv}}$, $B$: coalition budget and the chosen background size for convergence, $T_{\mathrm{extract}}$: runtime (in seconds) taken to extract relational constraints over the entire dataset. 
}
\label{tab:dataset_stats}

\begin{tabular}{c l r r r r r r r}
\toprule
Type & Dataset & $|F|$ & \# data points & \# train & \# test & $M_{\mathrm{conv}}$ & $B$ & $T_{\mathrm{extract}} (sec)$ \\
\midrule
\multirow{5}{*}{ML} 
& Amazon & 9 & 32{,}769 & 26{,}215 & 6{,}554 & $2^7$ & $26{,}215$ & 7.139 \\
& Churn & 20 & 5{,}000 & 4{,}000 & 1{,}000 & $2^9$ & $4{,}000$ & 6.961  \\
& Churn Modelling & 11 & 10{,}000 & 8{,}000 & 2{,}000 & $2^5$ & $8{,}000$ & 6.028  \\
& German Credit & 20 & 1{,}000 & 800 & 200 & $2^8$ & $800$ & 6.513  \\
& SpeedDating & 121 & 8{,}364 & 6{,}691 & 1{,}673 & $2^9$ & $2{,}000$ & 482.687  \\
\midrule
\multirow{4}{*}{DB} 
& TPC-H & 22 & 10{,}000 & 8{,}000 & 2{,}000 & $2^8$ & $8{,}000$ & 11.533 \\
& UW-CSE & 7 & 140 & 112 & 28 & $2^6$ & 112 & 6.712  \\
& MovieLens 20M & 5 & 20{,}000{,}264 & 1{,}600{,}211 & 400{,}053 & $2^3$ & $10{,}000$ & 98.922  \\
& Synthetic & 11 & 3{,}000 & 2{,}400 & 600 & $2^6$ & $2{,}400$ & 7.107 \\
\bottomrule
\end{tabular}
\end{table*}

\begin{table*}[ht]
\centering
\caption{ML benchmark inclusion for the tabular ML datasets used in our experiments.}
\label{tab:dataset_sources}

\begin{tabular}{l c c c c}
\toprule
Dataset 
& OpenML ID
& OpenML-CC18
& TabZilla
& TabRed \\
\midrule
Amazon Employee Access & 43900 & \cmark & \cmark &  \\
Churn                 & 40701 &  & \cmark &  \\
Churn Modeling        & N/A &  & & \cmark \\
German Credit              & 31 & \cmark & \cmark & \cmark \\
SpeedDating           & 40536 &  & \cmark & \cmark \\
\bottomrule
\end{tabular}
\end{table*}

\section{Experiments} \label{app:more_experiments}

\subsection{Detailed experimental setups} \label{app:detailed_exp_setups}

\textbf{Hardware and reproducibility.} All experiments ran on a machine with an Intel Xeon Platinum 8592+ CPU (128 cores, 512\,GB RAM) and an NVIDIA A100 GPU (80\,GB VRAM) for deep learning models. Implementation details for reproducibility are in our code repository.
Table~\ref{app:tuning_space} lists the hyperparameter settings for training predictive models; since Shapley values are post-hoc and model-agnostic, we do not optimize predictive performance.
For all experiments, we use three random seeds: 2026, 2027, and 2028. Software versions, environment configuration, and full implementation details for reproducibility are available in our code repository. Availability and summary of relational constraints per dataset are summarized in Tables~\ref{tab:relshap_options} and~\ref{tab:constraints_summary}.

\textbf{Convergence protocol.} For each dataset we identify a unified convergence budget $M_{\mathrm{conv}}$ at which Shapley estimates stabilize, using Top-$3$ Jaccard distance as the convergence criterion, after verifying qualitatively similar convergence across estimators~\cite{covert2020improving, yuan2022empirical}.

\begin{table}[t]
\small
\centering
\caption{Hyperparameter settings for each model.}
\begin{tabular}{p{1.5cm} l}
\toprule
Model & Hyperparameter space \\
\midrule
Logistic Regression
& regularization parameter $\in [10^{-4}, 10^{2}]$ \\
\midrule
Random Forest 
& number of trees $\in \{300, 500, 800\}$ \\
& maximum tree depth $\in \{\text{None}, 6, 10, 16\}$ \\
& minimum samples per leaf $\in \{1, 2, 5\}$ \\
& feature subsampling \\
& $\in \{\text{sqrt}, \text{log2}, \text{None}\}$ \\
\midrule
XGBoost 
& number of trees $\in \{300, 500, 800\}$ \\
& maximum tree depth $\in \{3, 4, 6, 8\}$ \\
& learning rate $\in \{0.03, 0.05, 0.1\}$ \\
& row/feature subsampling \\
& $\in \{0.7, 0.8, 1.0\}$ \\
& minimum child weight $\in \{1, 3, 5\}$ \\
& L2 regularization strength \\
& $\in \{0.5, 1.0, 2.0\}$ \\
\midrule
MLP-PLR 
& embedding dimension $\in \{12, 24, 32\}$ \\
& \# of frequencies $\in \{8, 16, 32\}$ \\
& frequency scale $\in [0.01, 0.2]$ \\
& hidden layer sizes \\
& $\in \{(128), (256), (256,128), (512,256)\}$ \\
& dropout rate $\in \{0.0, 0.1, 0.2\}$ \\
& learning rate $\in \{3\!\times\!10^{-4}, 10^{-3}, 3\!\times\!10^{-3}\}$ \\
& weight decay $\in \{0, 10^{-5}, 10^{-4}, 10^{-3}\}$ \\
\bottomrule
\label{app:tuning_space}
\end{tabular}
\end{table}

\begin{table}[ht]
\centering
\caption{\RelShap configuration options.  Each option can be combined with any dataset, model, and coalition estimator.
  Availability depends on the constraint types present in the dataset (Table~\ref{tab:constraints_summary}).  Options above the mid-rule change the explanation; the option below it affects only running time.}
\begin{tabular}{lp{5.8cm}}
\toprule
Option & Description \\
\midrule
BG
  & Background data filtered to satisfy $\Sigma_{\mathrm{FD}}$ (Def.~\ref{def:relshap:bg}). \\
DCs
  & Domain and denial constraints ($\Sigma_{\mathrm{dom}} \cup \Sigma_{\mathrm{den}}$) enforced during background filtering. \\
Prov, Strict 
  & Identifier-induced FDs applied when the dropped identifier is uniquely determined  from the coalition. \\
Prov, Relaxed
  & Identifier-induced FDs applied when attribute values are constant across candidate identifiers (\textbf{subsumes Strict}). \\
\midrule
Quotient (Q)
  & Coalitions mapped to FD-closed canonical form; $\Sigma$-equivalent coalitions deduplicated.  \textbf{Performance optimization, Shapley values unchanged} (Prop.~\ref{prop:quotient_invariance}). \\
\bottomrule
\end{tabular}
\label{tab:relshap_options}
\end{table}

\begin{table}[ht]
\centering
\small
\caption{Summary of relational constraints per dataset. $|F|$: number of features. \# FD: total functional dependencies (breakdown $(|\mathrm{LHS}|{=}1 / 2)$). \# DCs: domain and denial constraints (lower/upper bounds counted separately). \# Prov: FDs associated with provenance-aware mode.
}
\begin{tabular}{lcccc}
\toprule
Dataset & $|F|$ & \# FD & \# DCs & \# Prov \\
\midrule
Amazon          & 9   & 4 (4/0)     & 0   & 9   \\
Churn           & 20  & 29 (25/4)   & 0   & 20  \\
Churn Modelling & 11  & 45 (3/42)   & 0   & 24  \\
German Credit   & 20  & 6 (0/6)     & 0   & 20  \\
SpeedDating     & 121 & 58 (55/3)   & 209 & 231 \\
TPC-H           & 22  & 426 (106/320) & 5 & 22  \\
UW-CSE          & 7   & 6 (6/0)     & 6   & 7   \\
MovieLens 20M   & 5   & 3 (3/0)     & 0   & 0   \\
Synthetic       & 11  & 19 (2/17)   & 9   & 29  \\
\bottomrule
\end{tabular}
\label{tab:constraints_summary}
\end{table}

\subsection{Detailed experimental results}

\subsubsection{Semantic effects} \label{app:exp:semantic_effects}

Table~\ref{tab:summary_main} summarizes explanation differences and their deviations from the randomized null baseline across datasets, predictive models, estimators, and comparison modes. Table~\ref{tab:bg_vs_bgdc_dataset} further provides the per-dataset breakdown for the comparison modes yielding the largest deviations from the null baseline.

\begin{table*}[ht]
\centering
\setlength{\tabcolsep}{5pt}

\begin{subtable}{\textwidth}
\centering
\caption{By dataset}

\begin{tabular}{lcccc}
\toprule
Dataset & Top-3 Jaccard & $1-\mathrm{RBO}$ & $\Delta_{\mathrm{Jaccard}}$ & $\Delta_{\mathrm{RBO}}$ \\
\midrule
Amazon & 0.106 $\pm$ 0.225 & 0.046 $\pm$ 0.100 & -0.294 $\pm$ 0.225 & -0.129 $\pm$ 0.100 \\
Churn & 0.446 $\pm$ 0.283 & 0.171 $\pm$ 0.109 & 0.044 $\pm$ 0.283 & 0.059 $\pm$ 0.109 \\
Churn Modelling & 0.123 $\pm$ 0.256 & 0.048 $\pm$ 0.104 & -0.277 $\pm$ 0.256 & -0.109 $\pm$ 0.104 \\
German Credit & 0.520 $\pm$ 0.288 & 0.194 $\pm$ 0.116 & 0.118 $\pm$ 0.288 & 0.082 $\pm$ 0.116 \\
SpeedDating & 0.804 $\pm$ 0.189 & 0.245 $\pm$ 0.123 & 0.403 $\pm$ 0.189 & 0.213 $\pm$ 0.123 \\
TPC-H & 0.612 $\pm$ 0.365 & 0.245 $\pm$ 0.142 & 0.212 $\pm$ 0.365 & 0.140 $\pm$ 0.142 \\
UW-CSE & 0.263 $\pm$ 0.243 & 0.121 $\pm$ 0.113 & -0.129 $\pm$ 0.243 & -0.072 $\pm$ 0.113 \\
MovieLens 20M & 0.479 $\pm$ 0.178 & 0.265 $\pm$ 0.094 & 0.129 $\pm$ 0.178 & 0.050 $\pm$ 0.094 \\
Synthetic & 0.500 $\pm$ 0.229 & 0.222 $\pm$ 0.103 & 0.099 $\pm$ 0.229 & 0.065 $\pm$ 0.103 \\
\bottomrule
\end{tabular}
\end{subtable}

\vspace{4pt}

\begin{subtable}{\textwidth}
\centering
\caption{By predictive model}

\begin{tabular}{lcccc}
\toprule
Model & Top-3 Jaccard & $1-\mathrm{RBO}$ & $\Delta_{\mathrm{Jaccard}}$ & $\Delta_{\mathrm{RBO}}$ \\
\midrule
Logistic Regression & 0.425 $\pm$ 0.361 & 0.168 $\pm$ 0.145 & 0.028 $\pm$ 0.361 & 0.033 $\pm$ 0.167 \\
Random Forest & 0.445 $\pm$ 0.331 & 0.173 $\pm$ 0.135 & 0.048 $\pm$ 0.331 & 0.039 $\pm$ 0.156 \\
XGBoost & 0.436 $\pm$ 0.337 & 0.173 $\pm$ 0.135 & 0.039 $\pm$ 0.336 & 0.039 $\pm$ 0.158 \\
MLP-PLR & 0.434 $\pm$ 0.335 & 0.170 $\pm$ 0.134 & 0.037 $\pm$ 0.335 & 0.035 $\pm$ 0.156 \\
\bottomrule
\end{tabular}
\end{subtable}

\vspace{4pt}

\begin{subtable}{\textwidth}
\centering
\caption{By estimator}

\begin{tabular}{lcccc}
\toprule
Estimator & Top-3 Jaccard & $1-\mathrm{RBO}$ & $\Delta_{\mathrm{Jaccard}}$ & $\Delta_{\mathrm{RBO}}$ \\
\midrule
Kernel SHAP & 0.301 $\pm$ 0.283 & 0.101 $\pm$ 0.097 & -0.096 $\pm$ 0.282 & -0.033 $\pm$ 0.124 \\
Leverage SHAP & 0.518 $\pm$ 0.383 & 0.214 $\pm$ 0.155 & 0.121 $\pm$ 0.383 & 0.079 $\pm$ 0.178 \\
MC & 0.488 $\pm$ 0.308 & 0.199 $\pm$ 0.124 & 0.090 $\pm$ 0.309 & 0.065 $\pm$ 0.148 \\
\bottomrule
\end{tabular}
\end{subtable}

\vspace{4pt}

\begin{subtable}{\textwidth}
\centering
\caption{By comparison mode (Default vs. Mode)}

\begin{tabular}{lcccc}
\toprule
Mode & Top-3 Jaccard & $1-\mathrm{RBO}$ & $\Delta_{\mathrm{Jaccard}}$ & $\Delta_{\mathrm{RBO}}$ \\
\midrule
BG & 0.452 $\pm$ 0.337 & 0.190 $\pm$ 0.146 & 0.057 $\pm$ 0.337 & 0.050 $\pm$ 0.156 \\
BG + DCs & 0.566 $\pm$ 0.324 & 0.216 $\pm$ 0.128 & 0.168 $\pm$ 0.322 & 0.094 $\pm$ 0.151 \\
BG + Prov, Strict & 0.377 $\pm$ 0.356 & 0.142 $\pm$ 0.139 & -0.023 $\pm$ 0.355 & 0.011 $\pm$ 0.171 \\
BG + Prov, Relaxed & 0.415 $\pm$ 0.342 & 0.155 $\pm$ 0.130 & 0.015 $\pm$ 0.342 & 0.024 $\pm$ 0.158 \\
All & 0.430 $\pm$ 0.321 & 0.172 $\pm$ 0.128 & 0.036 $\pm$ 0.322 & 0.032 $\pm$ 0.150 \\
\bottomrule
\end{tabular}
\end{subtable}

\vspace{4pt}

\caption{Summary of explanation differences (Top-3 Jaccard, $1-\mathrm{RBO}$) and deviations from the null baseline ($\Delta$), aggregated by dataset, predictive model, estimator, and comparison mode.}
\label{tab:summary_main}
\end{table*}

\begin{table*}[ht]
\centering
\setlength{\tabcolsep}{5pt}

\begin{subtable}{\textwidth}
\centering
\caption{Default vs. BG}
\begin{tabular}{lcccc}
\toprule
Dataset & Top-3 Jaccard & $1-\mathrm{RBO}$ &$\Delta_{\mathrm{Jaccard}}$ & $\Delta_{\mathrm{RBO}}$ \\
\midrule
Amazon & $0.265 \pm 0.381$ & $0.119 \pm 0.171$ & $-0.136 \pm 0.381$ & $-0.056 \pm 0.171$ \\
Churn & $0.553 \pm 0.197$ & $0.214 \pm 0.078$ & $0.151 \pm 0.197$ & $0.102 \pm 0.078$ \\
Churn Modelling & $0.297 \pm 0.427$ & $0.125 \pm 0.180$ & $-0.104 \pm 0.427$ & $-0.033 \pm 0.180$ \\
German Credit & $0.391 \pm 0.417$ & $0.161 \pm 0.177$ & $-0.011 \pm 0.417$ & $0.049 \pm 0.177$ \\
SpeedDating & $0.644 \pm 0.258$ & $0.190 \pm 0.135$ & $0.243 \pm 0.258$ & $0.158 \pm 0.135$ \\
TPC-H & $0.629 \pm 0.296$ & $0.250 \pm 0.111$ & $0.229 \pm 0.296$ & $0.144 \pm 0.111$ \\
UW-CSE & $0.335 \pm 0.240$ & $0.162 \pm 0.117$ & $-0.057 \pm 0.240$ & $-0.031 \pm 0.117$ \\
MovieLens 20M & $0.479 \pm 0.179$ & $0.265 \pm 0.095$ & $0.129 \pm 0.179$ & $0.050 \pm 0.095$ \\
Synthetic & $0.474 \pm 0.320$ & $0.222 \pm 0.147$ & $0.073 \pm 0.320$ & $0.065 \pm 0.147$ \\
\bottomrule
\end{tabular}
\end{subtable}

\vspace{4pt}

\begin{subtable}{\textwidth}
\centering
\caption{Default vs. BG + DCs}
\begin{tabular}{lcccc}
\toprule
Dataset & Top-3 Jaccard & $1-\mathrm{RBO}$ & $\Delta_{\mathrm{Jaccard}}$ & $\Delta_{\mathrm{RBO}}$ \\
\midrule
SpeedDating & $0.845 \pm 0.139$ & $0.234 \pm 0.114$ & $0.444 \pm 0.139$ & $0.202 \pm 0.114$ \\
TPC-H & $0.593 \pm 0.308$ & $0.238 \pm 0.113$ & $0.192 \pm 0.308$ & $0.133 \pm 0.113$ \\
UW-CSE & $0.335 \pm 0.240$ & $0.162 \pm 0.117$ & $-0.057 \pm 0.240$ & $-0.031 \pm 0.117$ \\
Synthetic & $0.494 \pm 0.338$ & $0.230 \pm 0.152$ & $0.093 \pm 0.338$ & $0.072 \pm 0.152$ \\
\bottomrule
\end{tabular}
\end{subtable}

\vspace{4pt}

\begin{subtable}{\textwidth}
\centering
\caption{Default vs. All}
\begin{tabular}{lcccc}
\toprule
Dataset & Top-3 Jaccard & $1-\mathrm{RBO}$ & $\Delta_{\mathrm{Jaccard}}$ & $\Delta_{\mathrm{RBO}}$ \\
\midrule
SpeedDating & $0.883 \pm 0.122$ & $0.274 \pm 0.118$ & $0.482 \pm 0.122$ & $0.242 \pm 0.118$ \\
TPC-H & $0.622 \pm 0.447$ & $0.247 \pm 0.178$ & $0.222 \pm 0.447$ & $0.142 \pm 0.178$ \\
UW-CSE & $0.372 \pm 0.270$ & $0.165 \pm 0.123$ & $-0.020 \pm 0.270$ & $-0.028 \pm 0.123$ \\
Synthetic & $0.471 \pm 0.185$ & $0.205 \pm 0.074$ & $0.070 \pm 0.185$ & $0.047 \pm 0.074$ \\
\bottomrule
\end{tabular}
\end{subtable}

\vspace{4pt}

\caption{Dataset-wise explanation differences for Default vs. BG, Default vs. BG + DCs, and Default vs. All. We report per-dataset results for the two modes with the largest $\Delta_{\mathrm{Jaccard}}$ and $\Delta_{\mathrm{RBO}}$. Values are mean $\pm$ standard deviation; results for BG + DCs and All are shown only when applicable.}

\label{tab:bg_vs_bgdc_dataset}
\end{table*}

\begin{figure*}[!ht]
  \centering
  \includegraphics[width=\textwidth]{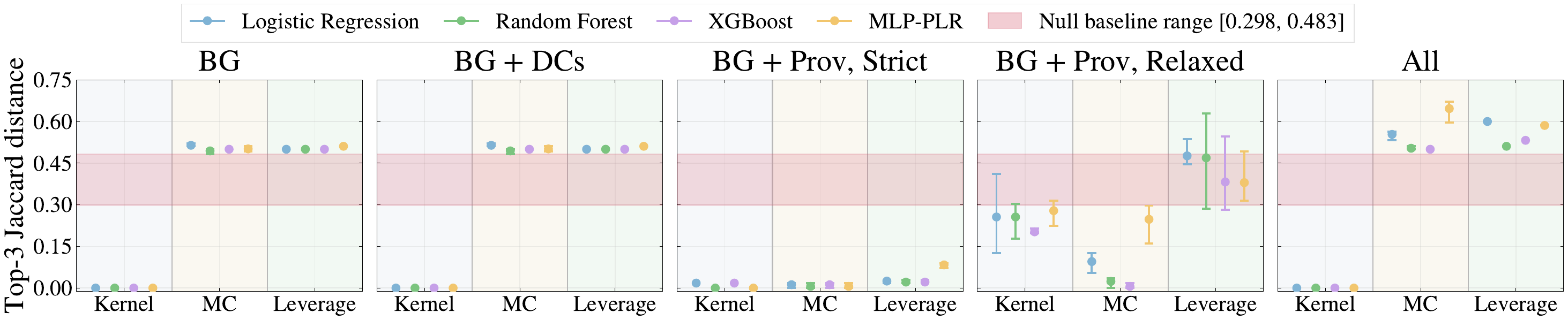}
  \caption{Top-$3$ Jaccard distance between default estimators and RelShap configurations on UW-CSE dataset. Values above the shaded region indicate that RelShap induces larger changes than expected under the randomized null baseline.}
  \label{fig:uwcse_jaccard}
\end{figure*}

\begin{figure}[!ht]
  \centering
  \includegraphics[width=\linewidth]{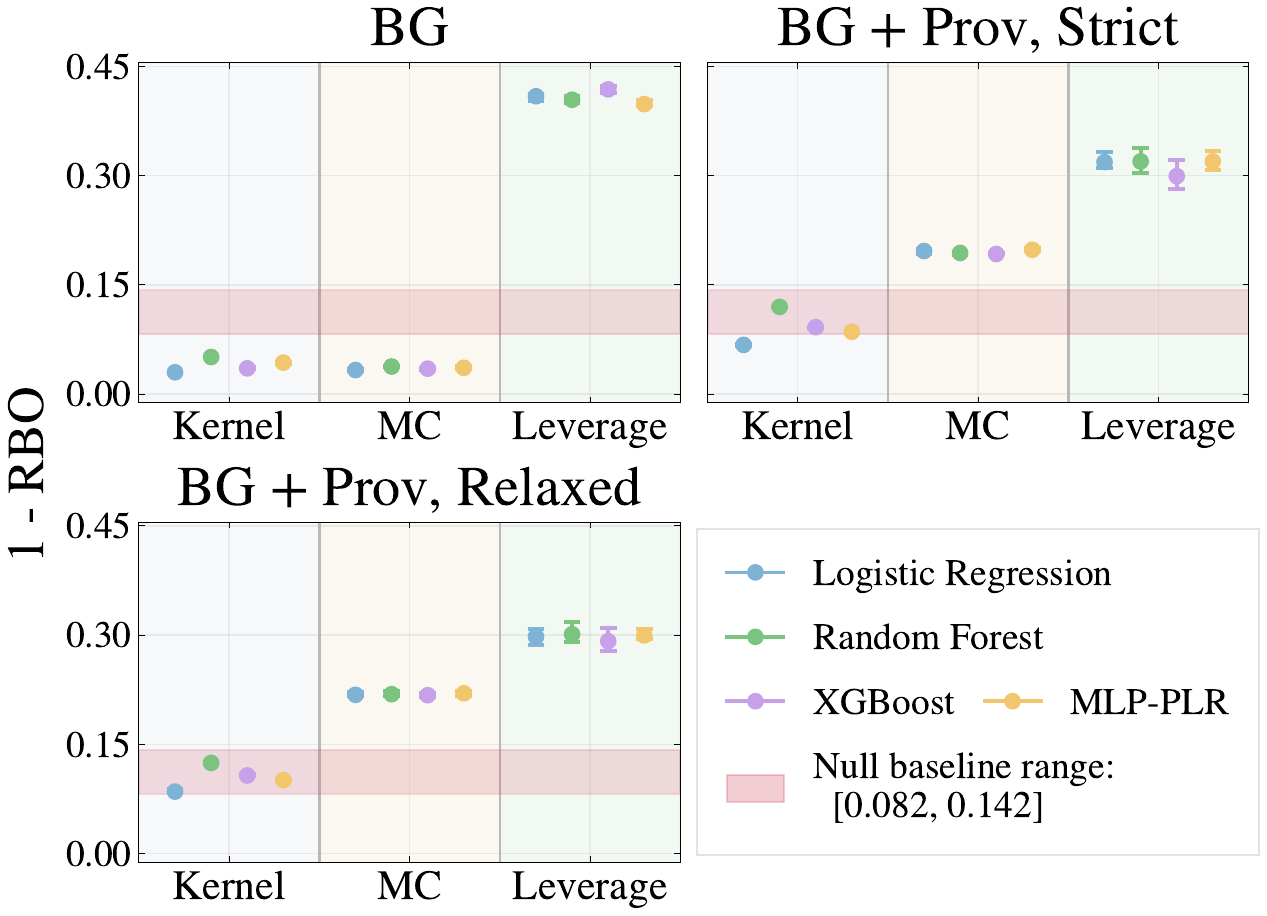}
  \caption{$1-\mathrm{RBO}$ between default estimators and \RelShap configurations on German Credit. Values above the shaded region exceed the randomized null baseline. This dataset has no domain or denial constraints, so those modes are omitted.}
  \label{fig:credit_rbo}
\end{figure}

\begin{figure*}[!ht]
  \centering
  \includegraphics[width=\textwidth]{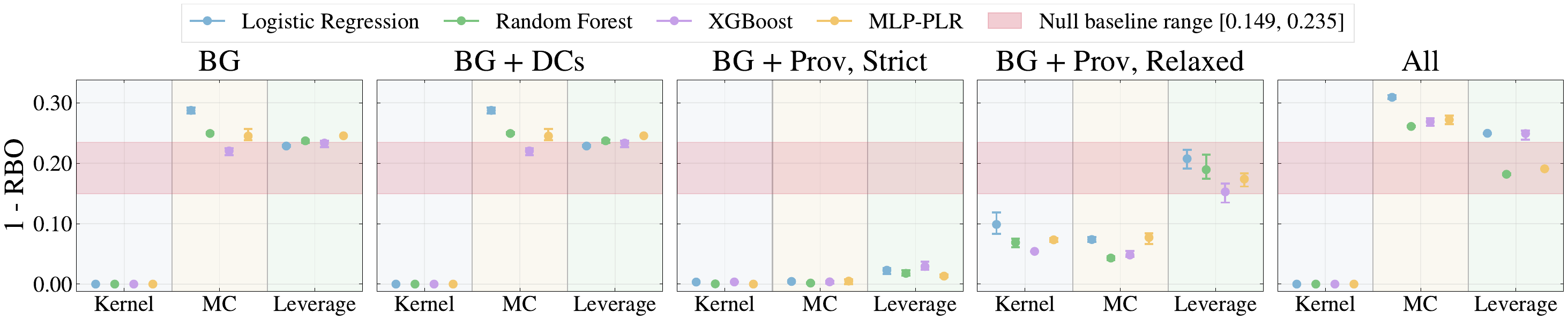}
  \caption{$1-\mathrm{RBO}$ between default estimators and RelShap configurations on UW-CSE dataset. Values above the shaded region indicate that RelShap induces larger changes than expected under the randomized null baseline.}
  \label{fig:uwcse_rbo}
\end{figure*}

\begin{figure}[!ht]
  \centering
  \includegraphics[width=\linewidth]{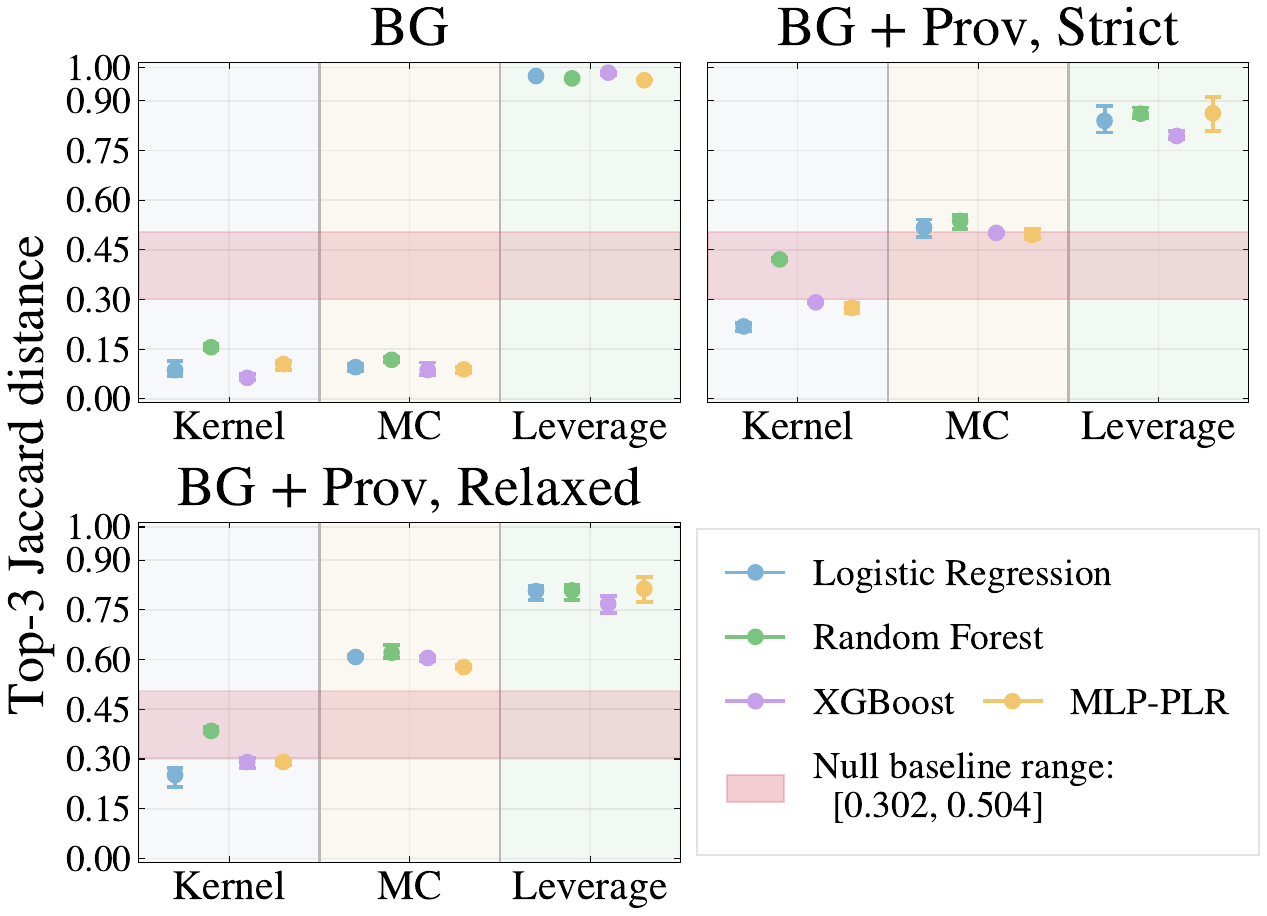}
  \caption{Top-3 Jaccard distance between default estimators and RelShap configurations on German Credit dataset. Values above the shaded region indicate that RelShap induces larger changes than expected under the randomized null baseline. Note that German Credit dataset does not contain any domain or denial constraints and therefore the results are not provided.}
  \label{fig:credit_jaccard}
\end{figure}

Figures~\ref{fig:uwcse_jaccard} and~\ref{fig:credit_rbo} illustrate two complementary regimes. UW-CSE is a small database (DB) dataset ($|F|{=}7$) where we can afford full coalition enumeration ($M_{\mathrm{conv}} = 2^{|F|-1}$), providing an exact comparison; German Credit is a medium-scale ML dataset ($|F|{=}20$) with sampled coalitions.  In both cases, several \RelShap configurations produce distances that exceed the shaded null-baseline region, indicating shifts beyond what random perturbation can explain.

Note that distances do not necessarily increase as more constraint types are added: each configuration changes \emph{which} background points are admissible, and the resulting rankings can move in any direction relative to the default. On UW-CSE, for instance, all features are functionally determined by the dropped identifier \texttt{p\_id}, so provenance-aware mode collapses the background to near-deterministic completions. In this setting, the resulting explanation differences from the default sampler, which draws from the small training set ($|\mathcal{D}_{\mathrm{train}}| = 112$), remain at or below the null baseline. In contrast, on TPC-H, \texttt{suppkey} induces richer cross-table structure over a larger pool ($|\mathcal{D}_{\mathrm{train}}| = 8{,}000$), and provenance-aware mode produces consistently large positive $\Delta$ values (Appendix Table~\ref{tab:prov_tpch_uwcse_xgboost}). The complementary metric pairing---$1{-}\mathrm{RBO}$ on UW-CSE and Top-$3$ Jaccard on German Credit---is in Appendix
Figures~\ref{fig:uwcse_rbo} and~\ref{fig:credit_jaccard}.

\begin{table*}[ht]
\centering
\small
\setlength{\tabcolsep}{1pt}

\textbf{(a) Default vs. BG + Prov}
\vspace{4pt}

\begin{tabular}{llcccc}

\toprule
Dataset & Est. & Top-3 Jaccard & $1-\mathrm{RBO}$ & $\Delta_{\mathrm{Jaccard}}$ & $\Delta_{\mathrm{RBO}}$ \\
\midrule
\multirow{3}{*}{TPC-H}
& Ker & 0.922 / 0.914 & 0.376 / 0.376 & 0.522 / 0.514 & 0.271 / 0.271 \\
& MC   & 0.954 / 0.959 & 0.388 / 0.376 & 0.553 / 0.558 & 0.283 / 0.270 \\
& Lev & 0.000 / 0.000 & 0.000 / 0.000 & -0.401 / -0.401 & -0.105 / -0.105 \\
\midrule
\multirow{3}{*}{UW-CSE}
& Ker & 0.018 / 0.202 & 0.004 / 0.054 & -0.374 / -0.190 & -0.190 / -0.139 \\
& MC   & 0.012 / 0.006 & 0.004 / 0.048 & -0.380 / -0.386 & -0.189 / -0.145 \\
& Lev & 0.021 / 0.382 & 0.029 / 0.153 & -0.371 / -0.010 & -0.164 / -0.040 \\
\midrule
\end{tabular}

\vspace{4pt}

\textbf{(b) BG vs. BG + Prov}

\vspace{4pt}

\begin{tabular}{llcccc}
\toprule
Dataset & Est. & Top-3 Jaccard & $1-\mathrm{RBO}$ & $\Delta_{\mathrm{Jaccard}}$ & $\Delta_{\mathrm{RBO}}$ \\
\midrule
\multirow{3}{*}{TPC-H}
& Ker & 0.914 / 0.924 & 0.356 / 0.361 & 0.513 / 0.523 & 0.250 / 0.255 \\
& Lev & 0.965 / 0.965 & 0.369 / 0.369 & 0.565 / 0.565 & 0.263 / 0.263 \\
& MC   & 0.881 / 0.883 & 0.316 / 0.310 & 0.480 / 0.482 & 0.211 / 0.205 \\
\midrule
\multirow{3}{*}{UW-CSE}
& Ker. & 0.018 / 0.202 & 0.004 / 0.054 & -0.374 / -0.190 & -0.190 / -0.139 \\
& Lev & 0.504 / 0.326 & 0.233 / 0.223 & 0.112 / -0.066 & 0.040 / 0.030 \\
& MC   & 0.504 / 0.500 & 0.224 / 0.242 & 0.112 / 0.108 & 0.031 / 0.049 \\
\bottomrule
\end{tabular}

\caption{Explanation differences for provenance-aware comparisons on TPC-H and UW-CSE with XGBoost. Each entry is reported as Strict / Relaxed mode. Est. denotes Estimator and Ker, MC, and Lev denote Kernel SHAP, Monte Carlo, and Leverage SHAP, respectively.}
\label{tab:prov_tpch_uwcse_xgboost}
\end{table*}

\subsubsection{Computational effects} \label{app:exp:computational_effects} 

We measure the relative reduction in runtime per explanation sample as $(T_{\text{ref}} - T_{\text{target}})/T_{\text{ref}} \times 100\,(\%)$, where $T_{\text{ref}}$ denotes the runtime of the reference method (Default or RelShap (BG)) and $T_{\text{target}}$ denotes that of RelShap (BG/Q). Figure~\ref{fig:runtime_vs_speedup_combined} presents runtime reduction versus the empirical speedup factor. 
Figure~\ref{fig:uwcse_rbo} shows $1-\mathrm{RBO}$ results on the UW-CSE dataset, and Figure~\ref{fig:credit_jaccard} shows Top-3 Jaccard distance on the German Credit dataset, complementing the main text with the opposite pairing.
Figure~\ref{fig:runtime_scatter_app} shows pairwise runtime comparisons across all datasets, models, and comparison modes. Overall, runtime reduction is observed in a large fraction of cases, with Monte Carlo and Leverage SHAP showing particularly strong gains, where reduction occurs in 91.55\% and 84.72\% of cases, respectively, while Kernel SHAP shows a lower rate of 43.06\%. This pattern indicates that the benefit of quotient space coalition projection in RelShap is estimator-dependent.

\begin{figure*}[ht]
  \centering
  \includegraphics[width=\textwidth, height=235pt]{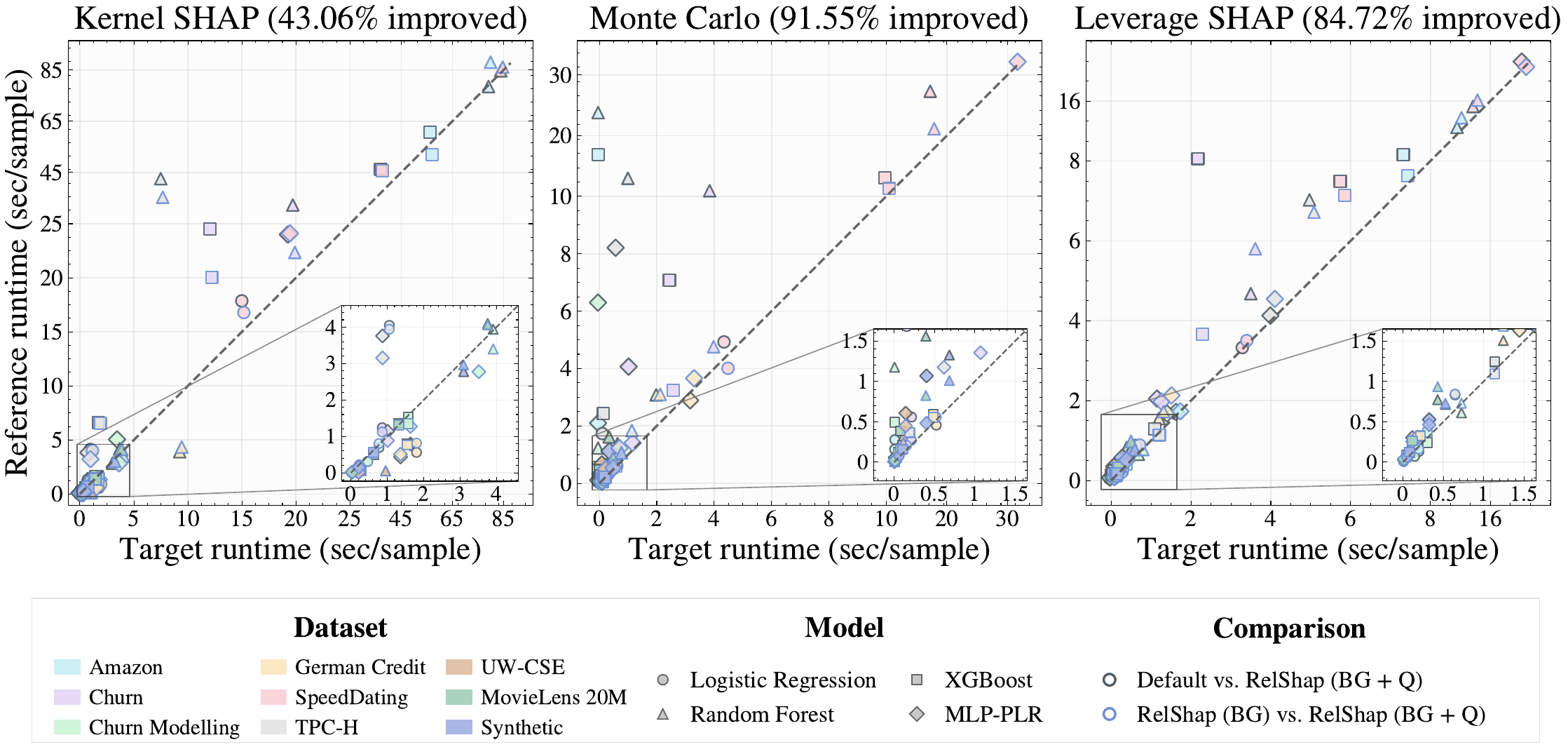}
  \caption{Runtime comparison between the reference method and RelShap (BG/Quotient). Each point corresponds to a dataset–model–configuration pair. Points above the diagonal indicate runtime reduction.}
\label{fig:runtime_scatter_app}
\end{figure*}

\begin{table}[ht]
\centering
\setlength{\tabcolsep}{5pt}
\renewcommand{\arraystretch}{0.95}
\caption{Percentage of cases with runtime reduction across estimators and models.}
\begin{tabular}{lcc}
\toprule
Estimator / Model & Default vs. BG/Q & BG vs. BG/Q \\
\midrule
Kernel SHAP   & 47.22\% & 42.59\% \\
Monte Carlo   & 95.24\% & 85.19\% \\
Leverage SHAP & 81.48\% & 84.26\% \\
\midrule
Logistic Regression & 69.14\% & 64.20\% \\
Random Forest       & 80.25\% & 80.25\% \\
XGBoost             & 85.19\% & 75.31\% \\
MLP-PLR             & 62.82\% & 62.96\% \\
\bottomrule
\end{tabular}
\label{tab:runtime_specifics}
\end{table}

Table~\ref{tab:runtime_by_dataset} summarizes the frequency of runtime reduction across datasets. The runtime reduction is consistently observed for Monte Carlo and Leverage SHAP across most datasets, often exceeding 90\%. In contrast, Kernel SHAP shows higher variability, with little to no reduction on some datasets (e.g., German Credit and UW-CSE) but strong improvements on others (e.g., SpeedDating and TPC-H). Building on the analysis in Section~\ref{sec:algorithms}, we observe that, in general, runtime reduction becomes more pronounced as the total number of FDs increases. In particular, for Kernel SHAP, roughly the benefit tends to emerge only when there is a sufficiently large number of FDs with $|{\mathrm{LHS}}|=1$ (as shown in Table~\ref{tab:constraints_summary}), which is more evident for Kernel SHAP than for the other two estimators.

\begin{figure*}[ht]
\centering

\begin{subfigure}[t]{0.32\textwidth}
  \centering
  \includegraphics[width=\linewidth]{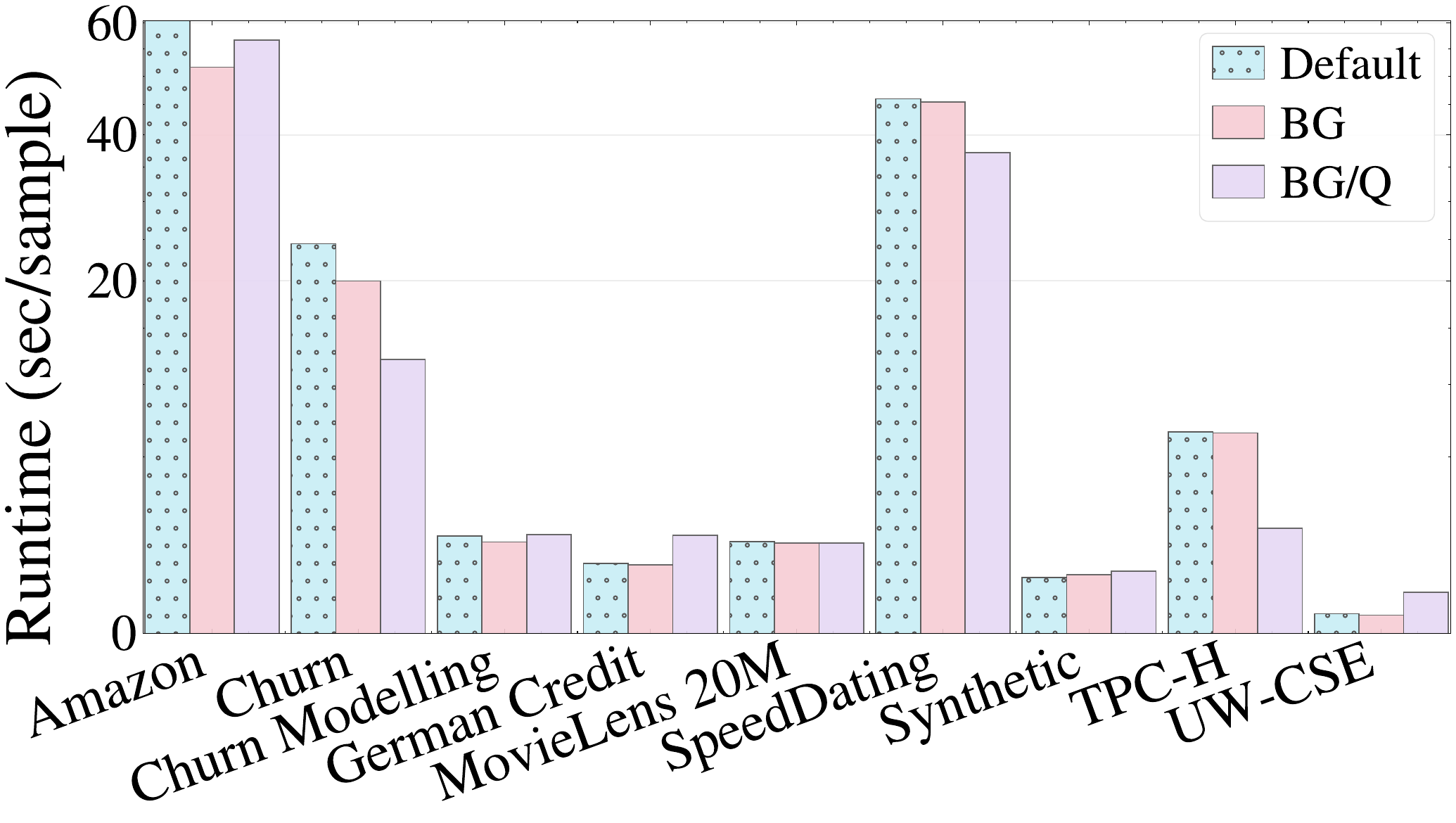}
  \caption{Kernel SHAP.}
\end{subfigure}
\hfill
\begin{subfigure}[t]{0.32\textwidth}
  \centering
  \includegraphics[width=\linewidth]{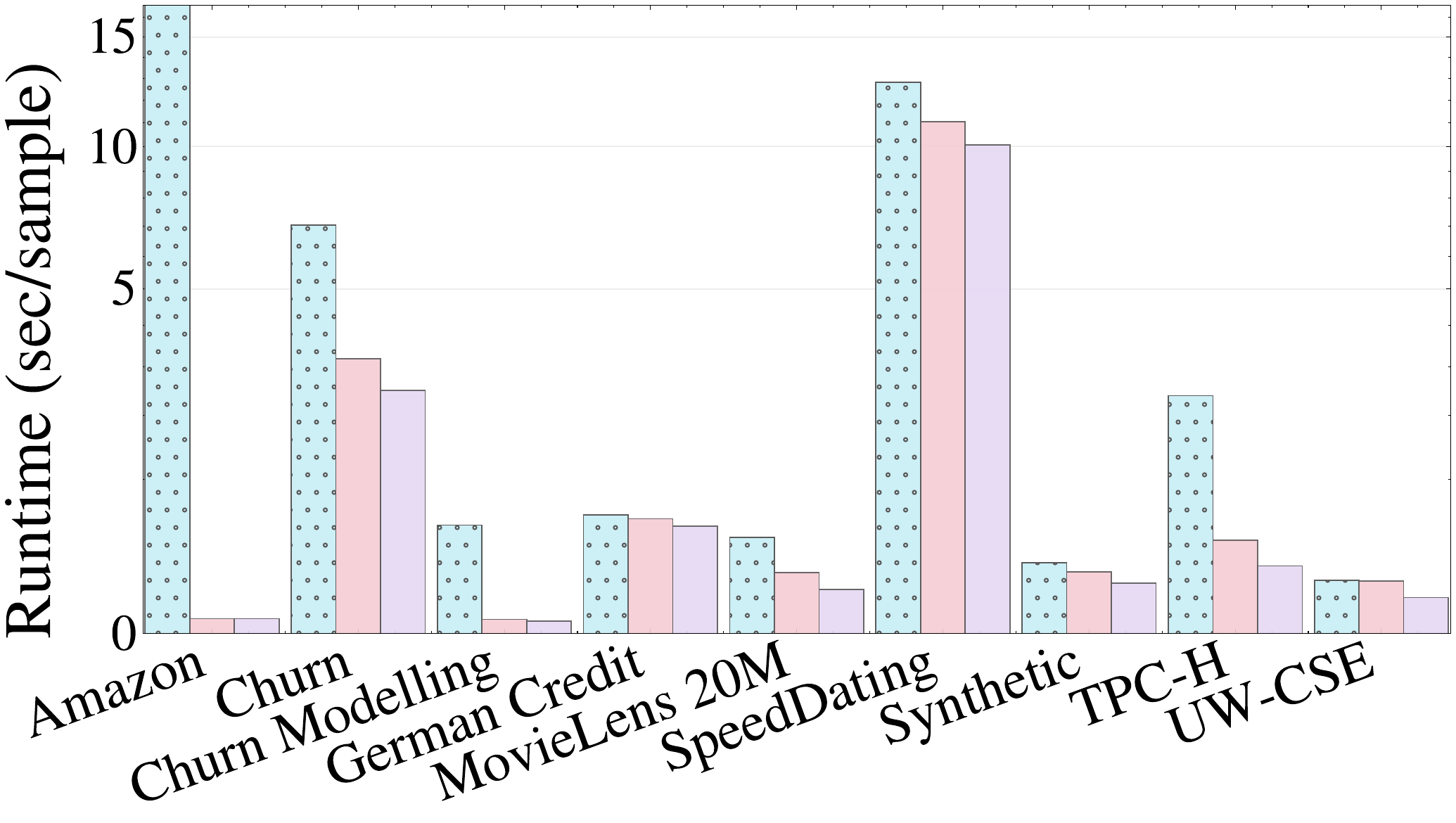}
  \caption{Monte Carlo.}
\end{subfigure}
\hfill
\begin{subfigure}[t]{0.32\textwidth}
  \centering
  \includegraphics[width=\linewidth]{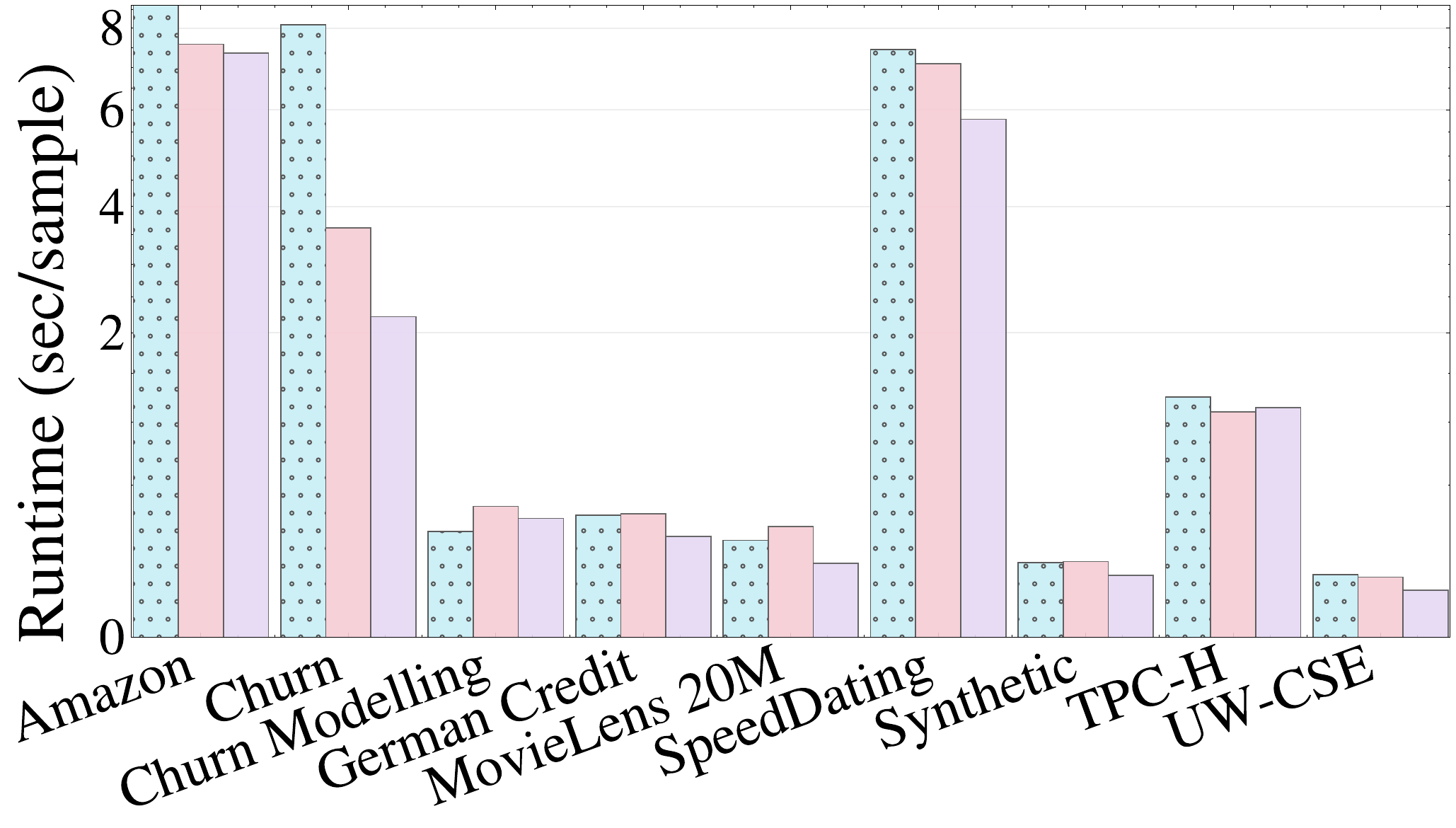}
  \caption{Leverage SHAP.}
\end{subfigure}

\caption{Average running time (sec/sample) across datasets under XGBoost; y-axis uses a square root scaling.}
\label{fig:sec_sample_all}
\end{figure*}

Figure~\ref{fig:sec_sample_all} provides a more detailed view of the actual runtime per test sample across datasets. Table~\ref{tab:runtime_specifics} summarizes the percentage of cases exhibiting runtime reduction across estimators and models. Monte Carlo consistently shows the highest reduction rates (95.24\% and 85.19\%), followed by Leverage SHAP, while Kernel SHAP exhibits more moderate gains. These results are expected as described in Section~\ref{sec:algorithms}. Across models, the variation is relatively small compared to differences across estimators, although Random Forest and XGBoost exhibit slightly higher reduction rates. This pattern indicates that the benefit of quotient space coalition projection in RelShap is estimator-dependent.

\begin{table*}[ht]
\centering
\small
\setlength{\tabcolsep}{3pt}
\renewcommand{\arraystretch}{0.95}
\caption{Running time (sec/sample) under XGBoost for datasets  without denial constraints (mean over three seeds).  These  datasets do not admit the full-featured configuration (BG/Q~+~Prov, Relaxed~+~DCs); see Table~\ref{tab:runtime_modes_b} for datasets with DCs.  When provenance is active, quotient projection deduplicates using both global and provenance-induced FDs; however, provenance lookups add their own overhead, so BG/Q~+~Prov is not guaranteed to be faster than BG/Q alone.  Dashes (-) indicate inapplicable configurations  (Table~\ref{tab:constraints_summary}).  See Appendix~Figure~\ref{fig:sec_sample_all} for a visual comparison. Here, Ker, MC, and Lev denote Kernel SHAP, Monte Carlo, and Leverage SHAP, respectively. Prov, St/Rel denotes Prov, Strict/Relaxed mode.}
\label{tab:runtime_modes_a}
\begin{tabular}{l ccc ccc ccc ccc ccc}
\toprule
& \multicolumn{3}{c}{Amazon}
& \multicolumn{3}{c}{Churn}
& \multicolumn{3}{c}{Churn Modelling}
& \multicolumn{3}{c}{German Credit}
& \multicolumn{3}{c}{MovieLens 20M} \\
\cmidrule(lr){2-4}\cmidrule(lr){5-7}\cmidrule(lr){8-10}
\cmidrule(lr){11-13}\cmidrule(lr){14-16}
Mode
& Ker & MC & Lev
& Ker & MC & Lev
& Ker & MC & Lev
& Ker & MC & Lev
& Ker & MC & Lev \\
\midrule
Default
& 60.389 & 16.642 & 8.612
& 24.425 &  7.026 & 8.084
&  1.523 &  0.493 & 0.241
&  0.793 &  0.590 & 0.321
&  1.362 &  0.387 & 0.202 \\
BG
& 51.599 &  0.009 & 7.584
& 19.977 &  3.181 & 3.616
&  1.346 &  0.008 & 0.369
&  0.759 &  0.554 & 0.327
&  1.309 &  0.157 & 0.264 \\
BG/Q
& 56.624 &  0.009 & 7.362
& 12.094 &  2.494 & 2.216
&  1.576 &  0.006 & 0.305
&  1.550 &  0.485 & 0.218
&  1.317 &  0.082 & 0.117 \\
\midrule
BG~+~Prov, Strict
& 52.411 & 0.005 & 7.832
& 22.490 & 0.865 & 5.015
& 1.263 & 0.004 & 0.288
& 0.742 & 0.220 & 0.255
& - & - & - \\
BG/Q~+~Prov, Strict
& 52.725 & 16.360 & 8.739
& 17.261 &  0.782 & 7.146
&  1.292 &  0.498 & 0.245
&  0.913 &  0.260 & 0.306
&    -   &     -  &   -   \\
BG~+~Prov, Relaxed
& 55.019 & 0.006 & 6.715
& 21.232 & 0.706 & 6.182
& 1.286 & 0.004 & 0.290
& 0.714 & 0.114 & 0.314
& - & - & - \\
BG/Q~+~Prov, Relaxed
& 56.626 & 16.160 & 7.703
& 20.260 &  0.797 & 5.771
&  1.626 &  0.470 & 0.262
&  0.973 &  0.157 & 0.342
&    -   &    -   &    -  \\
\bottomrule
\end{tabular}
\end{table*}

\begin{table*}[ht]
\centering
\setlength{\tabcolsep}{3pt}
\renewcommand{\arraystretch}{0.95}
\caption{Running time (sec/sample) under XGBoost for datasets with denial constraints (mean over three seeds).  The last row shows the full-featured \RelShap configuration.}
\label{tab:runtime_modes_b}
\begin{tabular}{l ccc ccc ccc ccc}
\toprule
& \multicolumn{3}{c}{SpeedDating}
& \multicolumn{3}{c}{TPC-H}
& \multicolumn{3}{c}{UW-CSE}
& \multicolumn{3}{c}{Synthetic} \\
\cmidrule(lr){2-4}\cmidrule(lr){5-7}\cmidrule(lr){8-10}
\cmidrule(lr){11-13}
Mode
& Ker & MC & Lev
& Ker & MC & Lev
& Ker & MC & Lev
& Ker & MC & Lev \\
\midrule
Default
& 45.985 & 12.817 & 7.447
&  6.527 &  2.383 & 1.244
&  0.062 &  0.119 & 0.084
&  0.505 &  0.212 & 0.119 \\
BG
& 45.408 & 11.038 & 7.098
&  6.464 &  0.366 & 1.094
&  0.054 &  0.115 & 0.078
&  0.558 &  0.160 & 0.123 \\
BG/Q
& 37.227 & 10.068 & 5.785
&  1.777 &  0.192 & 1.138
&  0.271 &  0.054 & 0.048
&  0.622 &  0.106 & 0.082 \\
\midrule
BG~+~DCs
& 225.200 & 64.119 & 20.932
&   9.132 &  0.651 &  2.658
&   0.034 &  0.019 &  0.015
&   2.248 &  0.685 &  0.628 \\
BG/Q~+~DCs
& 244.800 & 73.635 & 22.820
&   2.988 &  0.377 &  2.364
&   0.064 &  0.011 &  0.012
&   2.587 &  0.553 &  0.333 \\
\midrule
BG~+~Prov, Strict
& 38.582 & 0.683 & 7.530
& 7.478 & 0.315 & 2.001
& 0.007 & 0.012 & 0.010
& 0.519 & 0.063 & 0.109 \\
BG/Q~+~Prov, Strict
& 36.976 &  1.006 & 9.491
&  6.167 &  0.142 & 1.725
&  0.066 &  0.108 & 0.076
&  0.483 &  0.077 & 0.124 \\
BG~+~Prov, Relaxed
& 42.115 & 0.731 & 10.293
& 6.660 & 0.171 & 1.971
& 0.008 & 0.010 & 0.012
& 0.469 & 0.073 & 0.125 \\
BG/Q~+~Prov, Relaxed
& 40.604 &  0.872 & 9.032
&  7.598 &  0.182 & 1.734
&  0.056 &  0.072 & 0.094
&  0.498 &  0.083 & 0.145 \\
\midrule
All
& 43.558 & 3.232 & 10.649
&  9.239 &  0.297 & 3.589
&  0.035 &  0.014 & 0.013
&  2.481 &  0.442 & 0.606 \\
All/Q
& 42.117 &  4.043 & 9.752
&  2.682 &  0.091 & 1.197
&  0.474 &  0.034 & 0.049
&  2.521 &  0.200 & 0.455 \\
\bottomrule
\end{tabular}
\end{table*}

Tables~\ref{tab:runtime_modes_a} and~\ref{tab:runtime_modes_b} provide a per-mode breakdown. Several patterns emerge. First, BG alone already reduces runtime for Monte Carlo on several datasets (\eg Amazon drops from $16.642$ to $0.009$\,sec per sample, and TPC-H from $2.383$ to $0.366$). Adding quotient projection (BG/Q) yields further gains, particularly for Kernel SHAP on datasets with rich FD structure: on TPC-H, Kernel SHAP drops from $6.527$ (Default) to $1.777$ (BG/Q), and on Churn from $24.425$ to $12.094$. Second, the benefit is estimator-dependent: Monte Carlo and Leverage SHAP show consistent reductions across datasets, while Kernel SHAP improves primarily on datasets with many FDs having $|\mathrm{LHS}|{=}1$ (Table~\ref{tab:constraints_summary}). On datasets with predominantly $|\mathrm{LHS}|{=}2$ FDs (\eg German Credit), Kernel SHAP shows little or no improvement, and BG/Q can even incur slight overhead from the canonicalization step without sufficient deduplication to compensate. Third, provenance-aware modes (BG/Q~+~Prov) exhibit dataset-dependent behavior: on TPC-H, Monte Carlo benefits substantially ($0.142$\,sec/sample under Prov, Strict vs.\ $0.192$ under BG/Q), while on other datasets the additional provenance lookups can offset the gains from deduplication. The full-featured mode (BG/Q~+~Prov, Relaxed~+~DCs), available on datasets with all constraint types, shows competitive runtimes: on TPC-H, Monte Carlo achieves $0.091$\,sec/sample, a $26{\times}$ reduction over Default. Across models, variation is small compared to differences across estimators, confirming that the runtime benefit is primarily estimator-driven. It is noteworthy that since DCs may introduce additional lookup costs and filtering costs, DC and All modes have a probability of slowing down the computations, even with or without Q mode. Therefore, quotient mode provides runtime benefits, but this does not always happen. Q can reduce runtime with rich FD structures desirable estimators; otherwise, overhead may dominate.

\begin{table}[hb]
\centering
\caption{Percentage of cases showing runtime reduction across datasets and estimators, including Default vs. RelShap (BG) and RelShap (BG) vs. RelShap (BG/Q). The highest value in each dataset (row) is highlighted in bold.}
\begin{tabular}{lccc}
\toprule
Dataset & Kernel & MC & Leverage \\
\midrule
Amazon & 33.33\% & \textbf{75.00\%} & 50.00\% \\
Churn & 87.50\% & 95.83\% & \textbf{100.0\%} \\
Churn Modelling & 16.67\% & \textbf{91.67\%} & 54.17\% \\
German Credit & 0.000\% & 75.00\% & \textbf{91.67\%} \\
SpeedDating & \textbf{91.67\%} & 76.19\% & 79.17\% \\
TPC-H & \textbf{100.0\%} & \textbf{100.0\%} & 79.17\% \\
UW-CSE & 0.000\% & \textbf{100.0\%} & 91.67\% \\
MovieLens 20M & 41.67\% & \textbf{100.0\%} & \textbf{100.0\%} \\
Synthetic & 33.33\% & 95.83\% & \textbf{100.0\%} \\
\bottomrule
\end{tabular}
\label{tab:runtime_by_dataset}
\end{table}

\paragraph{Scalability} \label{app:exp:scalability} 

\begin{figure*}[t]
  \centering

  \begin{subfigure}[t]{0.49\textwidth}
    \centering
    \includegraphics[width=\linewidth]{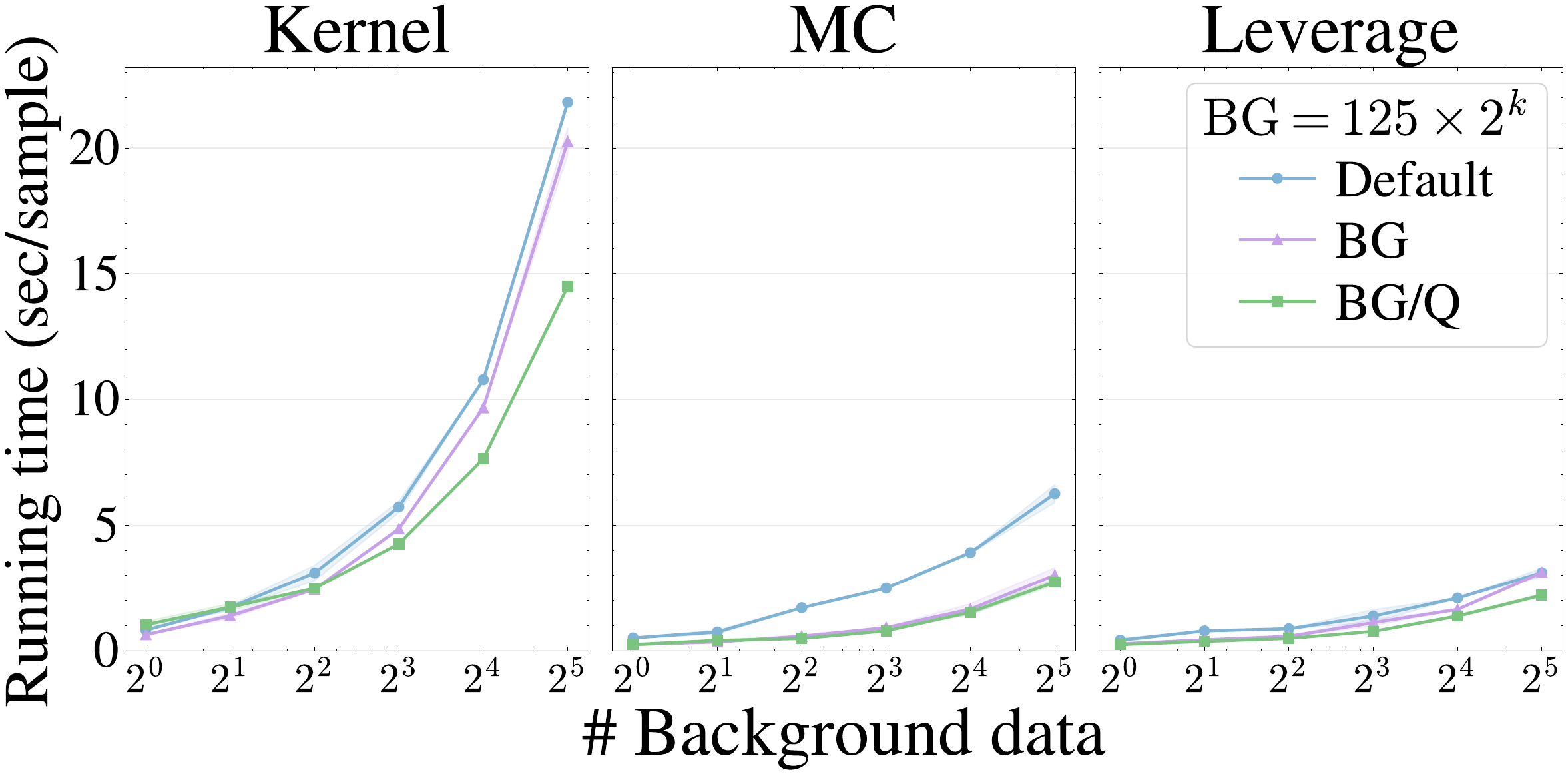}
    \caption{Running time as the number of background data points varies for Churn.}
    \label{fig:scale_churn_bg}
  \end{subfigure}\hfill
  \begin{subfigure}[t]{0.49\textwidth}
    \centering
    \includegraphics[width=\linewidth]{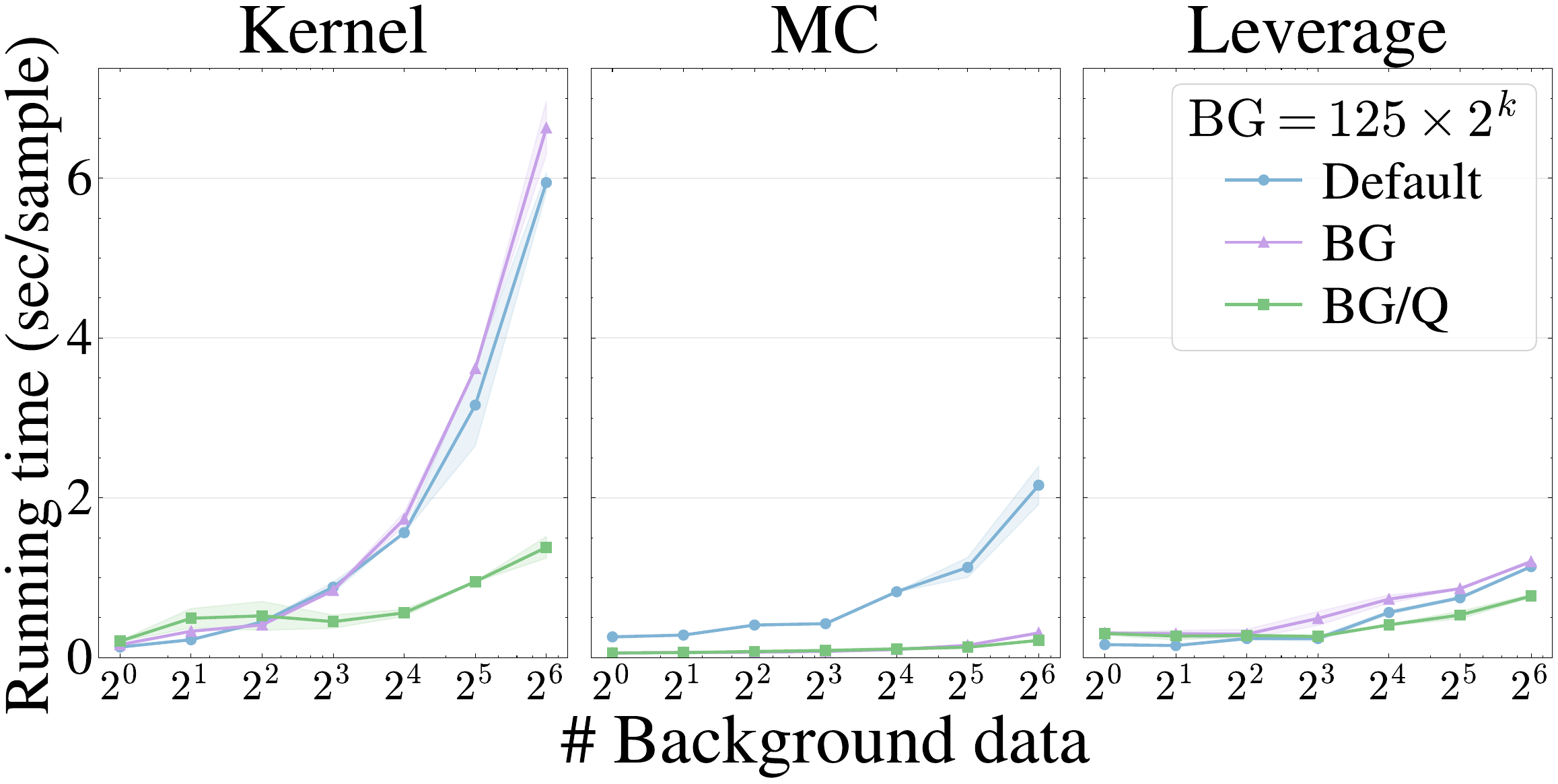}
    \caption{Running time as the number of background data points varies for TPC-H.}
    \label{fig:scale_tpch_bg}
  \end{subfigure}

  \vspace{0.5em}

  \begin{subfigure}[t]{0.49\textwidth}
    \centering
    \includegraphics[width=\linewidth]{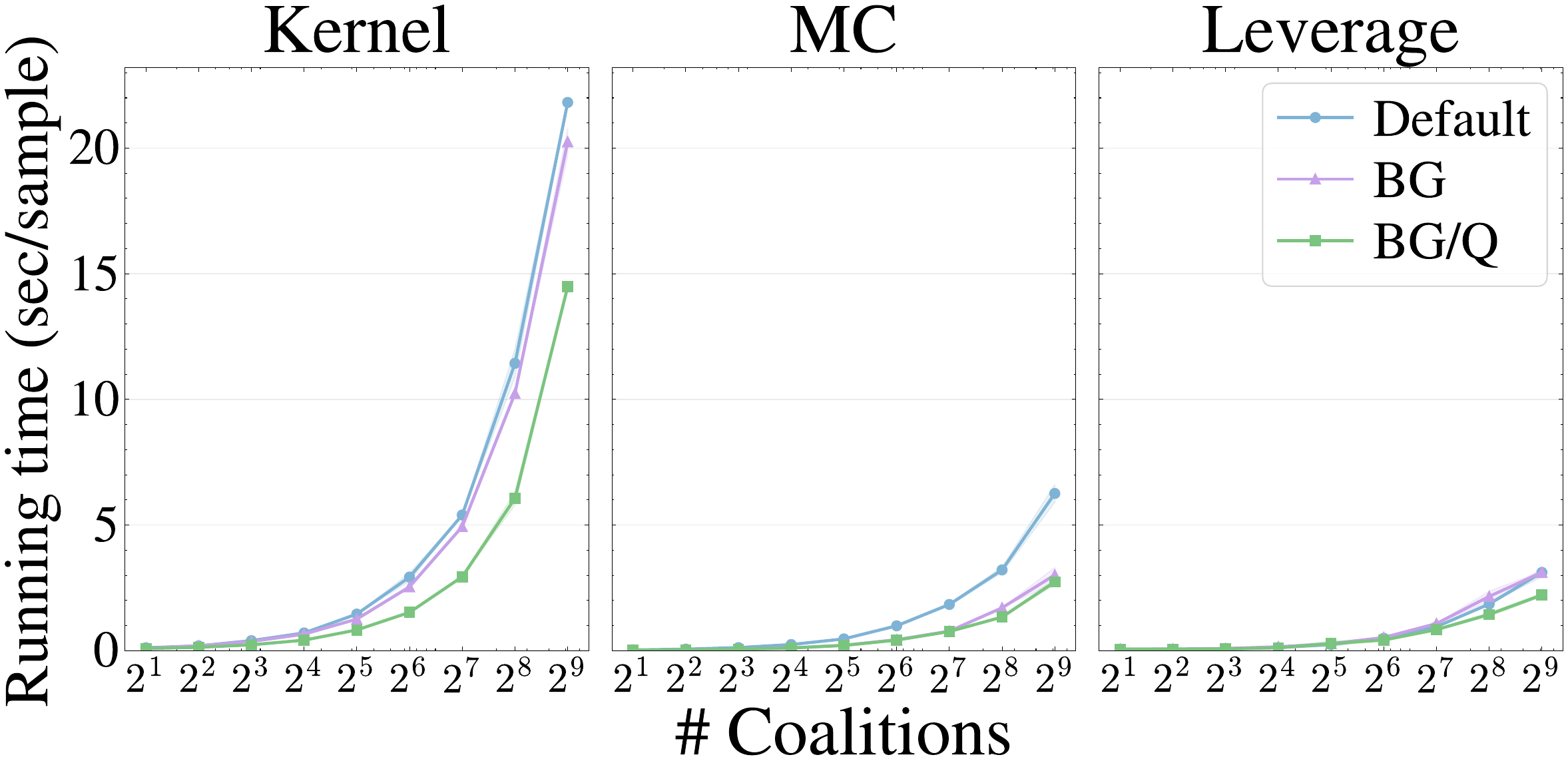}
    \caption{Running time as the number of coalitions varies for Churn.}
    \label{fig:scale_churn_coal}
  \end{subfigure}\hfill
  \begin{subfigure}[t]{0.49\textwidth}
    \centering
    \includegraphics[width=\linewidth]{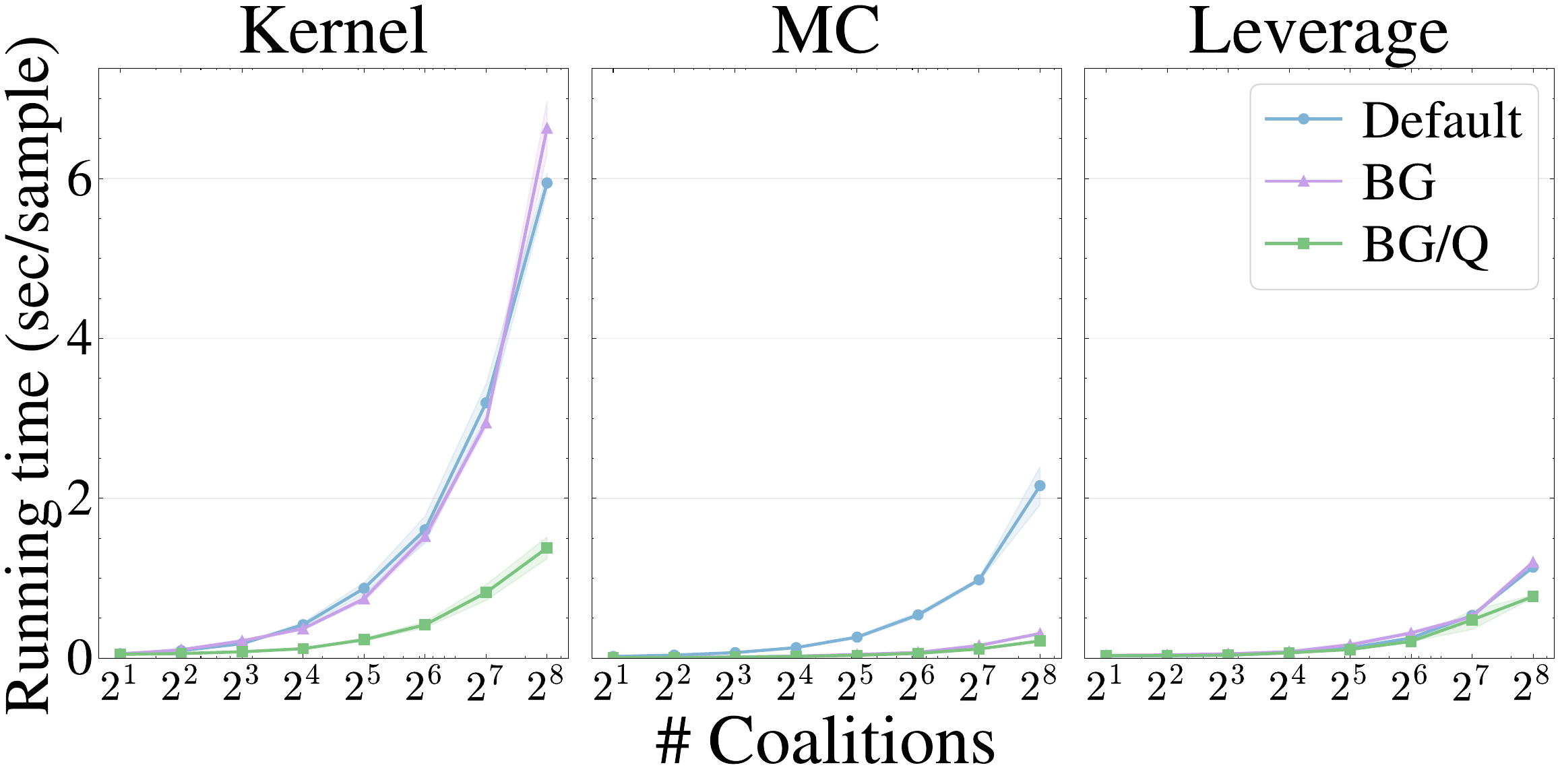}
    \caption{Running time as the number of coalitions varies for TPC-H.}
    \label{fig:scale_tpch_coal}
  \end{subfigure}

  \caption{Average running time (sec/sample) as the number of background data points varies up to the per-dataset maximum $B$ and the number of coalitions varies up to the convergence budget $M_{\mathrm{conv}}$ in Table~\ref{tab:dataset_stats}, for Churn and TPC-H (log scale on the x-axis).}
  \label{fig:scalability}
\end{figure*}

Figure~\ref{fig:scalability} shows running time per sample as the number of background data points (Churn) and coalitions (TPC-H) increase. On Churn (Figure~\ref{fig:scale_churn_bg}), all three methods grow with background size, but the gap between Default and BG/Q widens as $B$ increases: for Kernel SHAP, BG/Q remains roughly flat while Default grows super-linearly; for Monte Carlo, BG alone already provides a large reduction that BG/Q maintains; for Leverage SHAP, the separation is moderate but consistent. 

On TPC-H (Figure~\ref{fig:scale_tpch_coal}), runtime grows with the coalition budget $M$ for all methods, but BG/Q maintains a substantial gap below Default across the full range for Kernel SHAP and Monte Carlo. For Leverage SHAP, the three curves are closer together, reflecting the smaller number of collapsible coalitions under this estimator's sampling distribution. The complementary pairing in the Appendix confirms these patterns: TPC-H vs.\ background data (Appendix~Figure~\ref{fig:scale_tpch_bg}) shows the same widening gap, with BG/Q providing the largest separation for Kernel SHAP and MC; Churn vs.\ coalitions (Appendix~Figure~\ref{fig:scale_churn_coal}) shows all three curves growing together but with BG/Q consistently below Default, and the gap widening at larger coalition budgets. Overall, the reduction from BG/Q grows with problem size across both axes, indicating that \RelShap becomes more effective with larger data and coalition budgets.

\begin{figure*}[ht]
  \centering

  \begin{subfigure}[t]{0.49\textwidth}
    \centering
    \includegraphics[width=\linewidth]{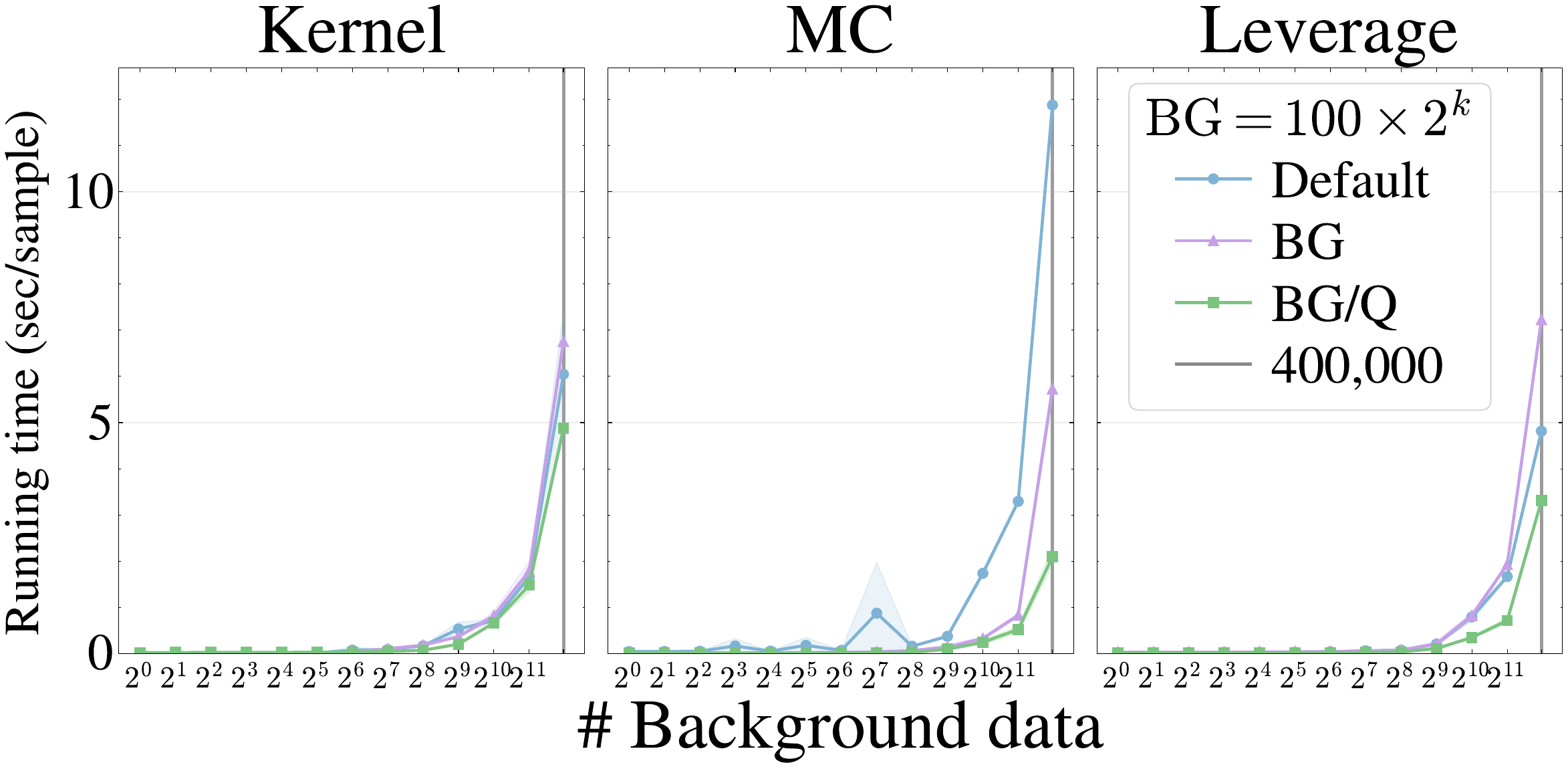}
    \caption{Running time as the number of background data points varies for MovieLens-20M with MLP-PLR.}
    \label{fig:scalability_movielens_bg}
  \end{subfigure}\hfill
  \begin{subfigure}[t]{0.49\textwidth}
    \centering
    \includegraphics[width=\linewidth]{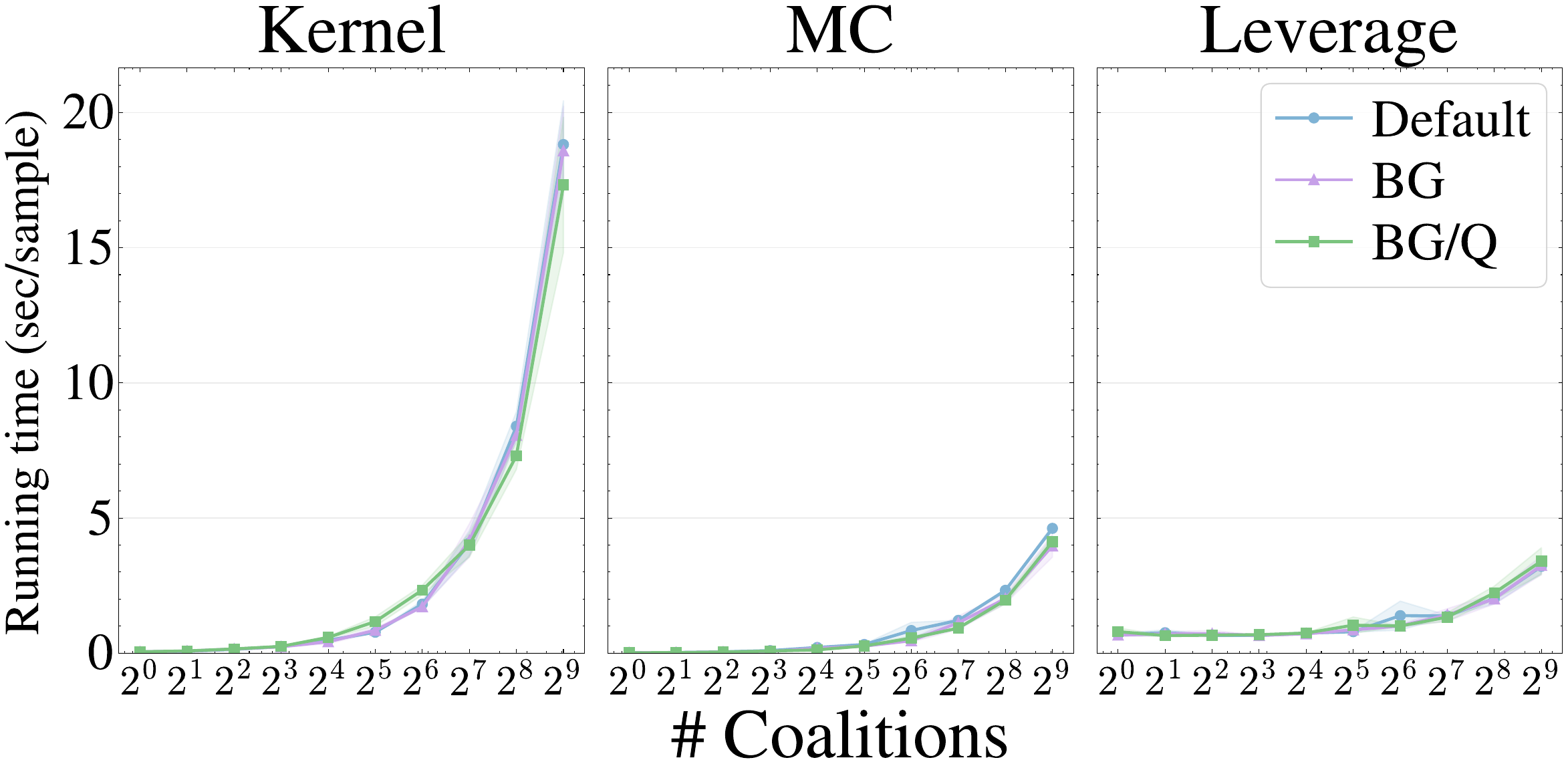}
    \caption{Running time as the number of coalitions varies for SpeedDating with logistic regression.}
    \label{fig:scalability_speeddating_coal}
  \end{subfigure}

  \caption{Average running time (sec/sample) as the number of background data points and coalitions varies for additional model--dataset pairs.}
  \label{fig:scalability_additional}
\end{figure*}

\begin{figure*}[ht]
  \centering

  \begin{subfigure}[t]{0.49\textwidth}
    \centering
    \includegraphics[width=\linewidth]{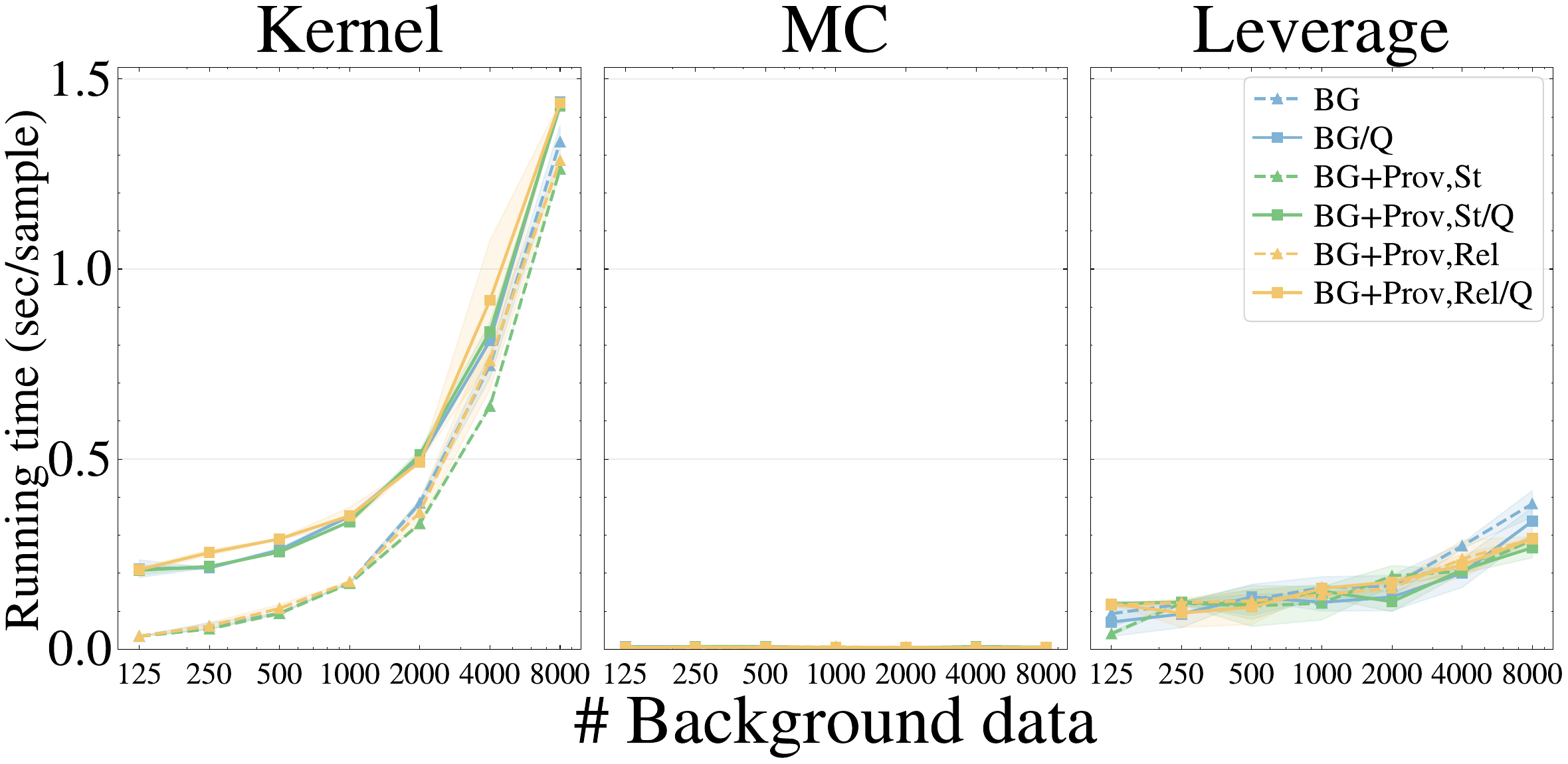}
    \caption{Running time as the number of background data points varies for Churn Modelling with XGBoost under richer provenance settings.}
    \label{fig:scalability_churnmodelling_bg}
  \end{subfigure}\hfill
  \begin{subfigure}[t]{0.49\textwidth}
    \centering
    \includegraphics[width=\linewidth]{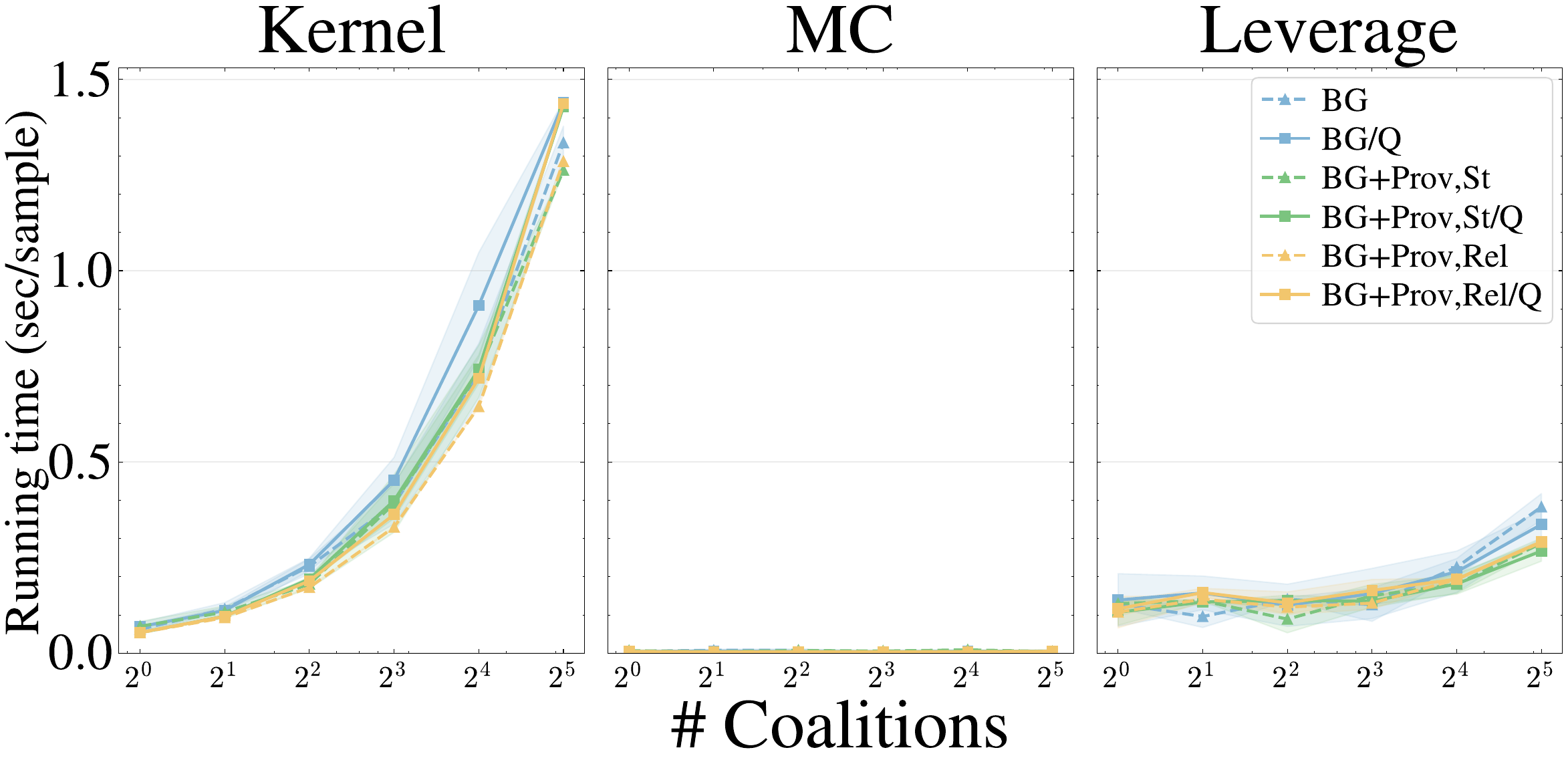}
    \caption{Running time as the number of coalitions varies for Churn Modelling with XGBoost under richer provenance settings.}
    \label{fig:scalability_churnmodelling_coal}
  \end{subfigure}

  \caption{Average running time (sec/sample) as the number of background data points and coalitions varies under richer provenance settings for Churn Modelling with XGBoost.}
  \label{fig:scalability_richer}
\end{figure*}

Additional model--dataset pairs and richer constraint settings exhibit the same overall scaling behavior, although the magnitude of the gain depends on the estimator and the available coalition redundancy. On MovieLens 20M dataset with MLP-PLR (Figure~\ref{fig:scalability_movielens_bg}), BG/Q increasingly separates from Default as the background size grows, with particularly large reductions for Monte Carlo and Leverage SHAP and a smaller but clear reduction for Kernel SHAP. On SpeedDating dataset with logistic regression (Figure~\ref{fig:scalability_speeddating_coal}), all configurations grow similarly with the coalition budget, although BG/Q remains consistently below Default at larger $M$. Churn Modelling dataset with XGBoost (Figure~\ref{fig:scalability_richer}) further shows that richer provenance settings preserve the same scaling trends across both axes: running times increase with background size and coalition budget. This suggests that enriching the relational semantics does not fundamentally alter the scalability profile: runtime is driven primarily by the estimator, evaluation budget, and the quotient mode, allowing richer constraint-aware explanations to be incorporated without introducing a qualitatively different scaling behavior.

\paragraph{Alignment with combinatorial analysis} \label{app:exp:alignment_combinatorial}

We measure this using the empirical counterpart of $R_M$ in Thm.~\ref{thm:reduction}, $\hat{R}_{M_{\mathrm{conv}}}
= 1 - 1/{\hat{S}_{M_{\mathrm{conv}}}}$ where
$
\hat{S}_{M_{\mathrm{conv}}}
=\frac{1}{n_{\text{explain}}}
\sum_{n_{\text{explain}}}
\frac{1}{B}
\sum_{b=1}^{B}
\frac{M_{\mathrm{conv}}}{K^{\text{quot}}_{b}}
$
and $K^{\text{quot}}_{b}$ is the number of distinct equivalence classes after quotient mode. Figure~\ref{fig:runtime_vs_speedup_combined} shows an alignment of real runtime reduction and our combinatorial analysis in Thm.~\ref{thm:reduction}. This confirms that our analysis reliably predicts \emph{when} reduction will occur, although its exact magnitude remains contingent on the dataset and sampling strategy. 

\begin{figure*}[ht]
  \centering

  \begin{subfigure}[t]{\textwidth}
    \centering
    \includegraphics[width=\textwidth]{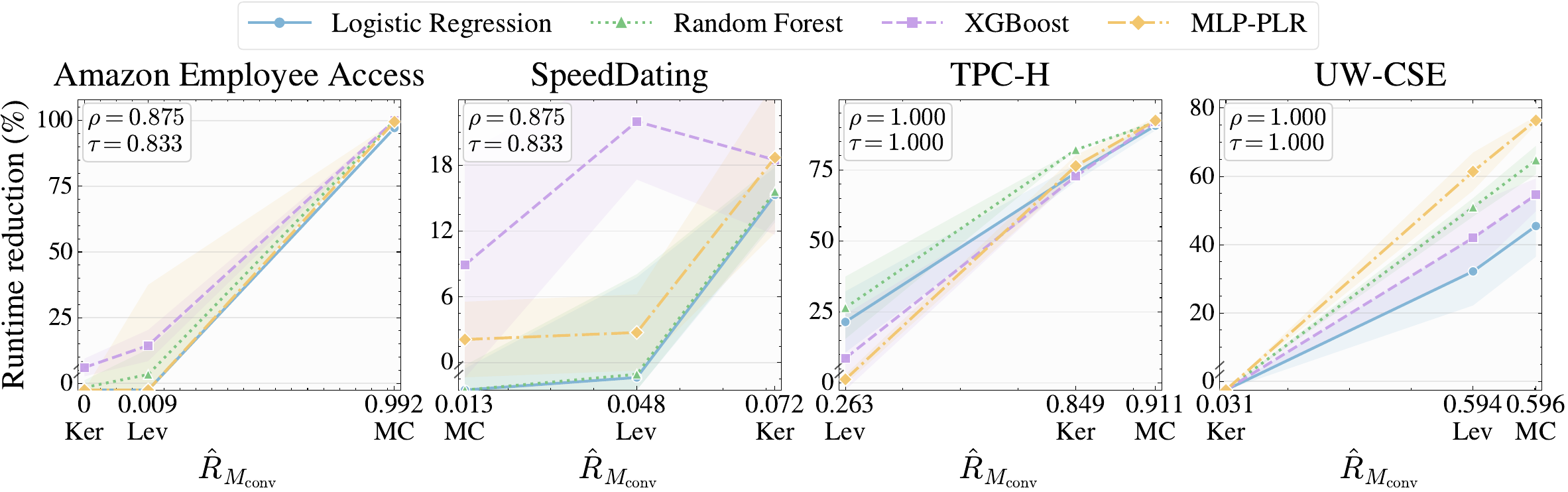}
    \label{fig:runtime_vs_speedup_main}
  \end{subfigure}

  \begin{subfigure}[t]{\textwidth}
    \centering
    \includegraphics[width=\textwidth]{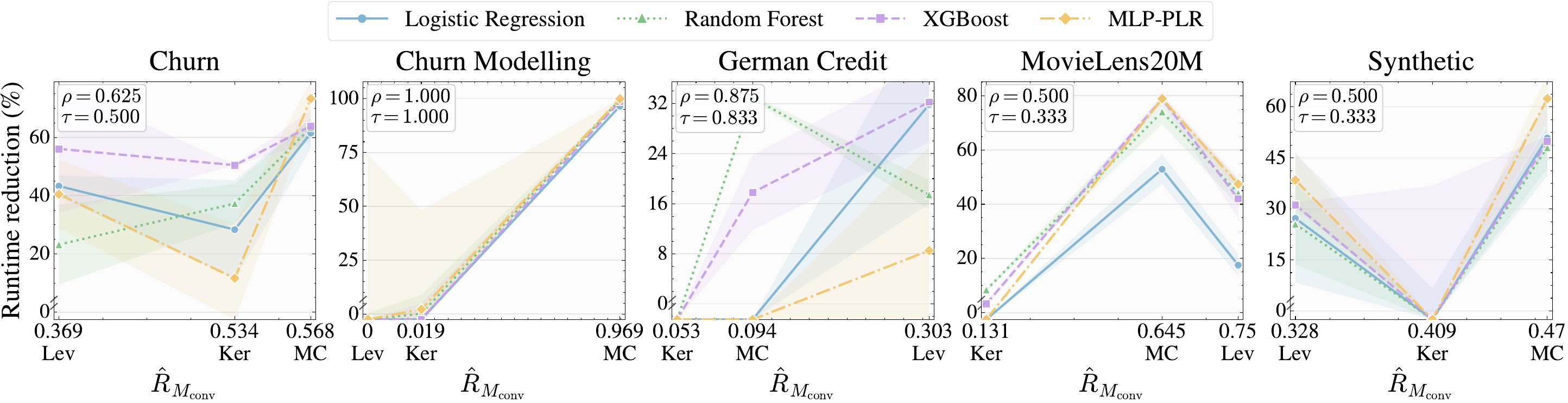}
    \label{fig:runtime_vs_speedup_appendix}
  \end{subfigure}

  \caption{Runtime reduction (\%) versus the empirical runtime speedup factor $\hat{R}_{M_{\mathrm{conv}}}$ across datasets and models (Default vs. RelShap (BG)). Each point corresponds to a configuration at $M_{\mathrm{conv}}$. Shaded regions indicate variability across seeds. Ker, MC, and Lev denote Kernel SHAP, Monte Carlo, and Leverage SHAP.}
  \label{fig:runtime_vs_speedup_combined}
\end{figure*}

\subsubsection{Detailed analysis on provenance-aware mode} \label{app:exp:detailed_provenance}

\textbf{TPC-H.} Schema: As the primary key of the \texttt{Supplier} table, \texttt{suppkey} is referenced by \texttt{Lineitem} and \texttt{Partsupp} tables, and further connected to \texttt{Orders} through join paths, inducing supplier-level FDs across related records. Query: Grouping on \texttt{suppkey} produces one tuple per supplier with aggregate statistics (\eg counts, sums, and averages), making \texttt{suppkey} functionally determine all features. \texttt{suppkey} is removed in the final ML table and utilized in the provenance-aware mode. 
\textbf{UW-CSE.} Schema: The identifier \texttt{p\_id} in the \texttt{Person} table propagates through relational links such as \texttt{AdvisedBy} and \texttt{TaughtBy}, capturing FDs at the student level. Query: Aggregating over \texttt{p\_id} yields one tuple per student with relational statistics (e.g., advisor counts and course-level features), so all features are functionally determined by \texttt{p\_id}. \texttt{p\_id} is finally removed in the ML table and used in the provenance-aware mode.

\subsection{Controlled validation of \RelShap} \label{app:exp_intuition}

\subsubsection{Validation under synthetic settings}

\begin{figure*}[ht]
  \centering
  \includegraphics[width=\linewidth]{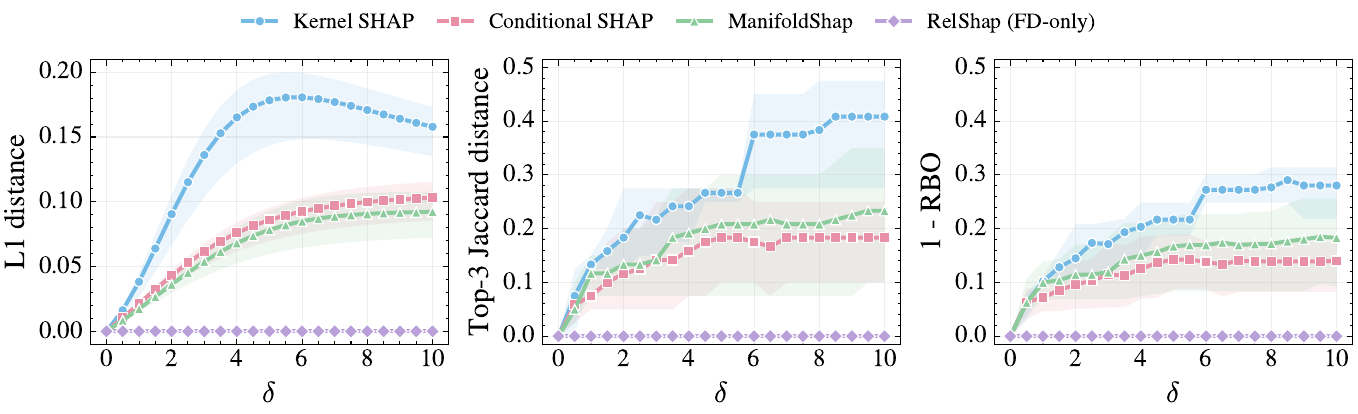}
  \caption{Attribution and ranking changes under the FD
  $\texttt{age}\rightarrow\texttt{life\_stage}$ as the strength $\delta$
  of relationally invalid perturbations increases.}
  \label{fig:synthetic_fd_only}
\end{figure*}

\begin{figure*}[ht]
  \centering
  \includegraphics[width=\linewidth]{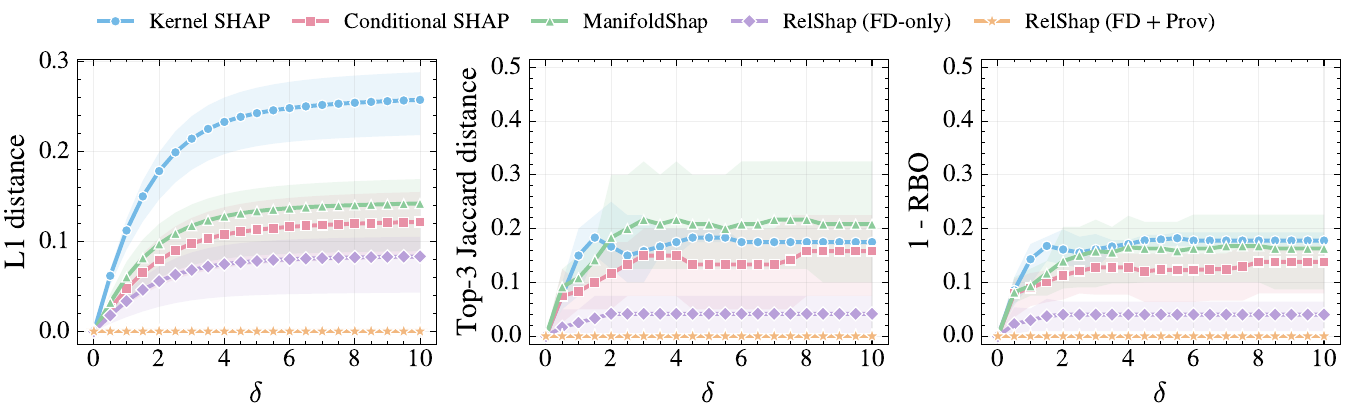}
  \caption{Attribution and ranking changes when incorporating both the FD
  $\texttt{age}\rightarrow\texttt{life\_stage}$ and the provenance
  constraints from Example~\ref{ex:motivating}.}
  \label{fig:synthetic_fd_prov}
\end{figure*}

Expanding on Section~\ref{sec:exp:intuition}, we consider two additional settings over four features---\texttt{age}, \texttt{life\_stage}, \texttt{empl}, and \texttt{total\_amt}. The first enforces only the FD $\texttt{age}\rightarrow\texttt{life\_stage}$, whereas the second additionally incorporates the provenance constraints from Example~\ref{ex:motivating}. Unlike Section~\ref{sec:exp:intuition}, attribution ordering between a particular feature pair is no longer expected to hold in isolation when additional features are introduced, as their Shapley values now also depend on such newly included features.
We therefore examine how the overall attribution vector and feature ranking change as $\delta$ increases ($0$ to $10$ in increments of $0.5$, averaged across three random seeds). Figures~\ref{fig:synthetic_fd_only} and~\ref{fig:synthetic_fd_prov} report the $L_1$ distance and ranking changes as the strength of invalid perturbations increases. Under the FD-only setting, \RelShap remains unchanged across all metrics, while all baselines exhibit increasing attribution and ranking shifts. With provenance constraints, \RelShap using both FDs and provenance again remains invariant, followed by FD-only \RelShap as the most stable alternative. These results show that excluding relationally impossible states corrects the internal Shapley computation and makes \RelShap insensitive to relationally invalid perturbations that are never observed during training or testing.

\subsubsection{Validation with real datasets}

\begin{figure*}[t]
  \centering
  \includegraphics[width=\textwidth]{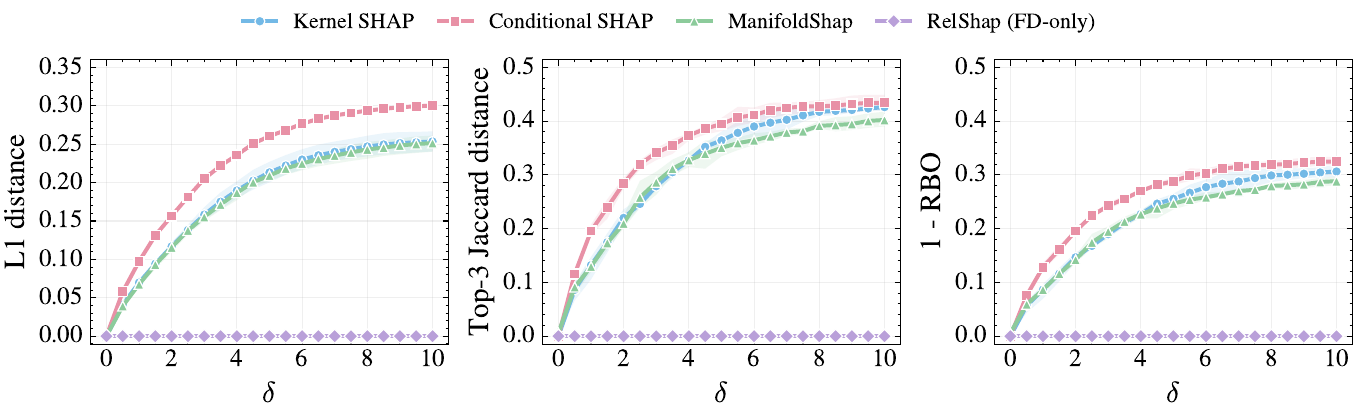}
  \caption{Attribution and ranking changes on German Credit as the influence of $\delta$ increases.}
  \label{fig:invalidity_german}
\end{figure*}

\begin{figure*}[t]
  \centering
  \begin{subfigure}[t]{\textwidth}
    \centering
    \includegraphics[width=\textwidth]{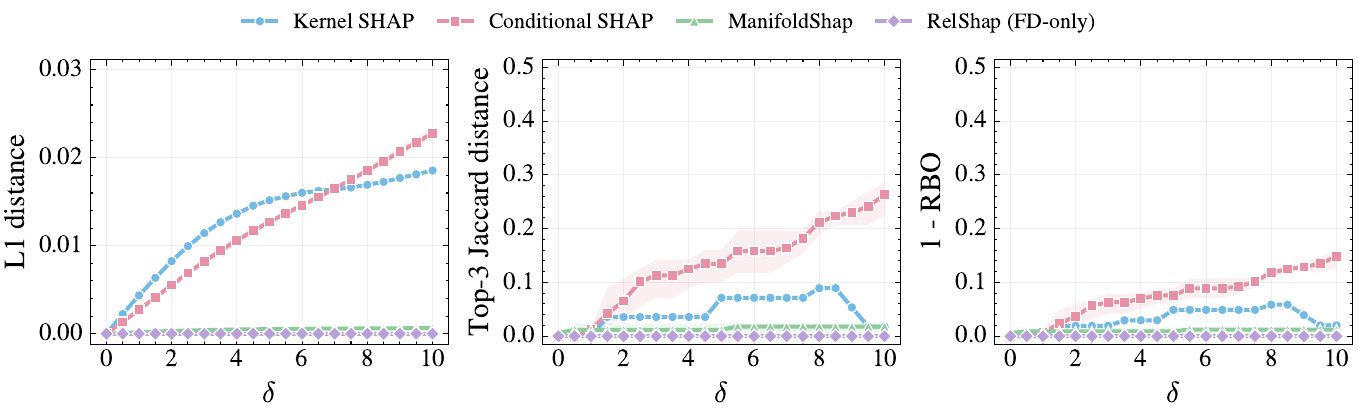}
    \caption{FD-only setting.}
    \label{fig:invalidity_uwcse_fd}
  \end{subfigure}

  \vspace{0.5em}

  \begin{subfigure}[t]{\textwidth}
    \centering
    \includegraphics[width=\textwidth]{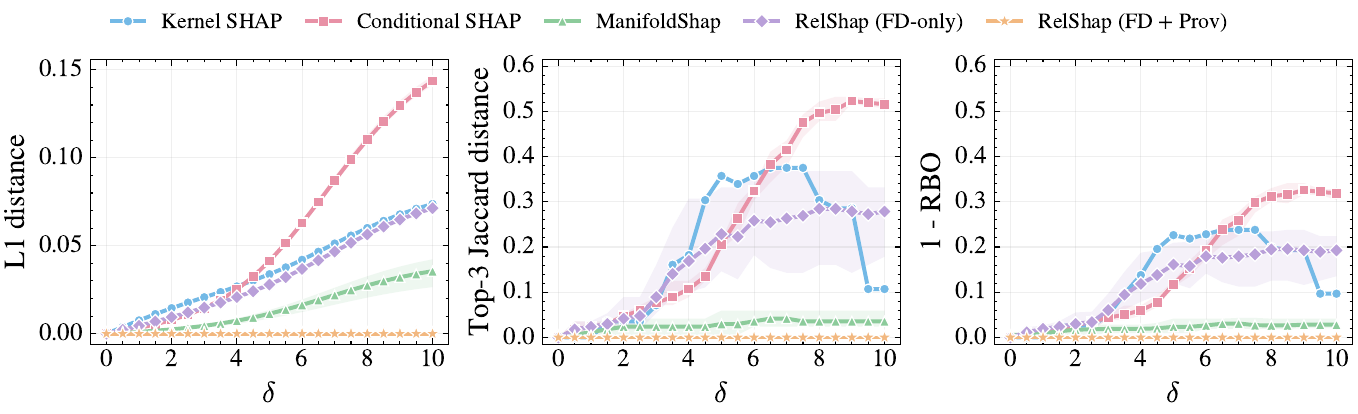}
    \caption{FD~+~Prov setting.}
    \label{fig:invalidity_uwcse_prov}
  \end{subfigure}

  \caption{Attribution and ranking changes on UW-CSE as the influence of $\delta$ increases.}
  \label{fig:invalidity_uwcse}
\end{figure*}

Figures~\ref{fig:invalidity_german} and~\ref{fig:invalidity_uwcse} validate the same behavior on two representative real datasets: German Credit, a standard ML dataset with FDs without provenance information, and UW-CSE, a relational database dataset with both FDs and provenance constraints (Table~\ref{tab:constraints_summary}).
For both datasets, the base predictive model is XGBoost with $0.5 \leq r(\cdot) < 1.0$. On German Credit, all baselines exhibit increasing attribution and ranking changes as $\delta$ grows, whereas FD-only \RelShap remains unchanged. On UW-CSE, in FD-only setting, \RelShap is the only method that remains stable even with increasing $\delta$. In FD~+~Prov setup, only \RelShap incorporating both FDs and provenance constraint remains unchanged. These results show that sensitivity to relationally invalid perturbations is eliminated when the corresponding relational structure is incorporated into Shapley computation.

\subsection{Relational invalidity in Shapley computation} \label{app:violation_details}

\begin{table*}[t]
\centering
\small
\setlength{\tabcolsep}{3pt}
\renewcommand{\arraystretch}{1.15}
\caption{Relational invalidity on datasets without DCs. Each cell reports averaged violation prevalence (top) and violation density (bottom) over three random seeds, in percentage.}
\label{tab:violation_no_dcs}
\begin{tabular}{llccc}
\toprule
Dataset
& Estimator
& BG
& BG~+~Prov, Strict
& BG~+~Prov, Relaxed \\
\midrule

\multirow{3}{*}{Amazon}
& Kernel
& \makecell{$55.956$\\$36.598$}
& \makecell{$77.762$\\$34.501$}
& \makecell{$89.359$\\$34.843$} \\
\cmidrule(lr){2-5}
& MC
& \makecell{$51.350$\\$33.666$}
& \makecell{$70.210$\\$31.938$}
& \makecell{$80.147$\\$32.649$} \\
\cmidrule(lr){2-5}
& Leverage
& \makecell{$55.095$\\$34.508$}
& \makecell{$77.959$\\$33.249$}
& \makecell{$90.317$\\$33.966$} \\
\midrule

\multirow{3}{*}{Churn}
& Kernel
& \makecell{$96.120$\\$23.642$}
& \makecell{$96.122$\\$28.683$}
& \makecell{$96.125$\\$28.627$} \\
\cmidrule(lr){2-5}
& MC
& \makecell{$93.466$\\$28.188$}
& \makecell{$93.467$\\$32.365$}
& \makecell{$93.469$\\$32.320$} \\
\cmidrule(lr){2-5}
& Leverage
& \makecell{$97.721$\\$29.560$}
& \makecell{$97.722$\\$33.377$}
& \makecell{$97.724$\\$33.313$} \\
\midrule

\multirow{3}{*}{Churn Modelling}
& Kernel
& \makecell{$90.240$\\$17.321$}
& \makecell{$90.479$\\$20.537$}
& \makecell{$90.481$\\$20.320$} \\
\cmidrule(lr){2-5}
& MC
& \makecell{$84.992$\\$17.725$}
& \makecell{$85.160$\\$20.963$}
& \makecell{$85.162$\\$20.762$} \\
\cmidrule(lr){2-5}
& Leverage
& \makecell{$95.187$\\$20.313$}
& \makecell{$95.394$\\$23.556$}
& \makecell{$95.395$\\$23.278$} \\
\midrule

\multirow{3}{*}{German Credit}
& Kernel
& \makecell{$18.910$\\$4.395$}
& \makecell{$71.969$\\$16.562$}
& \makecell{$77.132$\\$16.284$} \\
\cmidrule(lr){2-5}
& MC
& \makecell{$20.469$\\$4.787$}
& \makecell{$75.195$\\$18.096$}
& \makecell{$79.719$\\$17.827$} \\
\cmidrule(lr){2-5}
& Leverage
& \makecell{$21.617$\\$4.967$}
& \makecell{$78.244$\\$18.567$}
& \makecell{$83.176$\\$18.266$} \\
\midrule

\multirow{3}{*}{MovieLens 20M}
& Kernel
& \makecell{$87.748$\\$47.969$}
& \makecell{$87.748$\\$47.969$}
& \makecell{$87.748$\\$47.969$} \\
\cmidrule(lr){2-5}
& MC
& \makecell{$71.621$\\$39.704$}
& \makecell{$71.621$\\$39.704$}
& \makecell{$71.621$\\$39.704$} \\
\cmidrule(lr){2-5}
& Leverage
& \makecell{$99.081$\\$47.589$}
& \makecell{$99.081$\\$47.589$}
& \makecell{$99.081$\\$47.589$} \\

\bottomrule
\end{tabular}
\end{table*}

\begin{table*}[t]
\centering
\small
\setlength{\tabcolsep}{3pt}
\renewcommand{\arraystretch}{1.15}
\caption{Relational invalidity on datasets with DCs. Each cell reports averaged violation prevalence (top) and violation density (bottom) over three random seeds, in percentage. All denotes BG~+~DCs~+~Prov, Relaxed.}
\label{tab:violation_with_dcs}
\begin{tabular}{llccccc}
\toprule
Dataset
& Estimator
& BG
& BG~+~DCs
& BG~+~Prov, Strict
& BG~+~Prov, Relaxed
& All \\
\midrule

\multirow{3}{*}{SpeedDating}
& Kernel
& \makecell{$88.509$\\$12.219$}
& \makecell{$88.509$\\$3.164$}
& \makecell{$91.294$\\$17.524$}
& \makecell{$91.695$\\$17.611$}
& \makecell{$91.695$\\$10.530$} \\
\cmidrule(lr){2-7}
& MC
& \makecell{$97.889$\\$20.037$}
& \makecell{$97.889$\\$5.188$}
& \makecell{$98.270$\\$23.743$}
& \makecell{$98.336$\\$23.764$}
& \makecell{$98.336$\\$14.976$} \\
\cmidrule(lr){2-7}
& Leverage
& \makecell{$97.889$\\$20.037$}
& \makecell{$97.889$\\$5.188$}
& \makecell{$98.270$\\$23.743$}
& \makecell{$98.336$\\$23.764$}
& \makecell{$98.336$\\$14.976$} \\
\midrule

\multirow{3}{*}{TPC-H}
& Kernel
& \makecell{$98.704$\\$16.320$}
& \makecell{$98.704$\\$16.183$}
& \makecell{$98.705$\\$17.664$}
& \makecell{$98.706$\\$17.669$}
& \makecell{$98.706$\\$17.523$} \\
\cmidrule(lr){2-7}
& MC
& \makecell{$94.816$\\$19.906$}
& \makecell{$94.816$\\$19.742$}
& \makecell{$94.816$\\$21.100$}
& \makecell{$94.816$\\$21.102$}
& \makecell{$94.816$\\$20.932$} \\
\cmidrule(lr){2-7}
& Leverage
& \makecell{$99.999$\\$19.332$}
& \makecell{$99.999$\\$19.170$}
& \makecell{$99.999$\\$20.682$}
& \makecell{$100.000$\\$20.684$}
& \makecell{$100.000$\\$20.515$} \\
\midrule

\multirow{3}{*}{UW-CSE}
& Kernel
& \makecell{$5.806$\\$1.632$}
& \makecell{$6.675$\\$1.585$}
& \makecell{$7.243$\\$1.829$}
& \makecell{$10.244$\\$1.770$}
& \makecell{$10.819$\\$1.730$} \\
\cmidrule(lr){2-7}
& MC
& \makecell{$4.654$\\$1.287$}
& \makecell{$5.601$\\$1.267$}
& \makecell{$5.824$\\$1.435$}
& \makecell{$8.293$\\$1.434$}
& \makecell{$9.038$\\$1.413$} \\
\cmidrule(lr){2-7}
& Leverage
& \makecell{$5.445$\\$1.535$}
& \makecell{$6.316$\\$1.494$}
& \makecell{$6.844$\\$1.722$}
& \makecell{$9.955$\\$1.684$}
& \makecell{$10.553$\\$1.648$} \\
\midrule

\multirow{3}{*}{Synthetic}
& Kernel
& \makecell{$80.475$\\$14.203$}
& \makecell{$80.475$\\$12.374$}
& \makecell{$91.892$\\$20.661$}
& \makecell{$91.919$\\$20.466$}
& \makecell{$91.919$\\$18.487$} \\
\cmidrule(lr){2-7}
& MC
& \makecell{$75.607$\\$14.660$}
& \makecell{$75.607$\\$12.772$}
& \makecell{$86.302$\\$21.381$}
& \makecell{$86.322$\\$21.189$}
& \makecell{$86.322$\\$19.154$} \\
\cmidrule(lr){2-7}
& Leverage
& \makecell{$77.742$\\$15.004$}
& \makecell{$77.742$\\$12.982$}
& \makecell{$94.370$\\$22.649$}
& \makecell{$94.390$\\$22.362$}
& \makecell{$94.390$\\$20.167$} \\

\bottomrule
\end{tabular}
\end{table*}

\begin{table}[t]
\centering
\small
\setlength{\tabcolsep}{3pt}
\renewcommand{\arraystretch}{1.15}
\caption{Estimator-level relational invalidity averaged across datasets without DCs. Each cell reports averaged violation prevalence (top) and violation density (bottom), in percentage.}
\label{tab:violation_estimator_no_dcs}
\begin{tabular}{lccc}
\toprule
Estimator
& BG
& BG~+~Prov, Strict
& BG~+~Prov, Relaxed \\
\midrule
Kernel
& \makecell{$69.795$\\$25.985$}
& \makecell{$84.816$\\$29.650$}
& \makecell{$88.169$\\$29.609$} \\
\midrule
MC
& \makecell{$64.379$\\$24.814$}
& \makecell{$79.131$\\$28.613$}
& \makecell{$82.024$\\$28.652$} \\
\midrule
Leverage
& \makecell{$73.740$\\$27.388$}
& \makecell{$89.680$\\$31.268$}
& \makecell{$93.139$\\$31.282$} \\
\bottomrule
\end{tabular}
\end{table}

\begin{table*}[t]
\centering
\small
\setlength{\tabcolsep}{3pt}
\renewcommand{\arraystretch}{1.15}
\caption{Estimator-level relational invalidity averaged across datasets with DCs. Each cell reports averaged violation prevalence (top) and violation density (bottom), in percentage. All denotes BG~+~DCs~+~Prov, Relaxed.}
\label{tab:violation_estimator_with_dcs}
\begin{tabular}{lccccc}
\toprule
Estimator
& BG
& BG~+~DCs
& BG~+~Prov, Strict
& BG~+~Prov, Relaxed
& All \\
\midrule
Kernel
& \makecell{$68.373$\\$11.094$}
& \makecell{$68.591$\\$8.327$}
& \makecell{$72.284$\\$14.420$}
& \makecell{$73.141$\\$14.379$}
& \makecell{$73.285$\\$12.068$} \\
\midrule
MC
& \makecell{$68.242$\\$13.972$}
& \makecell{$68.478$\\$9.742$}
& \makecell{$71.303$\\$16.915$}
& \makecell{$71.942$\\$16.872$}
& \makecell{$72.128$\\$14.119$} \\
\midrule
Leverage
& \makecell{$70.269$\\$13.977$}
& \makecell{$70.486$\\$9.709$}
& \makecell{$74.871$\\$17.199$}
& \makecell{$75.670$\\$17.124$}
& \makecell{$75.820$\\$14.326$} \\
\bottomrule
\end{tabular}
\end{table*}

\emph{Violation density} reports the total number of violations normalized by the number of coalition--constraint pairs. Tables~\ref{tab:violation_no_dcs} and~\ref{tab:violation_with_dcs} show that relational invalidity is widespread across datasets and estimators, with provenance constraints substantially increasing violation prevalence while affecting density more moderately; the lower normalized density under BG~+~DCs mainly reflects the larger number of coalition--constraint pairs rather than fewer violations. Leverage SHAP generally yields the highest prevalence and density, whereas MC tends to be the lowest. Averaged across datasets, Tables~\ref{tab:violation_estimator_no_dcs} and~\ref{tab:violation_estimator_with_dcs} confirm that these estimator-level differences are smaller than the increases induced by richer constraints.

\section{Shapley Values in Databases}
\label{app:db_shapley}

Shapley values have been studied in the data management literature to quantify the contribution of \emph{tuples to query answers}~\cite{livshits2021shapley, deutch2022computing, davidson2022shapgraph, alizad2025relation}, and, in databases that violate integrity constraints, to the \emph{extent of that inconsistency}~\cite{bertossi2023shapley}. In this setting, the players of the cooperative game are tuples, the value function is the query output on a subset of the database, and an
endogenous/exogenous partition controls which tuples are subject to attribution. Work in this line has established the computational complexity of exact Shapley computation over conjunctive queries (CQs)~\cite{livshits2021shapley}, practical algorithms based on data provenance and knowledge compilation~\cite{deutch2022computing,davidson2022shapgraph}, and relation-aware sampling for efficient estimation~\cite{alizad2025relation}. Where integrity constraints appear, they are themselves the \emph{object of explanation} (\ie tuples are scored by how much they contribute to inconsistency), rather than constraints on the coalitional game. Our work is orthogonal: we explain the predictions of \emph{black-box ML models} trained on relational data, with \emph{features} as players, and use relational structure as constraints on which coalitions and completions count as valid.

A growing line of database research adapts the Shapley value to attribute the contribution of \emph{tuples} (or facts) to the \emph{results of queries}, and, in a related sub-line, to the \emph{extent of a database's inconsistency} with respect to integrity constraints. \citet{livshits2021shapley} initiated the formal study:
they model query evaluation as a cooperative game in which endogenous tuples act as players and the query's truth value (for Boolean CQs) or numerical output (for aggregate queries) acts as the wealth function. They establish a fundamental complexity dichotomy---Shapley computation is in polynomial time for hierarchical CQs and $\mathrm{FP}^{\#\mathrm{P}}$-complete otherwise---and give an a fully polynomial-time randomized approximation scheme (FPRAS) for summation queries via Monte Carlo sampling.

Subsequent work has focused on making these semantics practical. \citet{deutch2022computing} reduce Shapley computation to probabilistic query evaluation and develop an exact algorithm based on knowledge compilation of Boolean provenance into deterministic and decomposable (d-D) circuits, together with a faster CNF-Proxy heuristic for ranking; their experiments on TPC-H and IMDB demonstrate that provenance-based Shapley computation is feasible for a large fraction of realistic queries. \citet{davidson2022shapgraph} build ShapGraph that pairs the global provenance-graph view of query derivations with local Shapley heatmaps over input tuples, letting analysts navigate between a \emph{structural} (how) and a
\emph{quantitative} (how much) account of a given output tuple. \citet{alizad2025relation} introduces Relation-Stratified Sampling, a sampling strategy that partitions the Shapley subset space by the number of tuples drawn from each relation and allocates samples adaptively, yielding lower-variance estimates than generic Monte Carlo
and classical stratified sampling on TPC-H workloads. \citet{bertossi2023shapley} study both the query-answer and the database-inconsistency applications; in the latter, integrity constraints (ICs) are themselves the object being explained, and tuples are scored by how much their presence contributes to violations of the ICs.

Table~\ref{tab:db_shapley} summarizes these efforts along four axes: what the Shapley game \emph{explains}, who the \emph{players} are, what \emph{technical contribution} is made, and how \emph{integrity constraints} are treated. \RelShap departs from all of them along the first two axes: the object of explanation is a black-box ML model's prediction on a flattened relational instance, and the players are \emph{features} of that instance rather than tuples of the underlying database. This shift is not cosmetic: as Example \ref{ex:motivating} illustrates and Section \ref{sec:experiments} quantifies, applying Shapley to a flattened instance without accounting for the relational structure may produce different explanations.

Consequently, we use relational structure---schema-level FDs, query-induced FDs, discovered FDs, and provenance---as \emph{structural constraints on the Shapley game itself}, rather than
as an object to be explained or as a purely computational device for tractability.

\begin{table*}[t]
\centering
\small
\caption{Shapley-value approaches in database (DB) research. All prior work uses DB tuples/facts as players and explains a property of the DB (query answer or inconsistency w.r.t.\ integrity constraints). \RelShap instead explains ML model predictions on flattened relational data, with features as players, and uses relational structure as a constraint on the coalitional game.}
\label{tab:db_shapley}
\begin{tabular}{@{}p{2.4cm}p{2.5cm}p{2.2cm}p{4.8cm}p{3.0cm}@{}}
\toprule
\textbf{Work} & \textbf{Object explained} & \textbf{Players} &
\textbf{Key technical contribution} &
\textbf{Role of ICs} \\
\midrule
Livshits et al.~\cite{livshits2021shapley} &
Query answer (Boolean / aggregate CQ) &
Endogenous tuples &
Complexity dichotomy (hierarchical CQs in polynomial time, else
$\mathrm{FP}^{\#\mathrm{P}}$-complete); FPRAS for summation via MC &
Not part of the setup \\
\addlinespace[2pt]
Deutch et al.~\cite{deutch2022computing} &
Query answer &
Endogenous tuples &
Reduction to probabilistic QE; knowledge compilation of provenance
into d-D circuits; CNF-Proxy heuristic &
Not part of the setup \\
\addlinespace[2pt]
ShapGraph~\cite{davidson2022shapgraph} &
Query answer (per output tuple) &
Input tuples &
Interactive system pairing provenance graphs with Shapley heatmaps
(built on ProvSQL) &
Not part of the setup \\
\addlinespace[2pt]
Alizad~\cite{alizad2025relation} &
Query answer &
Endogenous tuples &
Relation-Stratified Sampling (RSS) and adaptive variant (ARSS) with
lower-variance estimates on TPC-H &
Not part of the setup \\
\addlinespace[2pt]
Bertossi et al.~\cite{bertossi2023shapley} &
Query answer \emph{and} DB inconsistency &
Endogenous tuples &
Survey; covers tuple contribution to inconsistency w.r.t.\ ICs as a
second Shapley application in data management &
\textbf{Object of explanation} (tuples scored by contribution to
IC violations) \\
\midrule
\textbf{\RelShap{} (ours)} &
\textbf{ML-model prediction on a flattened relational instance} &
\textbf{Features} &
\textbf{Relationally consistent background and coalition selection,
grounded in schema/query/discovered FDs and provenance; Shapley
invariance and runtime-reduction results} &
\textbf{Structural constraint on the coalitional game} \\
\bottomrule
\end{tabular}
\end{table*}

\section{Limitations}
\label{app:limitations}

\textbf{Human evaluation.} Our controlled validation (Section~4.1) establishes that \RelShap corrects attributions with respect to relational validity under known ground truth, and Section~4.2 shows these corrections are large and systematic on real data. However, we do not evaluate whether the corrected explanations are more useful, trustworthy, or actionable for human decision-makers. A user study with practitioners who consume Shapley explanations (\eg in credit or hiring contexts) is important future work.

\textbf{Constraint quality.} \RelShap treats the selected constraint set $\Sigma$ as correct. Data-driven FDs are exact dependencies on finite data, so small or unrepresentative samples can yield spurious FDs; conversely, meaningful dependencies that hold only approximately (\eg with rare exceptions or noise) are not discovered by exact FD mining. The user inspection step (Section~3.1) mitigates but does not eliminate this risk, and extending \RelShap to approximate or probabilistic constraints is a natural direction.

\end{document}